\def\@settitle{\begin{center}%
		\baselineskip14\p@\relax
		\normalfont\LARGE\scshape\bfseries
		\@title
	\end{center}%
}
\def\subsection{\@startsection{subsection}{2}%
	\z@{.5\linespacing\@plus.7\linespacing}{.5\linespacing}%
	{\normalfont\bfseries}}
\def\subsubsection{\@startsection{subsubsection}{3}%
	\z@{.5\linespacing\@plus.7\linespacing}{.5\linespacing}%
	{\normalfont\itshape}}
\definecolor{darkblue}{rgb}{0.0, 0.0, 0.45}
\date{\today}
\theoremstyle{theorem}
\newtheorem{Thm}{Theorem}[section]
\newtheorem{Lem}[Thm]{Lemma}
\newtheorem{Cor}[Thm]{Corollary}
\newtheorem{Rem}[Thm]{Remark}
\theoremstyle{remark}
\newtheorem{Ex}[Thm]{Example}
\newcommand{\Let}{\coloneqq}
\DeclareMathOperator{\vect}{vec}
\DeclareMathOperator{\Tr}{Tr}
\title[Learning robust controllers for LQR systems]{Learning Robust Controllers for Linear Quadratic Systems \\ with Multiplicative Noise via Policy Gradient}
\author[First]{Benjamin Gravell}
\author[Second]{Peyman Mohajerin Esfahani}
\author[Third]{Tyler Summers}
\thanks{The authors are with the Control, Optimization, and Networks lab, UT Dallas, The United States ({\tt\{benjamin.gravell, tyler.summers\}@utdallas.edu}), and  the Delft Center for Systems and Control, TU Delft, The Netherlands ({\tt P.MohajerinEsfahani@tudelft.nl}). This material is based on work supported by the Army Research Office under grant W911NF-17-1-0058.}
\begin{document}

\maketitle

\begin{abstract}
The linear quadratic regulator (LQR) problem has reemerged as an important theoretical benchmark for reinforcement learning-based control of complex dynamical systems with continuous state and action spaces. In contrast with nearly all recent work in this area, we consider multiplicative noise models, which are increasingly relevant because they explicitly incorporate inherent uncertainty and variation in the system dynamics and thereby improve robustness properties of the controller. Robustness is a critical and poorly understood issue in reinforcement learning; existing methods which do not account for uncertainty can converge to fragile policies or fail to converge at all. Additionally, intentional injection of multiplicative noise into learning algorithms can enhance robustness of policies, as observed in ad hoc work on domain randomization. Although policy gradient algorithms require optimization of a non-convex cost function, we show that the multiplicative noise LQR cost has a special property called \emph{gradient domination}, which is exploited to prove global convergence of policy gradient algorithms to the globally optimum control policy with polynomial dependence on problem parameters. Results are provided both in the model-known and model-unknown settings where samples of system trajectories are used to estimate policy gradients.
\end{abstract}

\section{Introduction}

Reinforcement learning-based control has recently achieved impressive successes in games \cite{silver2016mastering,silver2018general} and simulators \cite{mnih2015human}. But these successes are significantly more challenging to translate to complex physical systems with continuous state and action spaces, safety constraints, and non-negligible operation and failure costs that demand data efficiency. An intense and growing research effort is creating a large array of models, algorithms, and heuristics for approaching the myriad of challenges arising from these systems. To complement a dominant trend of more computationally focused work, the canonical linear quadratic regulator (LQR) problem in control theory has reemerged as an important theoretical benchmark for learning-based control \cite{recht2018tour,dean2017sample}. Despite its long history, there remain fundamental open questions for LQR with unknown models,
and a foundational understanding of learning in LQR problems can give insight into more challenging problems.

Almost all recent work on learning in LQR problems has utilized either deterministic or additive noise models \cite{recht2018tour,dean2017sample,Fazel2018,Bradtke1994,fiechter1997pac,abbasi2011regret,lewis2012reinforcement,tu2017least,abeille2018improved,umenberger2018learning,mania2019certainty,venkataraman2018recovering}, but here we consider \emph{multiplicative noise models}. 
In control theory, multiplicative noise models have been studied almost as long as their deterministic and additive noise counterparts \cite{Wonham1967,Damm2004}, although this area is somewhat less developed and far less widely known. 
We believe the study of learning in LQR problems with multiplicative noise is important for three reasons. 
First, this class of models is much richer than deterministic or additive noise while still allowing exact solutions when models are known, which makes it a  compelling additional benchmark \cite{willems1976feedback,athans1977uncertainty,bernstein1987robust}. 
Second, they explicitly incorporate model uncertainty and inherent stochasticity, thereby improving robustness properties of the controller. Robustness is a critical and poorly understood issue in reinforcement learning; existing methods which do not account for uncertainty can converge to fragile policies or fail to converge at all \cite{athans1977uncertainty,ku1977further,bertsekas1995dynamic}. Additionally, intentional injection of multiplicative noise into learning algorithms is known to enhance robustness of policies from ad hoc work on domain randomization \cite{tobin2017domain}. 
Third, in emerging difficult-to-model complex systems where learning-based control approaches are perhaps most promising, multiplicative noise models are increasingly relevant; examples include networked systems with noisy communication channels \cite{antsaklis2007special,hespanha2007survey}, modern power networks with large penetration of intermittent renewables \cite{carrasco2006power,milano2018foundations}, turbulent fluid flow \cite{lumley2007stochastic}, and neuronal brain networks \cite{breakspear2017dynamic}.

\subsection{Related literature}
Multiplicative noise LQR problems have been studied in control theory since the 1960s \cite{Wonham1967}. Since then a line of research parallel to deterministic and additive noise has developed, including basic stability and stabilizability results \cite{willems1976feedback}, semidefinite programming formulations \cite{el1995state, boyd1994linear,li2005estimation}, robustness properties \cite{Damm2004,bernstein1987robust,hinrichsen1998stochastic,bamieh2018input,gravell2020ifac}, and numerical algorithms \cite{benner2011lyapunov}. This line of research is less widely known perhaps because much of it studies continuous time systems, where the heavy machinery required to formalize stochastic differential equations is a barrier to entry for a broad audience. Multiplicative noise models are well-poised to offer data-driven model uncertainty representations and enhanced robustness in learning-based control algorithms and complex dynamical systems and processes. 
A related line of research which has seen recent activity is on learning optimal control of Markovian jump linear systems with unknown dynamics and noise distributions \cite{schuurmans2019,jansch2020convergence}, which under certain assumptions form a special case of the multiplicative noise system we analyze in this work.

In contrast to classical work on system identification and adaptive control, which has a strong focus on asymptotic results, more recent work has focused on non-asymptotic analysis using newly developed mathematical tools from statistics and machine learning. There remain fundamental open problems for learning in LQR problems, with several addressed only recently, including non-asymptotic sample complexity \cite{dean2017sample,tu2017least}, regret bounds \cite{abbasi2011regret,abeille2018improved,mania2019certainty}, and algorithmic convergence \cite{Fazel2018}.
Alternatives to reinforcement learning include other data-driven model-free optimal control schemes \cite{goncalves2019,baggio2019} and those leveraging the behavioral framework \cite{maupong2017,persis2019}. Subspace identification methods offer a model-based generalization to the output feedback setting \cite{juang1985}.

\subsection{Our contributions}
In \S\ref{sec:model} we establish the multiplicative noise LQR problem and motivate its study via a connection to robust stability.
We then give several fundamental results for policy gradient algorithms on linear quadratic problems with multiplicative noise. 
Our main contributions are as follows, which can be viewed as a generalization of the recent results of Fazel et al. \cite{Fazel2018} for deterministic LQR to multiplicative noise LQR:
\begin{itemize}
    \item In \S\ref{sec:graddom} we show that although the multiplicative noise LQR cost is generally non-convex, it has a special property called \emph{gradient domination}, which facilitates its optimization (Lemmas \ref{lemma:policy_grad_exp} and \ref{lemma:gradient_dominated}).
    \item In particular, in \S\ref{sec:conv} the gradient domination property is exploited to prove global convergence of three policy gradient algorithm variants (namely, exact gradient descent, ``natural’’ gradient descent, and Gauss-Newton/policy iteration) to the globally optimum control policy with a rate that depends polynomially on problem parameters (Theorems \ref{thm:gauss_newton_exact}, \ref{thm:nat_grad_exact_convergence}, and \ref{thm:grad_exact_convergence}).
    \item Furthermore, in \S\ref{sec:modelfree} we show that a model-free policy gradient algorithm, where the gradient is estimated from trajectory data (``rollouts'') rather than computed from model parameters, also converges globally (with high probability) with an appropriate exploration scheme and sufficiently many samples (polynomial in problem data) (Theorem \ref{thm:model-free}).
\end{itemize}

In comparison with the deterministic dynamics studied by \cite{Fazel2018}, we make the following novel technical contributions:
\begin{itemize}
    \item We quantify the increase in computational burden of policy gradient methods due to the presence of multiplicative noise, which is evident from the bounds developed in Appendices \ref{appendix:model_based_gd} and \ref{appendix:model_free_gd}. The noise acts to reduce the step size and thus convergence rate, and increases the required number of samples and rollout length in the model-free setting.

    \item A covariance dynamics operator $\mathcal{F}_K$ is established for multiplicative noise systems with a more complicated form than the deterministic case. This necessitated a more careful treatment and novel proof by induction and term matching argument in the proof of Lemma \ref{lemma:S_K_trace_bound}.
    
    \item Several restrictions on the algorithmic parameters (step size, number of rollouts, rollout length, exploration radius) which are necessary for convergence are established and treated.
    
    \item An important restriction on the support of the multiplicative noise distribution, which is naturally absent in \cite{Fazel2018}, is established in the model-free setting.
    
    \item A matrix Bernstein concentration inequality is stated explicitly and used to give explicit bounds on the algorithmic parameters in the model-free setting in terms of problem data.
    
    \item Discussion and numerical results on the use of backtracking line search is included.
    
    \item When the multiplicative variances $\alpha_i$, $\beta_j$ are all zero, the assertions of Theorems \ref{thm:gauss_newton_exact}, \ref{thm:nat_grad_exact_convergence}, \ref{thm:grad_exact_convergence}, \ref{thm:model-free} recover the same step sizes and convergence rates of the deterministic setting reported by \cite{Fazel2018}.
\end{itemize}
Thus, policy gradient algorithms for the multiplicative noise LQR problem enjoy the same global convergence properties as deterministic LQR, while significantly enhancing the resulting controller’s robustness to variations and inherent stochasticity in the system dynamics, as demonstrated by our numerical experiments in \S\ref{sec:numexp}.

To our best knowledge, the present paper is the first work to consider and obtain global convergence results using reinforcement learning algorithms for the multiplicative noise LQR problem. Our approach allows the explicit incorporation of a model uncertainty representation that significantly improves the robustness of the controller compared to deterministic and additive noise approaches.

\section{Optimal Control of Linear Systems with Multiplicative Noise and Quadratic Costs} \label{sec:model}
We consider the infinite-horizon linear quadratic regulator problem with multiplicative noise (LQRm)
\begin{alignat}{2}  \label{eq:LQRm}
    &\underset{{\pi \in \Pi}}{\text{minimize}} \quad && C(\pi) \Let \underset{x_0,\{\delta_{ti}\}, \{\gamma_{tj}\}}{\mathbb{E}} \sum_{t=0}^\infty (x_t^\intercal Q x_t + u_t^\intercal R u_t), \\
    &\text{subject to}                         \quad && x_{t+1} = (A  + \sum_{i=1}^p \delta_{ti} A_i) x_t +  (B + \sum_{j=1}^q \gamma_{tj} B_j) u_t, \nonumber
\end{alignat}
where $x_t \in \mathbb{R}^n$ is the system state, $u_t \in \mathbb{R}^m$ is the control input, the initial state $x_0$ is distributed according to $\mathcal{P}_0$ with covariance $\Sigma_0 \Let {\mathbb{E}}_{x_0} [ x_0 x_0^\intercal ]$, $\Sigma_0 \succ 0$, and $Q \succ 0$ and $R \succ 0$. The dynamics are described by a dynamics matrix $A \in \mathbb{R}^{n \times n}$ and input matrix $B \in \mathds{}{R}^{n \times m}$ and incorporate multiplicative noise terms modeled by the i.i.d. (across time), zero-mean, mutually independent scalar random variables $\delta_{ti}$ and $\gamma_{tj}$, which have variances $\alpha_i$ and $\beta_j$, respectively. The matrices $A_i \in \mathbb{R}^{n \times n}$ and $B_i \in \mathbb{R}^{n \times m}$ specify how each scalar noise term affects the system dynamics and input matrices. 
Alternatively, suppose $\bar A$ and $\bar B$ are zero-mean random matrices with a joint covariance structure\footnote{We assume $\bar A$ and $\bar B$ are independent for simplicity, but it is straightforward to include correlations between the entries of $\bar A$ and $\bar B$ into the model.} over their entries governed by the covariance matrices $\Sigma_A \Let \mathbb{E} [\mathbf{vec}(\bar A)\mathbf{vec}(\bar A)^\intercal ] \in \mathbb{R}^{n^2 \times n^2}$ and  $\Sigma_B \Let \mathbb{E} [ \mathbf{vec}(\bar B)\mathbf{vec}(\bar B)^\intercal ] \in \mathbb{R}^{nm \times nm}$.
Then it suffices to take the variances $\alpha_i$ and $\beta_{j}$ and matrices $A_i$ and $B_j$ as the eigenvalues and (reshaped) eigenvectors of $\Sigma_A$ and $\Sigma_B$, respectively, after a projection onto a set of orthogonal real-valued vectors \cite{gravell2020acc}.
The goal is to determine a closed-loop state feedback policy $\pi^*$ with $u_t = \pi^*(x_t)$ from a set $\Pi$ of admissible policies which solves the optimization in \eqref{eq:LQRm}.

We assume that the problem data $A$, $B$, $\alpha_i$, $A_i$, $\beta_j$, and $B_j$ permit existence and finiteness of the optimal value of the problem, in which case the system is called \emph{mean-square stabilizable} and requires \emph{mean-square stability} of the closed-loop system \cite{kozin1969survey,willems1976feedback}. The system in \eqref{eq:LQRm} is called \emph{mean-square stable} if $\lim_{t \to \infty}\mathbb{E}_{x_0, \delta, \gamma}[x_t x_t^\intercal] = 0$ for any given initial covariance $\Sigma_0$, where for brevity we notate expectation with respect to the noises $\mathbb{E}_{\{\delta_{ti}\},\{\gamma_{tj}\}}$ as $\mathbb{E}_{\delta, \gamma}$.  Mean-square stability is a form of robust stability, implying stability of the mean (i.e. $\lim_{t \to \infty}\mathbb{E} x_t = 0 \ \forall \ x_0$) as well as almost-sure stability (i.e. $\lim_{t \to \infty} x_t = 0$ almost surely) \cite{willems1976feedback}.
Mean-square stability requires stricter and more complicated conditions than stabilizability of the nominal system $(A,B)$ \cite{willems1976feedback}, which are discussed in the sequel. 
This essentially can limit the size of the multiplicative noise covariance \cite{athans1977uncertainty}, which can be viewed as a representation of uncertainty in the nominal system model or as inherent variation in the system dynamics.

\subsection{Control Design with Known Models: Value Iteration}
Dynamic programming can be used to show that the optimal policy $\pi^*$ is linear state feedback $u_t = \pi^*(x_t) = K^* x_t$, where $K^* \in \mathbb{R}^{m \times n}$ denotes the optimal gain matrix. When the control policy is linear state feedback ${u_t = \pi(x_t) = K x_t}$, with a very slight abuse of notation the cost becomes 
\begin{align}
    C(K) = \mathbb{E}_{x_0,\{\delta_{ti}\}, \{\gamma_{tj}\}} \sum_{t=0}^\infty x_t^\intercal (Q + K^\intercal R K ) x_t
\end{align}
Dynamic programming further shows that the resulting optimal cost is quadratic in the initial state, i.e. ${C(K^*) = \mathbb{E}_{x_0} x_0^\intercal P x_0 = \Tr(P \Sigma_0)}$, where $P \in \mathbb{R}^{n \times n}$ is a symmetric positive definite matrix \cite{bertsekas1995dynamic}. 
Note that the optimal controller does not need to directly observe the noise variables $\delta_{ti}$, $\gamma_{tj}$. 
When the model parameters are known, there are several ways to compute the optimal feedback gains and corresponding optimal cost. The optimal cost is given by the solution of the \emph{generalized} algebraic Riccati equation (GARE)
\begin{align} \label{eq:genriccati}
    P   & = Q + A^\intercal P A + \sum_{i=1}^p \alpha_i A_i^\intercal P A_i 
         - A^\intercal P B (R + B^\intercal P B + \sum_{j=1}^q \beta_j B_j^\intercal P B_j)^{-1} B^\intercal P A.
\end{align}
This is a special case of the GARE for optimal static output feedback given in \cite{bernstein1987robust} and can be solved via the value iteration
\begin{align*} 
    P_{k+1} & =  Q + A^\intercal P_k A + \sum_{i=1}^p \alpha_i A_i^\intercal P_k A_i  
            - A^\intercal P_k B (R + B^\intercal P_k B + \sum_{j=1}^q \beta_j B_j^\intercal P_k B_j)^{-1} B^\intercal P_k A,
\end{align*}
with $P_0 = Q$, or via semidefinite programming formulations \cite{boyd1994linear,el1995state,li2005estimation}, or via more exotic iterations based on the Smith method and Krylov subspaces \cite{freiling2003,ivanov2007}. The associated optimal gain matrix is 
\begin{align*}
    K^* = -  \bigg(R + B^\intercal P B + \sum_{j=1}^q \beta_j B_j^\intercal P B_j \bigg)^{-1} B^\intercal P A.
\end{align*}
It was verified in \cite{willems1976feedback} that existence of a positive definite solution to the GARE \eqref{eq:genriccati} is equivalent to mean-square stabilizability of the system, which depends on the problem data $A$, $B$, $\alpha_i$, $A_i$, $\beta_j$, and $B_j$; in particular, mean-square stability generally imposes upper bounds on the variances $\alpha_i$ and $\beta_j$ \cite{athans1977uncertainty}, but may be infinite depending on the structure of $A$, $B$, $A_i$, and $B_j$ \cite{willems1976feedback}. At a minimum, uniqueness and existence of a solution to the GARE \eqref{eq:genriccati} requires the standard conditions for uniqueness and existence of a solution to the standard ARE, namely of $(A,B)$ stabilizable and $(A, Q^{1/2})$ detectable.

Although (approximate) value iteration can be implemented using sample trajectory data, policy gradient methods have been shown to be more effective for approximately optimal control of high-dimensional stochastic nonlinear systems e.g. those arising in robotics \cite{peters2006}. This motivates our following analysis of the simpler case of stochastic linear systems wherein we show that policy gradient indeed facilitates a data-driven approach for learning optimal and robust policies. 

\subsection{Control Design with Known Models: Policy Gradient}
Consider a fixed linear state feedback policy ${u_t = Kx_t}$. 
Defining the stochastic system matrices
\begin{align}
    \widetilde{A} &= A  + \sum_{i=1}^p \delta_{ti} A_i , \\
    \widetilde{B} &= B + \sum_{j=1}^q \gamma_{tj} B_j , 
\end{align}
the deterministic nominal and stochastic closed-loop system matrices
\begin{align}
    {A}_K = A + B K, \qquad 
    \widetilde{A}_K = \widetilde{A} + \widetilde{B} K,
\end{align}
and the closed-loop state-cost matrix 
\begin{align}
    Q_K = Q+K^\intercal R K ,
\end{align}
the closed-loop dynamics become 
\begin{align*}
    x_{t+1} = \widetilde{A}_K x_t = \bigg((A  + \sum_{i=1}^p \delta_{ti} A_i) + (B + \sum_{j=1}^q \gamma_{tj} B_j)K\bigg) x_t.
\end{align*}
A gain $K$ is mean-square stabilizing if the closed-loop system is mean-square stable. Denote the set of mean-square stabilizing $K$ as $\mathcal{K}$.
If $K \in \mathcal{K}$, then the cost can be written as 
\begin{align}
    C(K) = \mathbb{E}_{x_0} x_0^\intercal P_K x_0 = \Tr(P_K \Sigma_0), 
\end{align}
where $P_K$ is the unique positive semidefinite solution to the \emph{generalized} Lyapunov equation
\begin{align} \label{eq:glyap1}
    P_{K}   & =  Q_K  + A_K^\intercal P_{K} A_K
            + \sum_{i=1}^p \alpha_i A_i^\intercal P_K A_i + \sum_{j=1}^q \beta_j K^\intercal B_j^\intercal P_K B_j K.
\end{align}
We define the state covariance matrices and the infinite-horizon aggregate state covariance matrix as
\begin{align}
    \Sigma_{t} \Let {\mathbb{E}}_{x_0,\delta, \gamma } [x_t x_t^\intercal], \qquad
    \Sigma_{K} \Let \sum_{t=0}^\infty \Sigma_t.     
\end{align}
If $K \in \mathcal{K}$ then $\Sigma_K$ also satisfies a \emph{dual} generalized Lyapunov equation
\begin{align} \label{eq:glyap2}
    \Sigma_{K}  & = \Sigma_0 + A_K \Sigma_{K} A_K^\intercal 
                + \sum_{i=1}^p \alpha_i A_i \Sigma_K A_i^\intercal + \sum_{j=1}^q \beta_j B_j K \Sigma_K K^\intercal B_j^\intercal.
\end{align}
Vectorization and Kronecker products can be used to convert \eqref{eq:glyap1} and \eqref{eq:glyap2} into systems of linear equations. Alternatively, iterative methods have been suggested for their solution \cite{freiling2003, ivanov2007}.
The state covariance dynamics are captured by two closed-loop finite-dimensional linear operators which operate on a symmetric matrix $X$:
\begin{align}
    \mathcal{T}_K(X)    & \Let \underset{\delta, \gamma}{\mathbb{E}} \sum_{t=0}^\infty \widetilde{A}_K^t X \widetilde{A}_K^{\intercal^t}, \\
    \mathcal{F}_K(X)    & \Let \underset{\delta, \gamma}{\mathbb{E}} \widetilde{A}_K X \widetilde{A}_K^\intercal = A_K X A_K^\intercal 
                        + \sum_{i=1}^p \alpha_i A_i X A_i^\intercal + \sum_{j=1}^q \beta_j B_j K X (B_j K)^\intercal .
\end{align}
Thus $\mathcal{F}_K$ (without an argument) is a linear operator whose matrix representation is 
\begin{align*}
    \mathcal{F}_K   & \Let A_K \otimes A_K
                    + \! \sum_{i=1}^p \alpha_i A_i \otimes A_i + \! \sum_{j=1}^q \beta_j (B_j K) \otimes (B_j K) .
\end{align*}
The $\Sigma_t$ evolve according to the dynamics
\begin{align}
    \Sigma_{t+1}        = \mathcal{F}_K (\Sigma_t)
    \quad \Leftrightarrow \quad
    \vect(\Sigma_{t+1}) = \mathcal{F}_K \vect(\Sigma_t)
\end{align}
We define the $t$-stage of $\mathcal{F}_K(X)$ as 
\begin{align}
    \mathcal{F}^t_K(X) &:= \mathcal{F}_K(\mathcal{F}^{t-1}_K(X)) \text{ with } \mathcal{F}^0_K(X) = X ,
\end{align}
which gives the natural characterization
\begin{align}
    \mathcal{T}_K(X) = \sum_{t=0}^\infty {\mathcal{F}^t_K(X)} ,
\end{align}
and in particular
\begin{align} \label{eq:TK_char}
    \Sigma_K = \mathcal{T}_K(\Sigma_0) = \sum_{t=0}^\infty {\mathcal{F}^t_K(\Sigma_0)} .
\end{align}
We then have the following lemma:
\begin{Lem}[Mean-square stability] \label{lemma:mss_char} 
    A gain $K$ is mean-square stabilizing if and only if the spectral radius $\rho(\mathcal{F}_{K}) < 1$.
\end{Lem}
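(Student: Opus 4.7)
The plan is to translate mean-square stability into a linear discrete-time recursion on the covariance matrices, then invoke standard spectral-radius / cone-theoretic facts.

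The first step is just a bookkeeping observation already contained in the statement preceding the lemma: the state covariance $\Sigma_t = \mathbb{E}[x_t x_t^\intercal]$ satisfies the linear recursion $\Sigma_{t+1} = \mathcal{F}_K(\Sigma_t)$, and after vectorization $\vect(\Sigma_{t+1}) = \mathcal{F}_K \vect(\Sigma_t)$. Iterating, $\vect(\Sigma_t) = \mathcal{F}_K^t \vect(\Sigma_0)$. Mean-square stability is by definition $\Sigma_t \to 0$ for every admissible initial covariance $\Sigma_0 \succeq 0$, which is equivalent to $\mathcal{F}_K^t \vect(\Sigma_0) \to 0$ for every such $\Sigma_0$.

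For the ``if'' direction I would argue directly: $\rho(\mathcal{F}_K) < 1$ implies $\mathcal{F}_K^t \to 0$ as an $n^2 \times n^2$ matrix (standard Gelfand/Jordan-form fact), so $\vect(\Sigma_t) = \mathcal{F}_K^t \vect(\Sigma_0) \to 0$ for any choice of $\Sigma_0$, giving mean-square stability.

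For the ``only if'' direction the key observation is that $\mathcal{F}_K$ is a \emph{positive} operator: if $X \succeq 0$ then each summand $A_K X A_K^\intercal$, $\alpha_i A_i X A_i^\intercal$, and $\beta_j (B_j K) X (B_j K)^\intercal$ is PSD, so $\mathcal{F}_K(X) \succeq 0$. Hence $\mathcal{F}_K$ maps the self-dual cone of PSD matrices into itself. By the Perron--Frobenius/Krein--Rutman theorem for cone-preserving linear maps, the spectral radius $\rho(\mathcal{F}_K)$ is itself an eigenvalue with a nonzero PSD eigen-matrix $X^\star \succeq 0$, $X^\star \neq 0$. Now suppose for contradiction that $\rho(\mathcal{F}_K) \geq 1$, and initialize the system with covariance $\Sigma_0 = X^\star + \eps I$ for arbitrarily small $\eps > 0$ (this remains a valid admissible initial covariance since $\Sigma_0 \succ 0$). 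Then by positivity and linearity
\begin{align*}
    \Sigma_t = \mathcal{F}_K^t(X^\star) + \eps \mathcal{F}_K^t(I) = \rho(\mathcal{F}_K)^t X^\star + \eps \mathcal{F}_K^t(I) \succeq \rho(\mathcal{F}_K)^t X^\star,
\end{align*}
which does not converge to zero, contradicting mean-square stability. Hence $\rho(\mathcal{F}_K) < 1$.

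The only subtle step is the cone-theoretic appeal (existence of a PSD Perron eigenvector): without it, one would only conclude that $\mathcal{F}_K^t v \to 0$ for $v = \vect(\Sigma_0)$ with $\Sigma_0 \succeq 0$, and a priori the PSD cone need not span $\mathbb{R}^{n^2}$ in a way that controls every eigenvalue of $\mathcal{F}_K$. The Krein--Rutman argument circumvents this by locating an eigenvector realizing $\rho(\mathcal{F}_K)$ \emph{inside} the cone, which is exactly the kind of initial condition that is admissible as a covariance. An equivalent way to handle this step, if one prefers to avoid naming the theorem, is to note that any symmetric matrix decomposes as a difference of PSDs and conclude directly that $\mathcal{F}_K^t \to 0$ on the whole symmetric subspace, which is the entire space on which the matrix representation acts (since $\mathcal{F}_K$ preserves symmetry).
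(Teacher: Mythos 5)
Your proposal is correct in substance but takes a genuinely different, and considerably more careful, route than the paper. The paper's proof is a one-line chain of assertions: mean-square stability holds iff $\Sigma_K = \sum_{t=0}^\infty \mathcal{F}_K^t(\Sigma_0)$ is finite, which is ``equivalent to'' $\rho(\mathcal{F}_K)<1$; the nontrivial half of that equivalence (why convergence of the series applied to one positive-definite $\Sigma_0$ forces the spectral radius of the full operator below one) is not argued. You supply exactly the missing ingredient: $\mathcal{F}_K$ preserves the PSD cone, so a Perron--Frobenius/Krein--Rutman argument produces a nonzero PSD eigen-matrix $X^\star$ for the eigenvalue $\rho$, and perturbing it to $X^\star + \eps I$ gives an admissible initial covariance whose trajectory is bounded below by $\rho^t X^\star$. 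This is the standard and rigorous way to prove the characterization, and it buys a proof that actually closes the ``only if'' direction rather than asserting it. The ``if'' direction is identical in spirit to the paper's.

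One residual wrinkle you flag but do not fully close: the PSD cone is solid only inside the subspace of \emph{symmetric} matrices, so Krein--Rutman applied to that cone yields an eigenvalue equal to $\rho(\mathcal{F}_K|_{\mathrm{sym}})$, the spectral radius of the restriction to the symmetric subspace --- not automatically $\rho(\mathcal{F}_K)$ of the full $n^2\times n^2$ matrix, which also acts on the (invariant) antisymmetric subspace. Your parenthetical claim that the symmetric subspace ``is the entire space on which the matrix representation acts'' is not literally true. The gap is minor and standard to fix: for a completely positive map $\Phi(X)=\sum_i c_i M_i X M_i^\intercal$ with $c_i\ge 0$, Cauchy--Schwarz gives $|u^\intercal \Phi^t(X) w| \le \|X\|\,\tfrac12\bigl(u^\intercal \Phi^t(I)u + w^\intercal \Phi^t(I)w\bigr)$ for unit vectors $u,w$, hence $\|\Phi^t(X)\|\le \|\Phi^t(I)\|\,\|X\|$ for \emph{arbitrary} $X$, so $\rho(\mathcal{F}_K)=\lim_t\|\mathcal{F}_K^t\|^{1/t}\le \lim_t\|\mathcal{F}_K^t(I)\|^{1/t}\le\rho(\mathcal{F}_K|_{\mathrm{sym}})$, and the two spectral radii coincide. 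With that one line added, your argument is complete; note that the paper itself (and its later use of $\rho(\mathcal{F}_K)^t \le \|\mathcal{F}_K^t\|_F$ in Lemma \ref{lemma:S_K_trace_bound}) silently relies on the same identification.
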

\begin{proof}
    Mean-square stability implies ${\underset{t \to \infty}{\lim} \mathbb{E} [x_t x_t^\intercal] = 0}$, which for linear systems occurs only when $\Sigma_K$ is finite, which by \eqref{eq:TK_char} is equivalent to $\rho(\mathcal{F}_{K})<1$.
\end{proof}

Recalling the definition of $C(K)$ and \eqref{eq:glyap2}, along with the basic observation that $K \notin \mathcal{K}$ induces infinite cost, gives the following characterization of the cost:
\begin{align*}
    C(K) = \left\{
                \begin{array}{ll}
                  \Tr(Q_K \Sigma_K) = \Tr(P_K \Sigma_0) \quad &\text{if } K \in \mathcal{K} \\
                  \infty \quad &\text{otherwise.}
                \end{array}
              \right.
\end{align*}
The evident fact that $C(K)$ is expressed as a closed-form function, up to a Lyapunov equation, of $K$ leads to the idea of performing gradient descent on $C(K)$ (i.e., policy gradient) via the update $K \leftarrow K - \eta \nabla C(K)$ to find the optimal gain matrix. 
However, two properties of the LQR cost function $C(K)$ complicate a convergence analysis of gradient descent. First, $C(K)$ is extended valued since not all gain matrices provide closed-loop mean-square stability, so it does not have (global) Lipschitz gradients. Second, and even more concerning, $C(K)$ is generally non-convex in $K$ (even for deterministic LQR problems, as observed by Fazel et al. \cite{Fazel2018}), so it is unclear if and when gradient descent converges to the global optimum, or if it even converges at all. Fortunately, as in the deterministic case, we show that the multiplicative LQR cost possesses further key properties that enable proof of global convergence despite the lack of Lipschitz gradients and non-convexity.

\subsection{From Stochastic to Robust Stability}
Additional motivation for designing controllers which stabilize a stochastic system in mean-square is to ensure robustness of stability of a nominal deterministic system to model parameter perturbations.
Here we state a condition which guarantees robust deterministic stability for a perturbed deterministic system given mean-square stability of a stochastic single-state system with multiplicative noise where the noise variance and parameter perturbation size are related.

\begin{Ex}[Robust stability]
Suppose the stochastic closed-loop system
\begin{align} \label{eq:scalar_stochastic_sys}
    x_{t+1} = (a + \delta_{t}) x_t
\end{align}
where $a, x_t, \delta_t $ are scalars with $\mathbb{E} [ \delta_{t}^2 ] = \alpha $ is mean-square stable. 
Then, the perturbed deterministic system
\begin{align} \label{eq:perturbed_det_sys}
    x_{t+1} = (a + \phi) x_t
\end{align}
is stable for any constant perturbation $|\phi| \leq \sqrt{a^2 + \alpha} - | a |$.
\end{Ex}

\begin{proof}
By the bound on $\phi$ and triangle inequality we have 
$
    \rho(a + \phi) = |a + \phi | \leq |a| + | \phi | \leq \sqrt{a^2 + \alpha} .
$
From Lemma \ref{lemma:mss_char}, mean-square stability of \eqref{eq:scalar_stochastic_sys} implies 
$
    \sqrt{\rho(\mathcal{F})} = \sqrt{a^2 + \alpha} < 1 
$
and thus $\rho(a + \phi) < 1$, proving stability of \eqref{eq:perturbed_det_sys}.
\end{proof}

Although this is a simple example, it demonstrates that the robustness margin increases monotonically with the multiplicative noise variance. We also see that when $\alpha = 0$ the bound collapses so that no robustness is guaranteed, i.e., when $|a| \rightarrow 1$.
This result can be extended to multiple states, inputs, and noise directions, but the resulting conditions become considerably more complex \cite{bernstein1987robust, gravell2020ifac}. We now proceed with developing methods for optimal control.

\section{Gradient Domination and Other Properties of~the~Multiplicative~Noise~LQR~Cost} \label{sec:graddom}

In this section, we demonstrate that the multiplicative noise LQR cost function is \emph{gradient dominated}, which facilitates optimization by gradient descent. Gradient dominated functions have been studied for many years in the optimization literature \cite{Polyak1963} and have recently been discovered in deterministic LQR problems by \cite{Fazel2018}. 

\subsection{Multiplicative Noise LQR Cost is Gradient Dominated} \label{subsec:graddom}

First, we give the expression for the policy gradient of the multiplicative noise LQR cost.\footnote{We include a factor of 2 on the gradient expression that was erroneously dropped in \cite{Fazel2018}. This affects the step size restrictions by a corresponding factor of 2.}
Define
\begin{align*}
    R_K & \Let R + B^\intercal P_{K} B + \sum_{j=1}^q \beta_j B_j^\intercal P_K B_j \\
    E_K & \Let R_K K + B^\intercal P_K A .
\end{align*}

\begin{Lem}[Policy Gradient Expression] \label{lemma:policy_grad_exp}
The policy gradient is given by
\begin{align}
    \nabla_K C(K) = 2 E_K \Sigma_{K} &= 2(R_K K + B^\intercal P_K A) \Sigma_{K} 
    = 2\bigg[ \Big( R + B^\intercal P_{K} B + \sum_{j=1}^q \beta_j B_j^\intercal P_K B_j \Big) K + B^\intercal P_{K} A\bigg] \Sigma_{K} . \nonumber
\end{align}
\end{Lem}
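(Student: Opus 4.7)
The plan is to differentiate the generalized Lyapunov equation \eqref{eq:glyap1} for $P_K$ in an arbitrary direction $\Delta K$, and to combine this with the trace representation $C(K) = \Tr(P_K \Sigma_0)$ to read off the Fr\'echet derivative. Throughout, $K$ is fixed in $\mathcal{K}$, so all objects are well-defined and Lemma~\ref{lemma:mss_char} applies.

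First, I would fix $\Delta K \in \R^{m \times n}$ and consider the directional derivative $\dot{P}_K := \lim_{\tau \to 0} \tau^{-1}(P_{K + \tau \Delta K} - P_K)$. Differentiating \eqref{eq:glyap1} termwise and collecting those contributions in which the derivative does \emph{not} fall on $P_K$ into a matrix $L(\Delta K)$ yields
\begin{align*}
    \dot{P}_K = L(\Delta K) + A_K^\intercal \dot{P}_K A_K + \sum_{i=1}^p \alpha_i A_i^\intercal \dot{P}_K A_i + \sum_{j=1}^q \beta_j K^\intercal B_j^\intercal \dot{P}_K B_j K.
\end{align*}
The six pieces of $L(\Delta K)$, namely $\Delta K^\intercal R K + K^\intercal R \Delta K$ from $K^\intercal R K$, $\Delta K^\intercal B^\intercal P_K A_K + A_K^\intercal P_K B \Delta K$ from $A_K^\intercal P_K A_K$, and $\sum_j \beta_j(\Delta K^\intercal B_j^\intercal P_K B_j K + K^\intercal B_j^\intercal P_K B_j \Delta K)$ from the multiplicative-noise block, regroup by collecting the coefficient of $\Delta K$ into
\begin{align*}
    R K + B^\intercal P_K (A + B K) + \sum_{j=1}^q \beta_j B_j^\intercal P_K B_j K = R_K K + B^\intercal P_K A = E_K,
\end{align*}
so that $L(\Delta K) = \Delta K^\intercal E_K + E_K^\intercal \Delta K$.

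Next, I would invert the Lyapunov-type recursion for $\dot{P}_K$. The linear map $X \mapsto A_K^\intercal X A_K + \sum_i \alpha_i A_i^\intercal X A_i + \sum_j \beta_j K^\intercal B_j^\intercal X B_j K$ appearing on the right-hand side is the trace-adjoint $\mathcal{F}_K^{\star}$ of the covariance operator $\mathcal{F}_K$ and shares its spectral radius, which by Lemma~\ref{lemma:mss_char} is strictly less than one. The Neumann series $\dot{P}_K = \sum_{t \ge 0} (\mathcal{F}_K^{\star})^t(L(\Delta K))$ therefore converges, and taking traces against $\Sigma_0$ while shuttling the adjoint onto $\Sigma_0$ via $\Tr(\mathcal{F}_K^{\star}(X) Y) = \Tr(X \mathcal{F}_K(Y))$ and invoking \eqref{eq:TK_char} gives
\begin{align*}
    \dot{C}(K)[\Delta K] = \Tr(\dot{P}_K \Sigma_0) = \Tr\bigl(L(\Delta K)\, \Sigma_K\bigr) = 2\,\Tr(\Delta K^\intercal E_K \Sigma_K),
\end{align*}
by cyclicity of the trace in the last step. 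Identifying $\dot{C}(K)[\Delta K] = \Tr(\Delta K^\intercal \nabla_K C(K))$ then produces $\nabla_K C(K) = 2 E_K \Sigma_K$, and expanding $E_K$ and $R_K$ recovers the explicit form stated in the lemma.

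The step I expect to be the main obstacle is the bookkeeping in the construction of $L(\Delta K)$: the multiplicative-noise contribution $\sum_j \beta_j K^\intercal B_j^\intercal P_K B_j K$ has no counterpart in \cite{Fazel2018}, and one must verify that the two $\Delta K$-terms it contributes combine with the $R K$ and $B^\intercal P_K B K$ terms in exactly the right way to reconstitute a single $E_K$ rather than a more cumbersome expression. Once this regrouping is pinned down, everything else uses only the standard Lyapunov/adjoint machinery already assembled before the lemma, and the hypothesis $K \in \mathcal{K}$ suffices to justify convergence of all series involved.
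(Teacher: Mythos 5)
Your proposal is correct and follows essentially the same route as the paper's proof: both separate the explicit $K$-dependence of the generalized Lyapunov equation \eqref{eq:glyap1} from the implicit dependence through $P_K$, regroup the explicit terms (including the $\sum_j \beta_j K^\intercal B_j^\intercal P_K B_j K$ block) into the single factor $E_K$, and then sum a geometric series over the closed-loop propagation to convert $\Sigma_0$ into $\Sigma_K$ via \eqref{eq:TK_char}. The only difference is organizational: you solve the linearized equation for $\dot{P}_K$ with a Neumann series in the adjoint operator $\mathcal{F}_K^{\star}$ and shuttle it onto $\Sigma_0$ under the trace, whereas the paper unrolls the same series recursively as $\nabla_{\bar K}\Tr(P_{\bar K}\Sigma_1), \nabla_{\bar K}\Tr(P_{\bar K}\Sigma_2), \ldots$ — these are term-by-term identical computations, with your version making the convergence justification (via $\rho(\mathcal{F}_K)<1$ from Lemma \ref{lemma:mss_char}) slightly more explicit.
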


\begin{proof}
Substituting the RHS of the generalized Lyapunov equation \eqref{eq:glyap1} into the cost $C(K) = \Tr(P_K \Sigma_0)$ yields
\begin{align*}
    C(K) 
    &= \Tr(Q_K \Sigma_0)  + \Tr(A_K^\intercal P_{K} A_K \Sigma_0) 
    + \Tr \big(\sum_{i=1}^p \alpha_i A_i^\intercal P_K A_i \Sigma_0 \big)  + \Tr \big( \sum_{j=1}^q \beta_j K^\intercal B_j^\intercal P_K B_j K \Sigma_0 \big)
\end{align*}
Taking the gradient with respect to $K$ and using the product rule and rules for matrix derivatives we obtain
\begin{align}
    \nabla_K C(K) 
    &= \nabla_K \Tr ( P_{K} \Sigma_0 ) \\
    &= \nabla_K \bigg[ \Tr((Q + K^TRK) \Sigma_0) + \Tr(A+BK)^T P_{K} (A+BK) \Sigma_0) + \Tr(\sum_{i=1}^p \alpha_i A_i^T P_K A_i \Sigma_0) \\
    & \qquad +  \Tr(\sum_{j=1}^q \beta_j K^T B_j^T P_K B_j K \Sigma_0) \bigg] \\
    &= \nabla_{\widetilde{K}} \big[ \Tr(Q_{\widetilde{K}} \Sigma_0) + \Tr(A_{\widetilde{K}}^\intercal P_{K} A_{\widetilde{K}} \Sigma_0) 
    + \Tr \big(\sum_{i=1}^p \alpha_i A_i^\intercal P_K A_i \Sigma_0 \big) + \Tr \big( \sum_{j=1}^q \beta_j \widetilde{K}^\intercal B_j^\intercal P_K B_j \widetilde{K} \Sigma_0 \big) \big] \\
    & \quad + \nabla_{\bar{K}}  \big[ \Tr(A_K^\intercal P_{\bar{K}} A_K \Sigma_0)  
    + \Tr \big(\sum_{i=1}^p \alpha_i A_i^\intercal P_{\bar{K}} A_i \Sigma_0 \big)  + \Tr \big( \sum_{j=1}^q \beta_j K^\intercal B_j^\intercal P_{\bar{K}} B_j K \Sigma_0 \big) \big] \\
    &= 2 \left[ \left( R + B^T P_K B +  \sum_{j=1}^q \beta_j B_j^T P_K B_j \right) K  +  B^T P_{K} A \right] \Sigma_0 \\
    &\quad + \nabla_{\bar K}  \Tr\left[ \left((A+BK)^T P_{\bar K} (A+BK) +  \sum_{i=1}^p \alpha_i A_i^T P_{\bar K} A_i +  \sum_{j=1}^q \beta_j K^T B_j^T P_{\bar K} B_j K \right) \Sigma_0\right] \\
    &= 2 \left[ \left( R + B^T P_K B +  \sum_{j=1}^q \beta_j B_j^T P_K B_j \right) K  +  B^T P_{K} A \right] \Sigma_0 \\
    &\quad + \nabla_{\bar K}  \Tr\left( P_{\bar K} \left[ (A+BK) \Sigma_0 (A+BK)^T +  \sum_{i=1}^p\alpha_i   A_i\Sigma_0 A_i^T +  \sum_{j=1}^q \beta_j  B_j K \Sigma_0 K^T B_j^T \right] \right) \\
    &= 2 \left[ \left( R + B^T P_K B +  \sum_{j=1}^q \beta_j B_j^T P_K B_j \right) K  +  B^T P_{K} A \right] \Sigma_0 + \nabla_{\bar K}  \Tr ( P_{\bar K} \Sigma_1 )
\end{align}
where the tilde on $\widetilde{K}$ and overbar on ${\overline K}$ are used to denote the terms being differentiated.
Applying this gradient formula recursively to the last term in the last line (namely $ \nabla_{\bar K}  \Tr ( P_{\bar{K}} \Sigma_1 )$), and recalling the definition of $\Sigma_K$ we obtain
\begin{equation}
\nabla_K C(K) = 2\left[ \left(R + B^T P_{K} B + \sum_{j=1}^q \beta_j B_j^T P_K B_j \right) K + B^T P_{K} A \right] \Sigma_{K}
\end{equation}
which completes the proof.
\end{proof}
For brevity the gradient is implied to be with respect to the gains $K$ in the rest of this work, i.e., $\nabla_K$ denoted by $\nabla$.
Now we must develop some auxiliary results before demonstrating gradient domination. Throughout $\| Z \|$ and $\|Z\|_F$ are the spectral and Frobenius norms respectively of a matrix $Z$, and $\underline{\sigma}(Z)$ and $\overline{\sigma}(Z)$ are the minimum and maximum singular values of a matrix $Z$.
The value function of $V_K(x)$ for $x=x_0$ is defined as
\begin{align*} 
    V_K(x) \Let \mathbb{E}_{\delta,\gamma} \sum_{t=0}^\infty x_t^\intercal Q_K x_t.
\end{align*}
which relates to the cost as $C(K) = \mathbb{E}_{x_0} V_K(x_0)$.
The advantage function is defined as
\begin{align}
    \mathcal{A}_K(x,u) & \Let x^\intercal Q x + u^\intercal R u + \underset{\delta,\gamma}{\mathbb{E}} V_K(\widetilde{A}x+\widetilde{B}u) - V_K(x), 
\end{align}
where the expectation is with respect to $\widetilde{A}$ and $\widetilde{B}$ \emph{inside} the parentheses of $V_{K}(\widetilde{A} x+\widetilde{B} u)$.
The advantage function can be thought of as the difference in cost (``advantage'') when starting in state $x$ of taking an action $u$ for one step instead of the action generated by policy $K$.
We also define the state, input, and cost sequences
\begin{align}
    \{x_t\}_{K,x} & \Let \{x, \widetilde{A}_K x,\widetilde{A}_K^2 x, ... , \widetilde{A}_K^t x, ... \} \\
    \{u_t\}_{K,x} & \Let K\{x_t\}_{K,x} \\
    \{c_t\}_{K,x} & \Let \{x_t\}_{K,x}^\intercal Q_K \{x_t\}_{K,x} .
\end{align}
Throughout the proofs we will consider pairs of gains $K$ and $K{^\prime}$ and their difference $\Delta \Let K{^\prime} - K$.
\begin{Lem}[Value difference] \label{lemma:value_difference} 
    Suppose $K$ and $K{^\prime}$ generate the (stochastic) state, action, and cost sequences 
    $\{x_t\}_{K,x}  , \{u_t\}_{K,x}  ,  \{c_t\}_{K,x}$
    and
    $ \{x_t\}_{K{^\prime},x}  , \{u_t\}_{K{^\prime},x}  ,  \{c_t\}_{K{^\prime},x} $.
    Then the value difference and advantage satisfy
    \begin{align}
        V_{K{^\prime}}(x)-V_{K}(x) &= \underset{\delta,\gamma}{\mathbb{E}} \sum_{t=0}^\infty \mathcal{A}_{K}\big( \{x_t\}_{K{^\prime},x}, \{u_t\}_{K{^\prime},x} \big) \\
        \mathcal{A}_{K}(x,K{^\prime} x) &= 2x^\intercal\Delta^\intercal E_{K} x + x^\intercal \Delta^\intercal R_{K} \Delta x .
    \end{align}
\end{Lem}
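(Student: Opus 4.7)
\emph{Part 1 (value difference).} The plan is a telescoping sum argument. Fix a trajectory $\{x_t\}_{K',x}$, $\{u_t\}_{K',x}$ generated by the perturbed policy $K'$. By definition of the advantage function,
\begin{align*}
\mathcal{A}_K\big(\{x_t\}_{K',x},\{u_t\}_{K',x}\big) = c_t + \underset{\delta,\gamma}{\mathbb{E}}\big[V_K(x_{t+1}) \,\big|\, x_t,u_t\big] - V_K(x_t),
\end{align*}
where $c_t = x_t^\intercal Q x_t + u_t^\intercal R u_t$ under $K'$. Summing $t=0,\ldots,T$ and taking expectation with respect to $\delta,\gamma$ the middle terms telescope to give
\begin{align*}
\underset{\delta,\gamma}{\mathbb{E}} \sum_{t=0}^T \mathcal{A}_K(x_t,u_t) = \underset{\delta,\gamma}{\mathbb{E}} \sum_{t=0}^T c_t + \underset{\delta,\gamma}{\mathbb{E}}[V_K(x_{T+1})] - V_K(x).
\end{align*}
Letting $T\to\infty$, if $K'$ is mean-square stabilizing then $\mathbb{E}[x_{T+1}x_{T+1}^\intercal]\to 0$, so $\mathbb{E}[V_K(x_{T+1})] = \mathbb{E}\,\Tr(P_K x_{T+1}x_{T+1}^\intercal)\to 0$, and the first sum on the right converges to $V_{K'}(x)$, yielding the claim. (If $K'\notin\mathcal{K}$ both sides are $+\infty$.)

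\emph{Part 2 (advantage formula).} Here I will directly expand the definition using the quadratic form $V_K(y)=y^\intercal P_K y$ (which follows from the generalized Lyapunov equation \eqref{eq:glyap1}). Writing
\begin{align*}
\mathcal{A}_K(x,K'x) = x^\intercal\!\Big(Q + K'^\intercal R K' + \underset{\delta,\gamma}{\mathbb{E}}\big[(\widetilde{A}+\widetilde{B}K')^\intercal P_K (\widetilde{A}+\widetilde{B}K')\big] - P_K\Big)x,
\end{align*}
I will compute the inner expectation. Using mutual independence and zero-mean of the noise, the cross term reduces to $A^\intercal P_K B K'$ and the quadratic terms expand via the second-moment identities producing $A^\intercal P_K A + \sum_i\alpha_i A_i^\intercal P_K A_i$ and $K'^\intercal(B^\intercal P_K B + \sum_j\beta_j B_j^\intercal P_K B_j)K' = K'^\intercal R_K K' - K'^\intercal R K'$. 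Substituting the Lyapunov equation \eqref{eq:glyap1} to eliminate $P_K$, the deterministic and multiplicative $A$-terms cancel, leaving
\begin{align*}
\mathcal{A}_K(x,K'x) = x^\intercal\Big(K'^\intercal R_K K' - K^\intercal R_K K + 2A^\intercal P_K B(K'-K)\Big)x,
\end{align*}
where I have used the identity $A_K^\intercal P_K A_K = A^\intercal P_K A + A^\intercal P_K BK + K^\intercal B^\intercal P_K A + K^\intercal B^\intercal P_K B K$ to simplify the $A$-terms and the definition of $R_K$ to absorb the $\beta_j$-terms.

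\emph{Final algebra.} Writing $K' = K + \Delta$ and expanding $K'^\intercal R_K K' - K^\intercal R_K K = \Delta^\intercal R_K K + K^\intercal R_K \Delta + \Delta^\intercal R_K \Delta$, then symmetrizing the cross term $2x^\intercal A^\intercal P_K B \Delta x = x^\intercal(\Delta^\intercal B^\intercal P_K A + A^\intercal P_K B\Delta)x$, the two linear-in-$\Delta$ pieces combine as $\Delta^\intercal E_K + E_K^\intercal \Delta$ with $E_K = R_K K + B^\intercal P_K A$. The quadratic form in $x$ then reads $2x^\intercal \Delta^\intercal E_K x + x^\intercal \Delta^\intercal R_K \Delta x$ as claimed.

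\emph{Main obstacle.} The one step requiring care is the bookkeeping when substituting the generalized Lyapunov equation into the expanded quadratic: the multiplicative-noise terms $\sum_i\alpha_i A_i^\intercal P_K A_i$ and $\sum_j\beta_j K^\intercal B_j^\intercal P_K B_j K$ must cancel cleanly, which relies on the fact that the $A_i$-terms are independent of the control ($K$ vs.\ $K'$) while the $B_j$-terms are absorbed into the definition of $R_K$. The rest is routine expansion.
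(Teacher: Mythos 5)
Your proof is correct and follows essentially the same route as the paper: a telescoping argument for the value difference (the paper's add-and-subtract manipulation of the infinite sums is the same telescoping in disguise, and your explicit treatment of the boundary term $\mathbb{E}[V_K(x_{T+1})]\to 0$ under mean-square stability is if anything slightly more careful), and a direct expansion of the advantage via the quadratic value function $V_K(y)=y^\intercal P_K y$, the zero-mean/mutual-independence of the noises, and the generalized Lyapunov equation. The only cosmetic difference is that you derive the intermediate identity $\mathcal{A}_K(x,K'x)=x^\intercal\big(K'^\intercal R_K K'-K^\intercal R_K K + \Delta^\intercal B^\intercal P_K A + A^\intercal P_K B\Delta\big)x$ before substituting $K'=K+\Delta$, whereas the paper substitutes $\Delta$ earlier; both reach the same final algebra.
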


\begin{proof}
The proof follows the ``cost-difference'' lemma in \cite{Fazel2018} exactly substituting versions of value and cost functions, etc. which take expectation over the multiplicative noise.
By definition we have
\begin{align}
    V_K(x) = \underset{\delta,\gamma}{\mathbb{E}} \sum_{t=0}^{\infty} \{c_t\}_{K,x}
\end{align}
so we can write the value difference as
\begin{align}
     V_{K{^\prime}}(x)-V_{K}(x) & = \underset{\delta,\gamma}{\mathbb{E}} \sum_{t=0}^{\infty} \bigg[\{c_t\}_{K{^\prime},x} \bigg] - V_{K}(x) \\
    & =  \underset{\delta,\gamma}{\mathbb{E}} \sum_{t=0}^{\infty} \bigg[\{c_t\}_{K{^\prime},x}  - V_{K}(\{x_t\}_{K{^\prime},x})\bigg] 
    +  \underset{\delta,\gamma}{\mathbb{E}} \sum_{t=0}^{\infty} \bigg[V_{K}(\{x_t\}_{K{^\prime},x})\bigg]- V_{K}(x)  .
\end{align}
We expand the following value function difference as
\begin{align}
    & \underset{\delta,\gamma}{\mathbb{E}} \sum_{t=0}^{\infty}\bigg[V_{K}(\{x_t\}_{K{^\prime},x})\bigg] - V_{K}(x) \\
    & = \underset{\delta,\gamma}{\mathbb{E}} \sum_{t=0}^{\infty}\bigg[V_{K}(\{x_{t+1}\}_{K{^\prime},x})\bigg] + \underset{\delta,\gamma}{\mathbb{E}} V_{K}(\{x_{0}\}_{K{^\prime},x}) - V_{K}(x) \\
    & = \underset{\delta,\gamma}{\mathbb{E}} \sum_{t=0}^{\infty}V_{K}(\{x_{t+1}\}_{K{^\prime},x})
\end{align}
where the last equality is valid by noting that the first term in sequence $\{x_t\}_{K{^\prime},x}$ is $x$.
Continuing the value difference expression we have
\begin{align}
    V_{K{^\prime}}(x)-V_{K}(x) 
    &= \underset{\delta,\gamma}{\mathbb{E}} \sum_{t=0}^{\infty} \bigg[\{c_t\}_{K{^\prime},x}  - V_{K}(\{x_t\}_{K{^\prime},x})\bigg] + \underset{\delta,\gamma}{\mathbb{E}} \sum_{t=0}^{\infty}V_{K}(\{x_{t+1}\}_{K{^\prime},x}) \\
    &= \underset{\delta,\gamma}{\mathbb{E}} \sum_{t=0}^{\infty} \bigg[\{c_t\}_{K{^\prime},x}  + V_{K}(\{x_{t+1}\}_{K{^\prime},x})- V_{K}(\{x_t\}_{K{^\prime},x}) \bigg] \\
    &= \underset{\delta,\gamma}{\mathbb{E}} \sum_{t=0}^{\infty} \bigg[\{x_t\}_{K{^\prime},x}^\intercal Q_{K{^\prime}} \{x_t\}_{K{^\prime},x}  + V_{K}(\{x_{t+1}\}_{K{^\prime},x})- V_{K}(\{x_t\}_{K{^\prime},x}) \bigg] \\
    &= \underset{\delta,\gamma}{\mathbb{E}} \sum_{t=0}^{\infty} \bigg[\{x_t\}_{K{^\prime},x}^\intercal Q \{x_t\}_{K{^\prime},x} + \{u_t\}_{K{^\prime},x}^\intercal R \{u_t\}_{K{^\prime},x}  + V_{K}(\{x_{t+1}\}_{K{^\prime},x})  - V_{K}(\{x_t\}_{K{^\prime},x}) \bigg] \\
    &= \underset{\delta,\gamma}{\mathbb{E}} \sum_{t=0}^{\infty} \bigg[\mathcal{Q}_{K}(\{x_t\}_{K{^\prime},x},\{u_t\}_{K{^\prime},x}) - V_{K}(\{x_t\}_{K{^\prime},x}) \bigg] \\
    &= \underset{\delta,\gamma}{\mathbb{E}} \sum_{t=0}^\infty \mathcal{A}_{K}\big(\{x_t\}_{K{^\prime},x} \ , \ \{u_t\}_{K{^\prime},x}\big) ,
\end{align}
where the fifth equality holds since $\{x_{t+1}\}_{K{^\prime},x} = A\{x_t\}_{K{^\prime},x} + B\{u_t\}_{K{^\prime},x}$.

For the second part of the proof regarding the advantage expression, we expand and substitute in definitions:
\begin{align}
    \mathcal{A}_{K}(x,K{^\prime} x) &= \mathcal{Q}_{K}(x,K{^\prime}x) - V_{K}(x) \\
    &= x^\intercal Q x + x^\intercal K{^\prime}^\intercal R K{^\prime} x +\underset{\delta,\gamma}{\mathbb{E}} V_{K}(A_{K{^\prime}} x) - V_{K}(x) .
\end{align}
Now note that
\begin{align}
    \underset{\delta,\gamma}{\mathbb{E}} V_{K}(A_{K{^\prime}} x) &= x^\intercal \underset{\delta,\gamma}{\mathbb{E}} \bigg[ A_{K{^\prime}}^\intercal P_{K} A_{K{^\prime}} \bigg] x \\
    &= x^\intercal  \underset{\delta,\gamma}{\mathbb{E}} \left[ \left(A  + \sum_{i=1}^p \delta_{ti} A_i +  (B + \sum_{j=1}^q \gamma_{tj} B_j)K{^\prime} \right)^\intercal P_{K}  \left(A  + \sum_{i=1}^p \delta_{ti} A_i +  (B + \sum_{j=1}^q \gamma_{tj} B_j)K{^\prime} \right) \right] x \\
    &= x^\intercal \left[(A+BK{^\prime})^\intercal P_{K} (A+BK{^\prime})  + \mathbb{E}_{\delta_{ti}} \left(\sum_{i=1}^p \delta_{ti} A_i \right)^\intercal P_{K} \left(\sum_{i=1}^p \delta_{ti} A_i\right) \right. \\
    & \qquad \left. + \mathbb{E}_{\gamma_{tj}} \left(\sum_{j=1}^q \gamma_{tj} B_j K{^\prime}\right)^\intercal P_{K} \left(\sum_{j=1}^q \gamma_{tj} B_j K{^\prime}\right) \right] x \\
    &= x^\intercal \left[A_{K^\prime}^\intercal P_{K} A_{K^\prime} + \sum_{i=1}^p \alpha_i A_i^\intercal P_{K} A_i + \sum_{j=1}^q \beta_j K{^\prime}^\intercal B_j^\intercal P_{K} B_j K{^\prime} \right] x,
\end{align}
where the third equality follows from all of the $\delta_{ti}$ and $\gamma_{tj}$ being zero-mean and mutually independent.
Substituting and continuing,
\begin{align*}
    &\mathcal{A}_{K}(x,K{^\prime} x)  \nonumber \\
    &= x^\intercal Q x + x^\intercal K{^\prime}^\intercal R K{^\prime} x + x^\intercal \bigg[(A+BK{^\prime})^\intercal P_{K} (A+BK{^\prime}) 
    + \sum_{i=1}^p \alpha_i A_i^\intercal P_{K} A_i + \sum_{j=1}^q \beta_j K{^\prime}^\intercal B_j^\intercal P_{K} B_j K{^\prime} \bigg] x - V_{K}(x) \\
    &= x^\intercal \bigg[ Q + K{^\prime}^\intercal R K{^\prime} + (A+BK{^\prime})^\intercal P_{K} (A+BK{^\prime}) 
    + \sum_{i=1}^p \alpha_i A_i^\intercal P_{K} A_i + \sum_{j=1}^q \beta_j K{^\prime}^\intercal B_j^\intercal P_{K} B_j K{^\prime} \bigg]x - V_{K}(x) \\
    &= x^\intercal \bigg[Q + (\Delta+K)^\intercal R (\Delta+K) + (A+BK+B\Delta)^\intercal P_{K} (A+BK+B\Delta) \nonumber \\
    & \hspace{50pt} + \sum_{i=1}^p \alpha_i A_i^\intercal P_{K} A_i + \sum_{j=1}^q \beta_j (B_j K + B_j\Delta)^\intercal P_{K} (B_j K + B_j\Delta) \bigg] x - V_{K}(x) \\
    &= x^\intercal \bigg[\Delta^\intercal R \Delta + 2\Delta^\intercal R K + (B\Delta)^\intercal P_{K} (B\Delta) + 2(B\Delta)^\intercal P_{K} (A+B{K}) \nonumber \\
    & \qquad + \sum_{j=1}^q \beta_j (B_j\Delta)^\intercal P_{K} (B_j\Delta) + 2\sum_{j=1}^q \beta_j (B_j\Delta)^\intercal P_{K} B_j K \bigg] x \nonumber \\
    & \qquad + x^\intercal \bigg[ Q + K^\intercal RK  + (A+BK)^\intercal P_{K} (A+BK) 
    + \sum_{i=1}^p \alpha_i A_i^\intercal P_{K} A_i +  \sum_{j=1}^q \beta_j K^\intercal B_j^\intercal P_{K} B_j K \bigg] x - V_{K}(x) .
\end{align*}
We also have the following expression from the recursive relationship for $P_{K}$
\begin{align}
    V_{K}(x) &= x^\intercal P_{K} x = x^\intercal \bigg[ Q + K^\intercal RK  + (A+BK)^\intercal P_{K} (A+BK) 
    + \sum_{i=1}^p \alpha_i A_i^\intercal P_{K} A_i +  \sum_{j=1}^q \beta_j K^\intercal B_j^\intercal P_{K} B_j K \bigg] x .
\end{align}
Substituting, we cancel the $V_{K}(x)$ term which leads to the result after rearrangement:
\begin{align}
    \mathcal{A}_{K}(x,K{^\prime} x)  &= x^\intercal\bigg[\Delta^\intercal R \Delta + 2\Delta^\intercal R K + (B\Delta)^\intercal P_{K} (B\Delta) + 2(B\Delta)^\intercal P_{K} (A+B{K}) \nonumber \\
                                & \qquad + \sum_{j=1}^q \beta_j (B_j\Delta)^\intercal P_{K} (B_j\Delta) + 2\sum_{j=1}^q \beta_j (B_j\Delta)^\intercal P_{K} B_j K \bigg] x \\
                                &= 2x^\intercal \Delta^\intercal \bigg[R K + B^\intercal P_{K} (A+B{K}) + \sum_{j=1}^q \beta_j B_j^\intercal P_{K} B_j K \bigg] x \nonumber \\
                                & \qquad + x^\intercal \Delta^\intercal \bigg( R+B^\intercal P_{K} B+\sum_{j=1}^q \beta_j B_j^\intercal P_{K} B_j \bigg) \Delta x \\
                                &= 2x^\intercal \Delta^\intercal \bigg[\bigg(R + B^\intercal P_{K}B+\sum_{j=1}^q \beta_j B_j^\intercal P_{K} B_j \bigg) K + B^\intercal P_{K}A \bigg] x \nonumber \\
                                & \qquad + x^\intercal \Delta^\intercal \bigg( R+B^\intercal P_{K} B+\sum_{j=1}^q \beta_j B_j^\intercal P_{K} B_j \bigg) \Delta x \\
                                &= 2x^\intercal\Delta^\intercal E_{K} x + x^\intercal \Delta^\intercal R_{K} \Delta x ,
\end{align}
which completes the proof.
\end{proof}

Next, we see that the multiplicative noise LQR cost is gradient dominated. 

\begin{Lem}[Gradient domination] \label{lemma:gradient_dominated} 
    The LQR-with-multiplicative-noise cost $C(K)$ satisfies the gradient domination condition
    \begin{align}
        C(K) - C(K^*) &\leq \frac{\|\Sigma_{K^*}\|}{4\underline{\sigma}(R) {\underline{\sigma} (\Sigma_0)}^2 } \|\nabla C(K)\|_F^2 .
    \end{align}     
\end{Lem}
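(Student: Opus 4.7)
The plan is to follow the template of the gradient domination proof in Fazel et al., with the value-difference and advantage identities in Lemma \ref{lemma:value_difference} supplying the multiplicative-noise analogues of the key building blocks. First, I apply Lemma \ref{lemma:value_difference} with $K' = K^*$ and take expectation over $x_0 \sim \mathcal{P}_0$ to write
\begin{align*}
    C(K) - C(K^*) = -\underset{x_0,\delta,\gamma}{\mathbb{E}} \sum_{t=0}^\infty \mathcal{A}_{K}\bigl(\{x_t\}_{K^*,x_0},\{u_t\}_{K^*,x_0}\bigr),
\end{align*}
where the trajectories are generated by the optimal policy $K^*$. Substituting the closed-form advantage $\mathcal{A}_K(x,K^* x) = 2x^\intercal \Delta^\intercal E_K x + x^\intercal \Delta^\intercal R_K \Delta x$ with $\Delta \coloneqq K^* - K$ and recognising $\sum_t \mathbb{E}[x_t x_t^\intercal] = \Sigma_{K^*}$ yields
\begin{align*}
    C(K) - C(K^*) = -\Tr\!\bigl(\Sigma_{K^*}\bigl[2\Delta^\intercal E_K + \Delta^\intercal R_K \Delta\bigr]\bigr).
\end{align*}

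Next I complete the square in $\Delta$ using the fact that $R_K \succeq R \succ 0$ is invertible: writing
\begin{align*}
    2\Delta^\intercal E_K + \Delta^\intercal R_K \Delta = (\Delta + R_K^{-1} E_K)^\intercal R_K (\Delta + R_K^{-1} E_K) - E_K^\intercal R_K^{-1} E_K,
\end{align*}
the positive semidefinite quadratic term is dropped to give the upper bound
\begin{align*}
    C(K) - C(K^*) \leq \Tr\!\bigl(\Sigma_{K^*}\, E_K^\intercal R_K^{-1} E_K\bigr) \leq \frac{\|\Sigma_{K^*}\|}{\underline{\sigma}(R)}\,\Tr(E_K^\intercal E_K),
\end{align*}
where I used $\Tr(MN) \leq \|M\|\Tr(N)$ for $N \succeq 0$, then $\underline{\sigma}(R_K) \geq \underline{\sigma}(R)$, which follows from $B^\intercal P_K B + \sum_j \beta_j B_j^\intercal P_K B_j \succeq 0$ (since $P_K \succeq 0$).

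Finally, I lower bound $\|\nabla C(K)\|_F^2$ in terms of $\Tr(E_K^\intercal E_K)$ using Lemma \ref{lemma:policy_grad_exp}:
\begin{align*}
    \|\nabla C(K)\|_F^2 = 4\,\Tr(\Sigma_K E_K^\intercal E_K \Sigma_K) \geq 4\,\underline{\sigma}(\Sigma_K)^2\, \Tr(E_K^\intercal E_K) \geq 4\,\underline{\sigma}(\Sigma_0)^2\, \Tr(E_K^\intercal E_K),
\end{align*}
where the last inequality uses $\Sigma_K = \Sigma_0 + \sum_{t\geq 1} \mathcal{F}_K^t(\Sigma_0) \succeq \Sigma_0$, visible from \eqref{eq:TK_char} together with the fact that $\mathcal{F}_K$ maps positive semidefinite matrices to positive semidefinite matrices. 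Combining the two bounds gives the claimed inequality.

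The main obstacle, or at least the only place where multiplicative noise enters non-trivially, is verifying that $R_K \succeq R$ and $\Sigma_K \succeq \Sigma_0$ despite the extra $\alpha_i, \beta_j$ terms; both reduce to checking positive semidefiniteness of the additional $P_K$- and $\mathcal{F}_K$-dependent summands, which is immediate because $K \in \mathcal{K}$ makes $P_K \succeq 0$ and $\mathcal{F}_K$ a positive operator. Everything else is completion of squares and the trace inequality, and no step relies on the noise being absent, so the deterministic constants $\|\Sigma_{K^*}\|$, $\underline{\sigma}(R)$, $\underline{\sigma}(\Sigma_0)$ appear unchanged.
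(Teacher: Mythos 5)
Your proof is correct and follows essentially the same route as the paper's: value-difference lemma at $K'=K^*$, completion of the square in $\Delta$ to drop the positive semidefinite term, the trace--norm inequality with $\Sigma_{K^*}$, and finally $\Sigma_K \succeq \Sigma_0$ to replace $\underline{\sigma}(\Sigma_K)$ by $\underline{\sigma}(\Sigma_0)$. The only (cosmetic) differences are that you complete the square after tracing against $\Sigma_{K^*}$ rather than pointwise in $xx^\intercal$, and you justify the $\underline{\sigma}(R)$ factor via $\underline{\sigma}(R_K)\geq\underline{\sigma}(R)$ directly (which is in fact cleaner than the paper's stated chain through $\|R_K^{-1}\|$) while lower-bounding $\|\nabla C(K)\|_F^2$ instead of upper-bounding $\Tr(E_K^\intercal E_K)$ --- the same estimate read in the opposite direction.
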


\begin{proof}
We start with the advantage expression
\begin{align}
    \mathcal{A}_{K}(x,K{^\prime} x) &= 2x^\intercal \Delta^\intercal E_{K} x + x^\intercal \Delta^\intercal R_{K} \Delta x \\
                                    &= 2 \Tr[xx^\intercal\Delta^\intercal E_{K}] +\Tr [xx^\intercal \Delta^\intercal R_{K} \Delta] .
\end{align} 
Next we rearrange and complete the square:
\begin{align*}
    \mathcal{A}_{K}(x,K{^\prime} x) & = \Tr \big[ xx^\intercal \left(\Delta^\intercal R_{K} \Delta + 2\Delta^\intercal E_{K} \right) \big] \\
                                    & = \Tr \big[ xx^\intercal (\Delta+R_{K}^{-1} E_{K})^\intercal R_{K} (\Delta+R_{K}^{-1} E_{K}) \big] 
                                    - \Tr \big[ xx^\intercal E_{K}^\intercal R_{K}^{-1} E_{K} \big] .
\end{align*}
Since $R_{K} \succ 0 $, we have
\begin{align}
    \mathcal{A}_{K}(x,K{^\prime} x) \geq - \Tr\big[xx^\intercal E_{K}^\intercal R_{K}^{-1} E_{K} \big] \label{eq:grad_dom_ineq1}
\end{align}
with equality only when $\Delta = - R_{K}^{-1} E_{K}$.

Let the state and control sequences associated with the optimal gain $K^*$ be $\{x_t\}_{K^*,x}$ and $\{u_t\}_{K^*,x}$ respectively. We now obtain an upper bound for the cost difference by writing the cost difference in terms of the value function as
\begin{align}
    C(K) - C(K^*) = \underset{x_0}{\mathbb{E}} \big[ V(K,x_0) \big] - \underset{x_0}{\mathbb{E}} \big[ V(K^*,x_0) \big]
                  = \underset{x_0}{\mathbb{E}} \big[ V(K,x_0)-V(K^*,x_0) \big] .
\end{align}
Using the first part of the value-difference Lemma \ref{lemma:value_difference} and negating we obtain
\begin{align}
    C(K) - C(K^*) &= -\underset{x_0}{\mathbb{E}} \bigg[ \sum_{t=0}^\infty \mathcal{A}_{K}\big(\{x_t\}_{K^*,x}  ,  \{u_t\}_{K^*,x}\big)\bigg] \\
    & \leq \underset{x_0}{\mathbb{E}} \left[ \sum_{t=0}^\infty \Tr\bigg[\{x_t\}_{K^*,x}\{x_t\}_{K^*,x}^\intercal E_{K}^\intercal R_K^{-1} E_{K} \bigg]\right] = \Tr\bigg[ \Sigma_{K^*} E_{K}^\intercal R_K^{-1} E_{K} \bigg] .
\end{align}
where the second step used the advantage inequality in \eqref{eq:grad_dom_ineq1}.
Now using $|\Tr(Y Z)| \leq \|Y\||\Tr(Z)|$ we obtain
\begin{align}
    C(K) - C(K^*) &\leq \|\Sigma_{K^*}\| \Tr\bigg[E_{K}^\intercal R_K^{-1} E_{K} \bigg]  \label{eq:grad_dom_GN} \\
                  &\leq \|\Sigma_{K^*}\| \|R_K^{-1}\| \Tr\bigg[  E_{K}^\intercal E_{K} \bigg] \label{eq:grad_dom_grad_desc} ,
\end{align}
where the first and second inequalities will be used later in the Gauss-Newton and gradient descent convergence proofs respectively.
Combining $\|R_K\| \geq \|R\| = \overline{\sigma}(R) \geq \underline{\sigma}(R) $ with $\|Z^{-1}\| \geq \|Z\|^{-1}$ we obtain
\begin{align} \label{eq:grad_dom_ng}
    C(K) - C(K^*) &\leq \frac{\|\Sigma_{K^*}\|}{\underline{\sigma}(R)} \Tr\big[  E_{K}^\intercal E_{K} \big]
\end{align}
which will be used later in the natural policy gradient descent convergence proof.
Now we rearrange and substitute in the policy gradient expression $\frac{1}{2} \nabla C(K) (\Sigma_K)^{-1} = E_K$
\begin{align}
    C(K) - C(K^*) 
    &\leq \frac{\|\Sigma_{K^*}\|}{4\underline{\sigma}(R)} \Tr\big[  ( \nabla C(K) \Sigma_K^{-1})^\intercal (\nabla C(K) \Sigma_K^{-1}) \big] \\
    & \leq \frac{\|\Sigma_{K^*}\|}{4\underline{\sigma}(R)} \|(\Sigma_K^{-1})^\intercal\Sigma_K^{-1}\| \Tr\big[ \nabla C(K)^\intercal \nabla C(K) \big] \\
    & \leq \frac{\|\Sigma_{K^*}\|}{4\underline{\sigma}(R) \underline{\sigma}(\Sigma_K)^2 } \Tr\big[ \nabla C(K)^\intercal \nabla C(K) \big].
\end{align}
where the last step used the definition and submultiplicativity of spectral norm.
Using 
\begin{align}
    \Sigma_K = \underset{x_0}{\mathbb{E}} \left[ \sum_{t=0}^{\infty} x_t x_t^\intercal \right] \succeq \underset{x_0}{\mathbb{E}}[x_0x_0^\intercal] = \Sigma_0 
    \qquad \Rightarrow \qquad
    \underline{\sigma}(\Sigma_K) < {\underline{\sigma} (\Sigma_0)}
\end{align}
 completes the proof.
\end{proof}

The gradient domination property gives the following stationary point characterization.
\begin{Cor}
    If $\nabla C(K)=0$ then either $K=K^*$ or $\text{rank } \! (\Sigma_K)<n$.
\end{Cor}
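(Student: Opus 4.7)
The plan is to read off the two cases directly from the closed-form policy gradient in Lemma \ref{lemma:policy_grad_exp}. Since $\nabla C(K) = 2 E_K \Sigma_K$, the hypothesis $\nabla C(K) = 0$ is equivalent to $E_K \Sigma_K = 0$. If $\Sigma_K$ has full rank $n$, then it is invertible and right-multiplying by $\Sigma_K^{-1}$ yields $E_K = 0$; otherwise $\text{rank}(\Sigma_K) < n$, which is precisely the second alternative of the corollary. So the corollary reduces to showing that the $E_K = 0$ case implies $K = K^*$.

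To handle that case, I would argue as follows. Since $R_K = R + B^\intercal P_K B + \sum_{j=1}^q \beta_j B_j^\intercal P_K B_j \succeq R \succ 0$, the matrix $R_K$ is invertible and the equation $E_K = R_K K + B^\intercal P_K A = 0$ uniquely determines $K = -R_K^{-1} B^\intercal P_K A$. Plugging this expression into the generalized Lyapunov equation \eqref{eq:glyap1} for $P_K$, the three $K$-dependent terms $K^\intercal R K$, the cross-terms in $(A+BK)^\intercal P_K (A+BK)$, and $\sum_{j=1}^q \beta_j K^\intercal B_j^\intercal P_K B_j K$ combine via the identity $M^\intercal R_K M = A^\intercal P_K B R_K^{-1} B^\intercal P_K A$ with $M \Let R_K^{-1} B^\intercal P_K A$; after cancellation, the Lyapunov equation collapses to exactly the GARE \eqref{eq:genriccati} with $P = P_K$. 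Under the stabilizability/detectability assumptions recorded in \S\ref{sec:model}, the positive definite GARE solution is unique, so $P_K = P$ and consequently $K = -R_K^{-1} B^\intercal P_K A = -R_P^{-1} B^\intercal P A = K^*$.

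The only real obstacle is the purely algebraic verification that the above substitution collapses \eqref{eq:glyap1} into \eqref{eq:genriccati} by matching cross terms. This mirrors the classical completion-of-squares derivation for deterministic LQR: the noise terms $\sum_i \alpha_i A_i^\intercal P_K A_i$ pass unchanged from the Lyapunov equation into the same summand of the GARE, while $\sum_j \beta_j B_j^\intercal P_K B_j$ is absorbed directly into the definition of $R_K$. A shorter but less transparent alternative is to invoke gradient domination (Lemma \ref{lemma:gradient_dominated}) directly: $\nabla C(K) = 0$ immediately gives $C(K) \leq C(K^*)$ via the bound (using $\Sigma_0 \succ 0$), and uniqueness of the optimal gain yields $K = K^*$; however, this route does not naturally expose the rank-deficiency alternative stated in the corollary, so I would use the factorization argument above as the primary proof.
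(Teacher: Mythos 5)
Your proposal is correct, but it takes a genuinely different route from the one the paper intends. The paper offers no explicit proof: the corollary is presented as an immediate consequence of Lemma \ref{lemma:gradient_dominated}, whose proof contains the intermediate bound $C(K)-C(K^*) \leq \frac{\|\Sigma_{K^*}\|}{4\underline{\sigma}(R)\,\underline{\sigma}(\Sigma_K)^2}\Tr\big[\nabla C(K)^\intercal \nabla C(K)\big]$ (equivalently, via \eqref{eq:grad_dom_GN}, $C(K)-C(K^*)\leq \|\Sigma_{K^*}\|\Tr[E_K^\intercal R_K^{-1}E_K]$); when $\Sigma_K$ is full rank, $\nabla C(K)=2E_K\Sigma_K=0$ forces $E_K=0$ and hence $C(K)=C(K^*)$, after which uniqueness of the optimal gain gives $K=K^*$. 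Your primary argument shares the same first step ($\Sigma_K$ invertible $\Rightarrow E_K=0$) but then bypasses the cost comparison entirely: you substitute $K=-R_K^{-1}B^\intercal P_K A$ into the generalized Lyapunov equation \eqref{eq:glyap1}, complete the square to collapse it to the GARE \eqref{eq:genriccati}, and invoke uniqueness of the positive definite GARE solution to identify $P_K=P$ and $K=K^*$. The algebra checks out (the three $K$-dependent groups do combine into $K^\intercal R_K K + A^\intercal P_K BK + K^\intercal B^\intercal P_K A = -A^\intercal P_K B R_K^{-1}B^\intercal P_K A$), and the GARE uniqueness you lean on is asserted in \S\ref{sec:model}. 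What your route buys is a direct structural identification of the stationary gain with the Riccati-optimal gain, independent of the gradient-domination machinery; what the paper's route buys is brevity, since Lemma \ref{lemma:gradient_dominated} is already proved. One small quibble: your stated reason for rejecting the gradient-domination alternative --- that it ``does not naturally expose the rank-deficiency alternative'' --- is not quite right, since that alternative arises in both routes from the identical first step $E_K\Sigma_K=0$, and the gradient-domination route still needs a uniqueness argument (essentially your Riccati computation) to upgrade $C(K)=C(K^*)$ to $K=K^*$. You should also record the standing assumption that $K$ is mean-square stabilizing, since otherwise $P_K$ and $\nabla C(K)$ are not defined.
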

In other words, so long as $\Sigma_K$ is full rank, stationarity is both necessary and sufficient for global optimality, as for convex functions. Note that it is not sufficient to just have multiplicative noise in the dynamics with a deterministic initial state $x_0$ to ensure that $\Sigma_K$ is full rank. To see this, observe that if $x_0=0$ and $\Sigma_0 = 0$ then $\Sigma_K=0$, which is clearly rank deficient. By contrast, additive noise is sufficient to ensure that $\Sigma_K$ is full rank with a deterministic initial state $x_0$, although we will not consider this setting. Using a random initial state with $\Sigma_0 \succ 0$ ensures rank$(\Sigma_K)=n$ and thus $\nabla C(K)=0$ implies $K=K^*$.

Although the gradient of the multiplicative noise LQR cost is not globally Lipschitz continuous, it is locally Lipschitz continuous over any subset of its domain (i.e., over any set of mean-square stabilizing gain matrices). The gradient domination is then sufficient to show that policy gradient descent will converge to the optimal gains at a linear rate (a short proof of this fact for globally Lipschitz functions is given in \cite{Karimi2016}). We prove this convergence of policy gradient to the optimum feedback gain by bounding the local Lipschitz constant in terms of the problem data, which bounds the maximum step size and the convergence rate.

\subsection{Additional Setup Lemmas}
Following \cite{Fazel2018} we refer to Lipschitz continuity of the gradient as ($\mathcal{C}^1$-)smoothness, and so this section deals with showing that the LQR cost satisfies an expression that is almost of the exact form of a Lipschitz continuous gradient.
\begin{Lem}[Almost-smoothness] \label{lemma:almost_smooth} 
    The LQR-with-multiplicative-noise cost $C(K)$ satisfies the almost-smoothness expression
    \begin{align}
        C(K{^\prime}) - C(K) &= 2\Tr\big[ \Sigma_{K{^\prime}} \Delta^\intercal E_{K} \big] + \Tr\big[ \Sigma_{K{^\prime}} \Delta^\intercal R_{K}\Delta \big] .
    \end{align}
\end{Lem}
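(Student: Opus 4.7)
The plan is to chain together two facts that are already available: the value-difference identity (Lemma~\ref{lemma:value_difference}) and the definition of the aggregate state covariance $\Sigma_{K'}$. Since the cost is $C(K) = \mathbb{E}_{x_0} V_K(x_0)$, the left-hand side decomposes as
\[
C(K') - C(K) = \mathbb{E}_{x_0}\bigl[V_{K'}(x_0) - V_K(x_0)\bigr],
\]
so the first step is to apply the first part of Lemma~\ref{lemma:value_difference} pointwise in $x_0$ to replace the value difference by an expected sum of advantages $\mathcal{A}_K$ along the trajectory $\{x_t\}_{K',x_0}$ driven by the new gain $K'$.

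Next, because the advantage is evaluated at $(x, K'x)$, I would invoke the closed-form expression
\[
\mathcal{A}_K(x, K'x) = 2 x^\intercal \Delta^\intercal E_K x + x^\intercal \Delta^\intercal R_K \Delta x,
\]
also provided by Lemma~\ref{lemma:value_difference}, and plug in $x = \{x_t\}_{K',x_0}$. Then a cyclic permutation inside the trace turns each quadratic form into $\Tr\bigl[\{x_t\}_{K',x_0} \{x_t\}_{K',x_0}^\intercal \cdot(\cdot)\bigr]$, with the two matrices $\Delta^\intercal E_K$ and $\Delta^\intercal R_K \Delta$ being independent of $t$, of $x_0$, and of the multiplicative noises, so they can be pulled out of the triple expectation $\mathbb{E}_{x_0}\mathbb{E}_{\delta,\gamma}\sum_{t=0}^\infty$.

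Finally, the remaining factor $\mathbb{E}_{x_0}\mathbb{E}_{\delta,\gamma}\sum_{t=0}^\infty \{x_t\}_{K',x_0}\{x_t\}_{K',x_0}^\intercal$ is precisely $\Sigma_{K'}$ by the definition given just before Lemma~\ref{lemma:mss_char}, which immediately produces
\[
C(K') - C(K) = 2\Tr\bigl[\Sigma_{K'} \Delta^\intercal E_K\bigr] + \Tr\bigl[\Sigma_{K'} \Delta^\intercal R_K \Delta\bigr].
\]
There is no real obstacle here; the identity is essentially a rewriting of the value-difference lemma after taking the outer expectation over $x_0$ and swapping the sum/expectation with the trace. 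The only point requiring mild care is the implicit use of Fubini/Tonelli to interchange the summation $\sum_{t=0}^\infty$ with the expectations, which is legitimate provided $K' \in \mathcal{K}$ (otherwise both sides are $+\infty$ and the identity holds trivially in the extended-real sense already set up for $C$).
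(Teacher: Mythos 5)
Your proposal is correct and follows essentially the same route as the paper's proof: express the cost difference as the expected sum of advantages via Lemma~\ref{lemma:value_difference}, substitute the closed-form advantage expression at $(x, K'x)$, and collapse the resulting sum of outer products into $\Sigma_{K'}$ by its definition. The only addition is your explicit remark about the Fubini/Tonelli interchange, which the paper leaves implicit.
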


\begin{proof}
As in the gradient domination proof, we express the cost difference in terms of the advantage by taking expectation over the initial states to obtain 
\begin{align}
    C(K{^\prime}) - C(K) &= \underset{x_0}{\mathbb{E}} \bigg[ \sum_{t=0}^\infty \mathcal{A}_{K}\big(\{x_t\}_{K{^\prime},x} \ , \ \{u_t\}_{K{^\prime},x}\big)\bigg] .
\end{align}
From the value difference lemma for the advantage we have
\begin{align}
    \mathcal{A}_{K}(x,K{^\prime} x) &= 2x^\intercal\Delta^\intercal E_{K} x + x^\intercal \Delta^\intercal R_{K} \Delta x .
\end{align}
Noting that $\{u_t\}_{K^\prime,x}=K^\prime x$ we substitute to obtain
\begin{align}
     C(K{^\prime}) - C(K) & = \underset{x_0}{\mathbb{E}} \bigg[ \sum_{t=0}^\infty 2\{x_t\}_{K{^\prime},x}^\intercal\Delta^\intercal E_{K} \{x_t\}_{K{^\prime},x} 
    + \{x_t\}_{K^\prime,x}^\intercal \Delta^\intercal R_K \Delta\{x_t\}_{K{^\prime},x} \big)\bigg] .
\end{align}
Using the definition of $\Sigma_{K^\prime}$ completes the proof.
\end{proof}
\begin{Rem}
For small deviations $K^\prime - K$ the equation in the almost-smoothness lemma exactly describes a Lipschitz continuous gradient. The naming should not be taken to imply that the LQRm cost is not smooth, but rather that the equation as stated does not immediately yield a Lipschitz constant; indeed the Lipschitz constant is what much of the later proofs go towards bounding (implicitly) i.e. by bounding higher-order terms which must be accounted for when $K^\prime \neq K$.

To be specific, a Lipschitz continuous gradient to $C(K)$ implies there exists a Lipschitz constant $L$ such that
\begin{align}
    C(K^\prime) - C(K) \leq \Tr(\nabla C(K)^\intercal \Delta) + \frac{L}{2} \Tr (\Delta^\intercal \Delta)
\end{align}
for all $K^\prime$, $K$. This is the quadratic upper bound which is used e.g. in Thm. 1 of \cite{Karimi2016} to prove convergence of gradient descent on a gradient dominated objective function.
The ``almost''-smoothness condition is that
\begin{align}
    C(K{^\prime}) - C(K) &= 2\Tr\big[ \Sigma_{K{^\prime}} \Delta^\intercal E_{K} \big] + \Tr\big[ \Sigma_{K{^\prime}} \Delta^\intercal R_{K}\Delta \big] .
\end{align}
For $K^\prime \approx K$ we have $\Sigma_{K{^\prime}} \approx \Sigma_K$ so using $\nabla C(K) = 2 E_K \Sigma_K$ we have
\begin{align}
    C(K{^\prime}) - C(K) 
    &\approx 2\Tr\big[ \Sigma_{K} \Delta^\intercal E_{K} \big] + \Tr\big[ \Sigma_{K} \Delta^\intercal R_{K}\Delta \big] \\
    &= 2\Tr\big[ \Delta^\intercal E_{K} \Sigma_{K} \big] + \Tr\big[ \Sigma_{K} \Delta^\intercal R_{K}\Delta \big] \\
    &= \Tr\big[ \Delta^\intercal \nabla C(K) \big] + \Tr\big[ \Sigma_{K} \Delta^\intercal R_{K}\Delta \big] \\
    &= \Tr\big[ \nabla C(K)^\intercal \Delta \big] + \Tr\big[ \Sigma_{K} \Delta^\intercal R_{K}\Delta \big] \\
    &\leq \Tr\big[ \nabla C(K)^\intercal \Delta \big] + \| \Sigma_{K} \| \Tr\big[ \Delta^\intercal R_{K}\Delta \big] \\
    &= \Tr\big[ \nabla C(K)^\intercal \Delta \big] + \| \Sigma_{K} \| \Tr\big[  R_{K} \Delta \Delta^\intercal \big] \\
    &\leq \Tr\big[ \nabla C(K)^\intercal \Delta \big] + \| \Sigma_{K} \| \| R_{K} \| \Tr\big[ \Delta \Delta^\intercal \big] \\
    &= \Tr\big[ \nabla C(K)^\intercal \Delta \big] + \| \Sigma_{K} \| \| R_{K} \| \Tr\big[ \Delta^\intercal \Delta  \big]
\end{align}
which is exactly of the form of the Lipschitz gradient condition with Lipschitz constant $2 \| \Sigma_{K} \| \| R_{K} \|$. Note this is not a global Lipschitz condition since $2 \| \Sigma_{K} \| \| R_{K} \|$ becomes unbounded as $K$ becomes mean-square destabilizing, but rather a local Lipschitz condition since $2 \| \Sigma_{K} \| \| R_{K} \|$ is bounded on any sublevel set of $C(K)$.
\end{Rem}

\begin{Lem}[Cost bounds] \label{lemma:cost_bounds}
    We always have
    \begin{align}
        \|P_K\| \leq \frac{C(K)}{{\underline{\sigma} (\Sigma_0)}} \quad \text{and} \quad \|\Sigma_K\| \leq \frac{C(K)}{\underline{\sigma}(Q)} .
    \end{align}
\end{Lem}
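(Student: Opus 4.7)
The plan is to derive both bounds directly from the two equivalent closed-form expressions for $C(K)$, namely $C(K) = \Tr(P_K \Sigma_0) = \Tr(Q_K \Sigma_K)$, combined with two elementary linear-algebraic facts: (i) for any PSD matrix $M$, $\Tr(M) \geq \|M\|$ (since the spectral norm equals the largest eigenvalue and the trace is the sum of all nonnegative eigenvalues); and (ii) for PSD $M$ and $N$, $\Tr(MN) \geq \underline{\sigma}(N) \Tr(M)$, which follows from $N \succeq \underline{\sigma}(N) I$ and cyclic invariance/positivity of the trace of products of PSD matrices.

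For the first bound, I would start from $C(K) = \Tr(P_K \Sigma_0)$, use $\Sigma_0 \succeq \underline{\sigma}(\Sigma_0) I \succ 0$ together with $P_K \succeq 0$ (which follows from the generalized Lyapunov equation \eqref{eq:glyap1} since $Q_K \succeq 0$ and the remaining terms on the RHS are PSD) to obtain
\begin{align}
    C(K) = \Tr(P_K \Sigma_0) \geq \underline{\sigma}(\Sigma_0) \Tr(P_K) \geq \underline{\sigma}(\Sigma_0) \|P_K\|,
\end{align}
and then divide through by $\underline{\sigma}(\Sigma_0) > 0$.

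For the second bound, I would instead use $C(K) = \Tr(Q_K \Sigma_K)$. Since $Q_K = Q + K^\intercal R K \succeq Q$ and $\Sigma_K \succeq 0$ (the latter from its definition as a sum of covariance matrices, or equivalently from \eqref{eq:glyap2}), I get
\begin{align}
    C(K) = \Tr(Q_K \Sigma_K) \geq \Tr(Q \Sigma_K) \geq \underline{\sigma}(Q) \Tr(\Sigma_K) \geq \underline{\sigma}(Q) \|\Sigma_K\|,
\end{align}
and again divide through by $\underline{\sigma}(Q) > 0$ (which is positive by the standing assumption $Q \succ 0$).

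There is no real obstacle here; the only subtlety worth flagging is ensuring that all matrices involved are genuinely PSD so that the trace inequalities go the correct way, and that $\underline{\sigma}(Q), \underline{\sigma}(\Sigma_0) > 0$ so the final divisions are valid — both of which are built into the problem setup ($Q \succ 0$ and $\Sigma_0 \succ 0$).
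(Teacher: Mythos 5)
Your proof is correct and takes essentially the same route as the paper: both bounds follow from the two trace expressions $C(K)=\Tr(P_K\Sigma_0)=\Tr(Q_K\Sigma_K)$ together with the PSD trace inequalities. In fact your chain $\Tr(P_K\Sigma_0)\geq \underline{\sigma}(\Sigma_0)\Tr(P_K)\geq \underline{\sigma}(\Sigma_0)\|P_K\|$ is the correctly ordered version of the intermediate steps the paper writes down, so nothing further is needed.
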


\begin{proof}
The proof follows that in  \cite{Fazel2018} exactly.
    The cost is lower bounded as
    \begin{align}
        C(K) &= \Tr\left[ P_K \Sigma_0 \right] \geq \|P_K\| \Tr(\Sigma_0)  \geq \|P_K\| \underline{\sigma}(\Sigma_0)  ,
    \end{align}
    which gives the first inequality.
    The cost is also lower bounded as
    \begin{align}
        C(K) &= \Tr\left[ Q_K \Sigma_K \right] \geq \|\Sigma_K\| \Tr(Q_K) \geq \|\Sigma_K\| \underline{\sigma}(Q_K) \geq \|\Sigma_K\| \underline{\sigma}(Q) ,
    \end{align}
    which gives the second inequality.
\end{proof}

\section{Global Convergence of Policy Gradient in the~Model-Based~Setting} \label{sec:conv}
In this section we show that the policy gradient algorithm and two important variants for multiplicative noise LQR converge globally to the optimal policy. In contrast with \cite{Fazel2018}, the policies we obtain are robust to uncertainties and inherent stochastic variations in the system dynamics. 
We analyze three policy gradient algorithm variants:
\begin{description}
    \item [Gradient]                \tabto{25mm} $K_{s+1} = K_s - \eta \nabla C(K_s) $
    \item [Natural Gradient]        \tabto{25mm} $K_{s+1} = K_s - \eta \nabla C(K_s) \Sigma_{K_s}^{-1}$
    \item [Gauss-Newton]            \tabto{25mm} $K_{s+1} = K_s - \eta R_{K_s}^{-1} \nabla C(K_s) \Sigma_{K_s}^{-1}$
\end{description}

The more elaborate natural gradient and Gauss-Newton variants provide superior convergence rates and simpler proofs. A development of the natural policy gradient is given in \cite{Fazel2018} building on ideas from \cite{Kakade2002}. The Gauss-Newton step with step size $\frac{1}{2}$ is in fact identical to the policy improvement step in policy iteration 
(a short derivation is given shortly) and was first studied for deterministic LQR in \cite{Hewer1971}.
This was extended to a model-free setting using policy iteration and Q-learning in \cite{Bradtke1994}, proving asymptotic convergence of the gain matrix to the optimal gain matrix. For multiplicative noise LQR, we have the following results.

\subsection{Derivation of the Gauss-Newton step from policy iteration} \label{app:GNstep}
We start with the policy improvement expression for the LQR problem:
\begin{align}
    u_{s+1} &= \underset{u}{\text{argmin}}\bigg[ x^\intercal Q x + u^\intercal R u + \underset{\delta,\gamma}{\mathbb{E}} V_{K_s}(\widetilde{A}x+\widetilde{B}u) \bigg] = \underset{u}{\text{argmin}}\bigg[ \mathcal{Q}_{K_s}(x,u) \bigg] .
\end{align}
Stationary points occur when the gradient is zero, so differentiating with respect to $u$ we obtain
\begin{align} \label{eq:grad_u}
    \frac{\partial}{\partial u} \mathcal{Q}_{K_s}(x,u) = 2\bigg[ (B^\intercal P_{K_s} A)x + (R+B^\intercal P_{K_s} B + \sum_{j=1}^q \beta_j B_j^\intercal P_{K_s} B_j)u \bigg] .
\end{align}
Setting \eqref{eq:grad_u} to zero and solving for $u$ gives
\begin{align}
    u_{s+1} = -(R+B^\intercal P_{K_s} B + \sum_{j=1}^q \beta_j B_j^\intercal P_{K_s} B_j)^{-1}(B^\intercal P_{K_s} A)x .
\end{align}
Differentiating \eqref{eq:grad_u} with respect to $u$ we obtain
\begin{align}
    \frac{\partial^2}{{\partial u}^2} \mathcal{Q}_{K_s}(x,u) = 2\bigg[R+B^\intercal P_{K_s} B + \sum_{j=1}^q \beta_j B_j^\intercal P_{K_s} B_j \bigg] \succeq 0 \ \forall \ u ,
\end{align}
confirming that the stationary point is indeed a global minimum.

Thus the policy iteration gain matrix update is 
\begin{align} \label{eq:policy_iteration}
    K_{s+1} = -(R+B^\intercal P_{K_s} B + \sum_{j=1}^q \beta_j B_j^\intercal P_{K_s} B_j)^{-1}(B^\intercal P_{K_s} A) .
\end{align}
This can be re-written in terms of the gradient as so:
\begin{align}
  K_{s+1}  &= K_s - K_s - (R+B^\intercal P_{K_s} B + \sum_{j=1}^q \beta_j B_j^\intercal P_{K_s} B_j)^{-1}(B^\intercal P_{K_s} A) \\
            &= K_s - K_s - R_{K_s}^{-1} B^\intercal P_{K_s} A = K_s - R_{K_s}^{-1}\bigg[R_{K_s} K_s + B^\intercal P_{K_s} A \bigg] \\
            &= K_s - R_{K_s}^{-1}\bigg[R_{K_s} K_s + B^\intercal P_{K_s} A \bigg]\Sigma_{K_s}\Sigma_{K_s}^{-1} = K_s - R_{K_s}^{-1}\bigg[(R_{K_s} K_s + B^\intercal P_{K_s} A) \Sigma_{K_s}\bigg]\Sigma_{K_s}^{-1} \\
            &= K_s - \frac{1}{2} R_{K_s}^{-1}\nabla C(K_s) \Sigma_{K_s}^{-1} .
\end{align}
Parameterizing with a step size gives the Gauss-Newton step
\begin{align}
  K_{s+1} &= K_s - \eta R_{K_s}^{-1}\nabla C(K_s) \Sigma_{K_s}^{-1} .
\end{align}

\subsection{Gauss-Newton Descent}
\begin{Thm}[Gauss-Newton convergence] \label{thm:gauss_newton_exact} 
    Using the Gauss-Newton step
    \begin{align}
        K_{s+1} = K_s - \eta R_{K_s}^{-1} \nabla C(K_s) \Sigma_{K_s}^{-1} 
    \end{align}
    with step size $0 < \eta \leq \frac{1}{2}$ gives global convergence to the optimal gain matrix $K^*$ at a linear rate described by
    \begin{align}
       \frac{ C(K_{s+1}) - C(K^*)}{C(K_{s}) - C(K^*)}
      \leq
      1 - 2 \eta \frac{{\underline{\sigma} (\Sigma_0)}}{\|\Sigma_{K^*}\|} .
    \end{align}
\end{Thm}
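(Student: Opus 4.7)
\emph{Plan.} I would combine the almost-smoothness identity from Lemma~\ref{lemma:almost_smooth} with the intermediate bound $C(K_s)-C(K^*)\leq \|\Sigma_{K^*}\|\Tr[E_{K_s}^\intercal R_{K_s}^{-1} E_{K_s}]$ that appears as \eqref{eq:grad_dom_GN} in the proof of Lemma~\ref{lemma:gradient_dominated}. The Gauss-Newton direction is engineered so that the linear and quadratic terms in almost-smoothness collapse onto the single trace $\Tr[\Sigma_{K_{s+1}}E_{K_s}^\intercal R_{K_s}^{-1}E_{K_s}]$ with an $\eta$-dependent coefficient, and this is exactly the quantity controlling suboptimality in \eqref{eq:grad_dom_GN}, so a one-line chain of inequalities produces a geometric contraction.

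\emph{Execution.} Using $\nabla C(K_s) = 2 E_{K_s}\Sigma_{K_s}$ from Lemma~\ref{lemma:policy_grad_exp}, the update becomes $\Delta \Let K_{s+1}-K_s = -2\eta R_{K_s}^{-1}E_{K_s}$, so that $\Delta^\intercal E_{K_s} = -2\eta E_{K_s}^\intercal R_{K_s}^{-1} E_{K_s}$ and $\Delta^\intercal R_{K_s}\Delta = 4\eta^2 E_{K_s}^\intercal R_{K_s}^{-1} E_{K_s}$. Plugging these into Lemma~\ref{lemma:almost_smooth} collapses both terms and gives
\begin{align*}
  C(K_{s+1})-C(K_s) = -4\eta(1-\eta)\,\Tr\!\big[\Sigma_{K_{s+1}} E_{K_s}^\intercal R_{K_s}^{-1} E_{K_s}\big].
\end{align*}
Since $\Sigma_{K_{s+1}}\succeq \Sigma_0$ by \eqref{eq:TK_char} and $E_{K_s}^\intercal R_{K_s}^{-1}E_{K_s}\succeq 0$, the PSD trace inequality $\Tr(AB)\geq \underline{\sigma}(A)\Tr(B)$ yields
\begin{align*}
  C(K_{s+1})-C(K_s) \leq -4\eta(1-\eta)\,\underline{\sigma}(\Sigma_0)\,\Tr\!\big[E_{K_s}^\intercal R_{K_s}^{-1}E_{K_s}\big].
\end{align*}
Applying \eqref{eq:grad_dom_GN} to lower bound the trace by $(C(K_s)-C(K^*))/\|\Sigma_{K^*}\|$ and subtracting $C(K^*)$ on both sides gives the geometric decrease
\begin{align*}
  C(K_{s+1})-C(K^*) \leq \Big(1 - 4\eta(1-\eta)\tfrac{\underline{\sigma}(\Sigma_0)}{\|\Sigma_{K^*}\|}\Big)\big(C(K_s)-C(K^*)\big),
\end{align*}
and the stated rate follows from the elementary inequality $4\eta(1-\eta)\geq 2\eta$ on $\eta\in(0,\tfrac{1}{2}]$. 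Iterating over $s$ yields the global convergence.

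\emph{Main obstacle.} The calculation tacitly requires $K_{s+1}\in\mathcal{K}$ so that $\Sigma_{K_{s+1}}$ is finite, and this must be propagated along the iteration. At $\eta=\tfrac{1}{2}$ the update coincides with the policy iteration step derived in Section~\ref{app:GNstep}, whose stability-preserving nature follows the classical Hewer-style argument via the generalized Riccati recursion with the extra multiplicative-noise terms. For $0<\eta<\tfrac{1}{2}$ I would parameterize $K(\tau)=K_s-\tau R_{K_s}^{-1}E_{K_s}$ and combine openness of $\mathcal{K}$, continuity of $C$ on $\mathcal{K}$ with blow-up at $\partial\mathcal{K}$, and the local form of the identity above (which forces $C(K(\tau))\leq C(K_s)$ wherever $K(\tau)\in\mathcal{K}$) to show that the segment cannot leave $\mathcal{K}$ on $[0,1]$; specializing at $\tau=2\eta$ then validates substituting $\Sigma_{K_{s+1}}$ above and closes the induction.
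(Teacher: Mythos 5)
Your proof is correct and follows essentially the same route as the paper's: substitute $\Delta=-2\eta R_{K_s}^{-1}E_{K_s}$ into the almost-smoothness identity of Lemma~\ref{lemma:almost_smooth}, collapse both terms onto $\Tr[\Sigma_{K_{s+1}}E_{K_s}^\intercal R_{K_s}^{-1}E_{K_s}]$, lower-bound $\Sigma_{K_{s+1}}$ by $\Sigma_0$, and invoke \eqref{eq:grad_dom_GN}. Your closing remark on propagating $K_{s+1}\in\mathcal{K}$ along the iteration addresses a point the paper's proof leaves implicit, and is a welcome addition rather than a deviation.
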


\begin{proof}
The next-step gain matrix difference is 
\begin{align}
    \Delta  & = K_{s+1} - K_s = -\eta R_{K_s}^{-1} \nabla C(K_s) \Sigma_{K_s}^{-1} = -2\eta R_{K_s}^{-1} E_{K_s} .
\end{align}
Using the almost-smoothness Lemma \ref{lemma:almost_smooth} and substituting in the next-step gain matrix difference we obtain
\begin{align}
    C({K_{s+1}}) - C({K_s})
    &= 2\Tr\big[ \Sigma_{{K_{s+1}}} \Delta^\intercal E_{{K_s}} \big] + \Tr\big[ \Sigma_{{K_{s+1}}} \Delta^\intercal R_{K_s} \Delta \big] \\
    &= 2\Tr\big[ \Sigma_{K_{s+1}} (-2\eta R_{K_s}^{-1} E_{K_s})^\intercal E_{K_s} \big]
    + \Tr\big[ \Sigma_{K_{s+1}} (-2\eta R_{K_s}^{-1} E_{K_s})^\intercal R_{K_s} (-2\eta R_{K_s}^{-1} E_{K_s}) \big] \\
    &= 4(-\eta+\eta^2) \Tr\big[ \Sigma_{K_{s+1}} E_{K_s}^\intercal R_{K_s}^{-1}  E_{K_s} \big] .
\end{align}
By hypothesis we require $0 \leq \eta \leq \frac{1}{2}$ so we have
\begin{align*}
    C({K_{s+1}}) - C({K_s}) &\leq -2 \eta \Tr\big[ \Sigma_{{K_{s+1}}} E_{K_s}^\intercal R_{K_s}^{-1}  E_{{K_s}} \big] \leq -2 \eta  \underline{\sigma}(\Sigma_{{K_{s+1}}}) \Tr\big[ E_{K_s}^\intercal R_{K_s}^{-1}  E_{{K_s}} \big] \\
                            &\leq -2 \eta  {\underline{\sigma} (\Sigma_0)} \Tr\big[ E_{K_s}^\intercal R_{K_s}^{-1}  E_{{K_s}} \big] .
\end{align*}
Recalling and substituting in \eqref{eq:grad_dom_GN} we obtain
\begin{align}
    C({K_{s+1}}) - C({K_s}) &\leq -2 \eta \frac{ {\underline{\sigma} (\Sigma_0)}}{\|\Sigma_{K^*}\|} \big(C(K_s) - C(K^*) \big) .
\end{align}
Adding $C(K_s) - C(K^*)$ to both sides and rearranging completes the proof.
\end{proof}

\subsection{Natural Policy Gradient Descent}

\begin{Thm}[Natural policy gradient convergence] \label{thm:nat_grad_exact_convergence} 
    Using the natural policy gradient step
    \begin{align} \label{eq:nat_grad_update}
        K_{s+1} = K_s - \eta \nabla C(K_s) \Sigma_{K_s}^{-1} 
    \end{align}
    with step size $0 < \eta \leq c_{npg}$ where 
    \begin{align*}
        c_{npg} \Let \frac{1}{2} \bigg(\|R\| + \Big(\|B\|^2 + \sum_{j=1}^q \beta_j \|B_j\|^2\Big) \frac{C(K_0)}{{\underline{\sigma} (\Sigma_0)}}\bigg)^{-1}
    \end{align*}
    gives global convergence to the optimal gain matrix $K^*$ at a linear rate described by
    \begin{align}
      \frac{C(K_{s+1}) - C(K^*)}{ C(K_{s}) - C(K^*)}
      \leq 
       1 - 2 \eta \frac{\underline{\sigma}(R){\underline{\sigma} (\Sigma_0)} }{\|\Sigma_{K^*}\|} .
    \end{align}
\end{Thm}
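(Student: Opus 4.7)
The plan is to mirror the Gauss-Newton argument but handle the absence of the $R_{K_s}^{-1}$ preconditioning by absorbing the resulting second-order term with a sufficiently small step size. First I would write out the one-step difference $\Delta = K_{s+1} - K_s = -\eta \nabla C(K_s) \Sigma_{K_s}^{-1} = -2\eta E_{K_s}$ (using Lemma \ref{lemma:policy_grad_exp}) and plug it into the almost-smoothness identity (Lemma \ref{lemma:almost_smooth}):
\begin{align*}
C(K_{s+1}) - C(K_s)
&= -4\eta \,\Tr\!\big[\Sigma_{K_{s+1}} E_{K_s}^\intercal E_{K_s}\big] + 4\eta^2 \,\Tr\!\big[\Sigma_{K_{s+1}} E_{K_s}^\intercal R_{K_s} E_{K_s}\big] \\
&\leq -4\eta\big(1 - \eta \|R_{K_s}\|\big)\,\Tr\!\big[\Sigma_{K_{s+1}} E_{K_s}^\intercal E_{K_s}\big].
\end{align*}
If $\eta \|R_{K_s}\| \leq 1/2$, the prefactor is at least $2\eta$, and then bounding $\Tr[\Sigma_{K_{s+1}} E_{K_s}^\intercal E_{K_s}] \geq \underline{\sigma}(\Sigma_{K_{s+1}}) \Tr[E_{K_s}^\intercal E_{K_s}] \geq \underline{\sigma}(\Sigma_0) \Tr[E_{K_s}^\intercal E_{K_s}]$ (using $\Sigma_K \succeq \Sigma_0$, exactly as in the proof of Lemma \ref{lemma:gradient_dominated}) reduces matters to the natural-gradient form of gradient domination in \eqref{eq:grad_dom_ng}, which yields
\begin{align*}
C(K_{s+1}) - C(K_s) \;\leq\; -2\eta \,\frac{\underline{\sigma}(R)\,\underline{\sigma}(\Sigma_0)}{\|\Sigma_{K^*}\|}\,\big(C(K_s) - C(K^*)\big).
\end{align*}
Adding $C(K_s)-C(K^*)$ to both sides gives the claimed contraction.

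It remains to certify the step-size condition $\eta\|R_{K_s}\|\leq 1/2$ at \emph{every} iterate, which is the only delicate point. By the triangle inequality and submultiplicativity,
\begin{align*}
\|R_{K_s}\| \;\leq\; \|R\| + \Big(\|B\|^2 + \sum_{j=1}^q \beta_j \|B_j\|^2\Big)\|P_{K_s}\|,
\end{align*}
and by Lemma \ref{lemma:cost_bounds} we have $\|P_{K_s}\| \leq C(K_s)/\underline{\sigma}(\Sigma_0)$. Hence the condition $\eta \leq c_{npg}$ is immediately equivalent to $\eta\|R_{K_s}\|\leq 1/2$ \emph{provided} $C(K_s)\leq C(K_0)$.

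The main (only) obstacle is therefore to ensure monotone cost decrease so that this uniform bound on $\|R_{K_s}\|$ propagates along the trajectory. I would close the argument by induction on $s$: the base case $s=0$ is immediate, and assuming $C(K_s) \leq C(K_0)$, the display above with $\eta \leq c_{npg}$ yields $C(K_{s+1}) \leq C(K_s) \leq C(K_0)$, completing both the induction and the linear convergence bound. Implicit in this induction is that each iterate $K_s$ remains mean-square stabilizing (i.e. $K_s \in \mathcal{K}$), which is automatic from the finiteness of $C(K_s)$ via the cost characterization following Lemma \ref{lemma:mss_char}.
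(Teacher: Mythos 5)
Your proposal is correct and follows essentially the same route as the paper's proof: the same substitution $\Delta = -2\eta E_{K_s}$ into the almost-smoothness identity, the same reduction to $\Tr[E_{K_s}^\intercal E_{K_s}]$ via $\Sigma_{K_{s+1}} \succeq \Sigma_0$, the same appeal to the natural-gradient form of gradient domination \eqref{eq:grad_dom_ng}, and the same induction propagating $C(K_s) \leq C(K_0)$ to certify $\eta \|R_{K_s}\| \leq 1/2$ at every iterate. No gaps beyond those already present in the paper's own argument.
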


\begin{proof}
First we bound the one-step progress, where the step size depends explicitly on the current gain  $K_s$.
Using the update \eqref{eq:nat_grad_update}, the next-step gain matrix difference is 
\begin{align}
    \Delta = K_{s+1} - K_s  &= -\eta \nabla C(K_s) \Sigma_{K_s}^{-1} = -2\eta E_{K_s} .
\end{align}
Using Lemma \ref{lemma:almost_smooth} and substituting we obtain
\begin{align*}
    & C({K_{s+1}}) - C({K_s}) \\
    &= 2\Tr\big[ \Sigma_{{K_{s+1}}} \Delta^\intercal E_{{K_s}} \bigg] + \Tr\big[ \Sigma_{{K_{s+1}}} \Delta^\intercal R_{K_s} \Delta \big] \\
    &= 2\Tr\big[ \Sigma_{K_{s+1}} (-2\eta E_{K_s})^\intercal E_{K_s} \big] 
    + \Tr\big[ \Sigma_{K_{s+1}} (-2\eta E_{K_s})^\intercal R_{K_s} (-2\eta E_{K_s}) \big] \\
    &= -4\eta \Tr\big[ \Sigma_{K_{s+1}} E_{K_s}^\intercal E_{K_s} \big] +4\eta^2 \Tr\big[ \Sigma_{K_{s+1}} E_{K_s}^\intercal R_{K_s} E_{K_s} \big] \\
    &\leq 4 (-\eta + \eta^2 \|R_{K_s}\|) \Tr\big[ \Sigma_{K_{s+1}} E_{K_s}^\intercal E_{K_s} \big] .
\end{align*}
If we choose step size $0 < \eta \leq \frac{1}{2 \|R_{K_s}\|}$, then
\begin{align}
    C({K_{s+1}}) - C({K_s}) &\leq -2\eta \Tr\big[ \Sigma_{K_{s+1}} E_{K_s}^\intercal E_{K_s} \big] \leq -2\eta \underline{\sigma}(\Sigma_{K_{s+1}}) \Tr\big[E_{K_s}^\intercal E_{K_s} \big] \leq -2\eta {\underline{\sigma} (\Sigma_0)} \Tr\big[E_{K_s}^\intercal E_{K_s} \big] .
\end{align}
Recalling and substituting \eqref{eq:grad_dom_ng} we obtain
\begin{align}
    C({K_{s+1}}) - C({K_s}) \leq -2\eta {\underline{\sigma} (\Sigma_0)} \frac{\underline{\sigma}(R)}{\|\Sigma_{K^*}\|} \big( C(K_s) - C(K^*) \big) .
\end{align}
Adding $C(K_s) - C(K^*)$ to both sides and rearranging gives the one step progress bound
\begin{align} \label{eq:nat_grad_exact_one_step}
  \frac{ C(K_{s+1}) - C(K^*)}{C(K_{s}) - C(K^*)} \leq  1 - 2 \eta \frac{\underline{\sigma}(R) {\underline{\sigma} (\Sigma_0)}}{\|\Sigma_{K^*}\|} .
\end{align}
Next, using the cost bound in Lemma \ref{lemma:cost_bounds}, the triangle inequality, and submultiplicativity of spectral norm we have
\begin{align}
    \frac{1}{\|R_K\|}   & = \frac{1}{\| R + B^\intercal P_{K} B + \sum_{j=1}^q \beta_j B_j^\intercal P_K B_j \|} \\
                        & \geq \frac{1}{\|R\| + (\|B\|^2 + \sum_{j=1}^q \beta_j \|B_j\|^2) \|P_K\|} \geq \frac{1}{\|R\| + (\|B\|^2 + \sum_{j=1}^q \beta_j \|B_j\|^2) \frac{C(K)}{{\underline{\sigma} (\Sigma_0)}}} .
\end{align}
Accordingly, choosing the step size as $0 < \eta \leq c_{npg}$ ensures \eqref{eq:nat_grad_exact_one_step} holds at the first step. This ensures that $C(K_1) \leq C(K_0)$ which in turn ensures
\begin{align}
    \eta &\leq \frac{1}{\|R\| + (\|B\|^2 + \sum_{j=1}^q \beta_j \|B_j\|^2) \frac{C(K_0)}{{\underline{\sigma} (\Sigma_0)}}} \leq \frac{1}{\|R\| + (\|B\|^2 + \sum_{j=1}^q \beta_j \|B_j\|^2) \frac{C(K_1)}{{\underline{\sigma} (\Sigma_0)}}} \leq \frac{1}{\|R_{K_1}\|}
\end{align}
which allows \eqref{eq:nat_grad_exact_one_step} to be applied at the next step as well. Proceeding inductively by applying \eqref{eq:nat_grad_exact_one_step} at each successive step completes the proof.
\end{proof}

\subsection{Policy Gradient Descent}

\begin{Thm}[Policy gradient convergence] \label{thm:grad_exact_convergence} 
    Using the policy gradient step
    \begin{align}
        K_{s+1} = K_s - \eta \nabla C(K_s)
    \end{align}
    with step size $0 < \eta \leq c_{pg}$ gives global convergence to the optimal gain matrix $K^*$ at a linear rate described by
    \begin{align}
      \frac{C(K_{s+1}) - C(K^*)}{C(K_{s}) - C(K^*) }
      \leq 
      1 - 2 \eta \frac{ \underline{\sigma}(R){\underline{\sigma} (\Sigma_0)}^2}{\|\Sigma_{K^*}\|}
    \end{align}
    where $c_{\text{pg}}$ is a polynomial in the problem data $A$, $B$, $\alpha_i$, $\beta_j$, $A_i$, $B_j$, $Q$, $R$, $\Sigma_0$, $K_0$ given in the proof in Appendix \ref{appendix:model_based_gd}.
\end{Thm}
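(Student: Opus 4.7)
The plan is to mirror the strategy used for the Gauss-Newton and natural gradient cases, but with the significant added difficulty of controlling the second-order term in $\eta$ when the descent direction is the full gradient $\nabla C(K_s) = 2 E_{K_s}\Sigma_{K_s}$ rather than a preconditioned version. The starting point is the almost-smoothness identity (Lemma \ref{lemma:almost_smooth}) applied with $\Delta = K_{s+1}-K_s = -\eta\nabla C(K_s) = -2\eta E_{K_s}\Sigma_{K_s}$, yielding
\begin{align*}
C(K_{s+1})-C(K_s) = -4\eta\,\Tr\!\big[\Sigma_{K_{s+1}}\Sigma_{K_s}E_{K_s}^{\intercal}E_{K_s}\big] + 4\eta^{2}\,\Tr\!\big[\Sigma_{K_{s+1}}\Sigma_{K_s}E_{K_s}^{\intercal}R_{K_s}E_{K_s}\Sigma_{K_s}\big].
\end{align*}
The goal is to bound the quadratic-in-$\eta$ term so that at least half of the linear-in-$\eta$ term survives, then apply the gradient domination bound \eqref{eq:grad_dom_grad_desc} to convert the remaining term into a multiple of $C(K_s)-C(K^*)$.

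The main obstacle, absent from the natural gradient analysis, is that both factors $\Sigma_{K_{s+1}}$ and $\Sigma_{K_s}$ appear in the quadratic term, and the trailing $\Sigma_{K_s}$ on the right of $E_{K_s}\Sigma_{K_s}$ must be handled carefully because it can amplify the step. To address this, I would first prove a perturbation lemma: if $\eta$ is chosen small enough that $\|\Delta\|_F$ is sufficiently small relative to the spectral gap $1-\rho(\mathcal{F}_{K_s})$, then $K_{s+1}\in\mathcal{K}$ and moreover $\|\Sigma_{K_{s+1}}-\Sigma_{K_s}\| \le \tfrac{1}{2}\|\Sigma_{K_s}\|$, so that $\|\Sigma_{K_{s+1}}\|\le \tfrac{3}{2}\|\Sigma_{K_s}\|$. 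Such a perturbation result follows from the Neumann-series expansion of $(I-\mathcal{F}_{K_{s+1}})^{-1}$ around $(I-\mathcal{F}_{K_s})^{-1}$ using the linear structure of $\mathcal{F}_K$ in the multiplicative noise setting; the presence of the extra terms $\sum_i\alpha_i A_i\otimes A_i$ and $\sum_j\beta_j(B_j K)\otimes(B_j K)$ will show up in the bounding constants but does not change the argument's skeleton.

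With this perturbation control in hand, I would upper bound the quadratic term by $6\eta^{2}\|\Sigma_{K_s}\|^{2}\|R_{K_s}\|\,\Tr[\Sigma_{K_s}E_{K_s}^{\intercal}E_{K_s}]$ and the linear term by at most $-4\eta\,\underline{\sigma}(\Sigma_0)\,\Tr[\Sigma_{K_s}E_{K_s}^{\intercal}E_{K_s}]$ (using $\Sigma_{K_{s+1}}\succeq\Sigma_0$). Choosing $\eta$ small enough that $6\eta\|\Sigma_{K_s}\|^{2}\|R_{K_s}\|\le 2\underline{\sigma}(\Sigma_0)$ makes the quadratic term subdominant and yields $C(K_{s+1})-C(K_s)\le -2\eta\,\underline{\sigma}(\Sigma_0)\,\Tr[\Sigma_{K_s}E_{K_s}^{\intercal}E_{K_s}]$. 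Another application of $\Sigma_{K_s}\succeq\Sigma_0$ and \eqref{eq:grad_dom_grad_desc} then gives
\begin{align*}
C(K_{s+1})-C(K_s) \le -2\eta\,\frac{\underline{\sigma}(R)\,\underline{\sigma}(\Sigma_0)^{2}}{\|\Sigma_{K^*}\|}\,(C(K_s)-C(K^*)),
\end{align*}
which is the desired one-step contraction after adding $C(K_s)-C(K^*)$ to both sides.

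Finally, the step size must be chosen uniformly in $s$. As in the natural gradient case, I would invoke Lemma \ref{lemma:cost_bounds} to bound $\|\Sigma_{K_s}\|\le C(K_s)/\underline{\sigma}(Q)$ and $\|P_{K_s}\|\le C(K_s)/\underline{\sigma}(\Sigma_0)$, and hence bound $\|R_{K_s}\|$ and the spectral-gap quantities entering the perturbation lemma in terms of $C(K_s)\le C(K_0)$; this yields an explicit polynomial $c_{\mathrm{pg}}$ in the problem data $A,B,\alpha_i,\beta_j,A_i,B_j,Q,R,\Sigma_0,K_0$. An induction on $s$, started from the sublevel set $\{K:C(K)\le C(K_0)\}$, then shows the step size choice remains valid at every iterate and the geometric rate compounds. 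The hardest technical step I expect to be the explicit perturbation bound for $\Sigma_K$ under changes in $K$, since the multiplicative noise enlarges $\mathcal{F}_K$ relative to the deterministic operator $A_K\otimes A_K$ and therefore tightens admissible step sizes by factors involving $\alpha_i\|A_i\|^{2}$ and $\beta_j\|B_j\|^{2}$; this is precisely where the promised degradation of the convergence rate due to multiplicative noise should enter the constant $c_{\mathrm{pg}}$.
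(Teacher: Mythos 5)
Your overall route is the one the paper takes in Appendix \ref{appendix:model_based_gd}: apply the almost-smoothness identity of Lemma \ref{lemma:almost_smooth} with $\Delta=-2\eta E_{K_s}\Sigma_{K_s}$, control $\|\Sigma_{K_{s+1}}-\Sigma_{K_s}\|$ through an operator perturbation bound on $\mathcal{T}_K$ (your Neumann-series expansion of $(I-\mathcal{F}_{K_{s+1}})^{-1}$ is exactly the content of Lemmas \ref{lemma:TK_norm_bound}, \ref{lemma:F_K_perturbation}, \ref{lemma:T_K_perturbation}, \ref{lemma:Sigma_K_perturbation}), absorb the $\eta^{2}$ term, invoke gradient domination, and induct over the sublevel set $\{K: C(K)\le C(K_0)\}$ with cost-based bounds replacing iterate-dependent quantities. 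However, one step in your one-step estimate would fail as written. You lower-bound the linear term via $\Tr\big[\Sigma_{K_{s+1}}\Sigma_{K_s}E_{K_s}^\intercal E_{K_s}\big]\ge \underline{\sigma}(\Sigma_0)\Tr\big[\Sigma_{K_s}E_{K_s}^\intercal E_{K_s}\big]$ ``using $\Sigma_{K_{s+1}}\succeq\Sigma_0$.'' This amounts to asserting $\Tr\big[(\Sigma_{K_{s+1}}-\underline{\sigma}(\Sigma_0)I)\,\Sigma_{K_s}E_{K_s}^\intercal E_{K_s}\big]\ge 0$, i.e.\ that the trace of a product of three positive semidefinite matrices is nonnegative, which is false in general (the symmetrized product of two PSD matrices can be indefinite). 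The paper avoids this by writing $\Sigma_{K_{s+1}}=\Sigma_{K_s}+(\Sigma_{K_{s+1}}-\Sigma_{K_s})$, keeping the sign-definite term $\Tr\big[\Sigma_{K_s}\Sigma_{K_s}E_{K_s}^\intercal E_{K_s}\big]=\tfrac14\|\nabla C(K_s)\|_F^2$, and bounding the cross term by $\|\Sigma_{K_{s+1}}-\Sigma_{K_s}\|\,\Tr\big[\Sigma_{K_s}E_{K_s}^\intercal E_{K_s}\big]$. This also reveals that your perturbation target $\|\Sigma_{K_{s+1}}-\Sigma_{K_s}\|\le\tfrac12\|\Sigma_{K_s}\|$ is normalized by the wrong quantity: the cross term must be small relative to $\underline{\sigma}(\Sigma_{K_s})\ge\underline{\sigma}(\Sigma_0)$, not relative to $\|\Sigma_{K_s}\|$ (which can be far larger), and the paper's step-size condition is calibrated to force $\|\Sigma_{K_{s+1}}-\Sigma_{K_s}\|\le\tfrac14\underline{\sigma}(\Sigma_0)$ in Lemma \ref{lemma:grad_exact_one_step}.

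A second, smaller gap: the Neumann-series/resolvent expansion only shows that a bounded solution of the perturbed Lyapunov equation exists; it does not by itself establish $\rho(\mathcal{F}_{K_{s+1}})<1$, i.e.\ that $K_{s+1}$ remains mean-square stabilizing so that the almost-smoothness identity may legitimately be applied at the next iterate. The paper closes this with a continuity-of-spectral-radius contradiction argument built on the trace lower bound $\Tr(\Sigma_K)\ge \underline{\sigma}(\Sigma_0)/(1-\rho(\mathcal{F}_K))$ of Lemma \ref{lemma:S_K_trace_bound}, whose proof (the induction and term-matching over the $(1+p+q)^t$ terms of $\mathcal{F}_K^t$) is precisely the new technical ingredient the multiplicative noise forces relative to the deterministic case. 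With these two repairs your argument coincides with the paper's.
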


\begin{proof}
The proof is developed in Appendix \ref{appendix:model_based_gd}.
\end{proof}

The proofs for these results explicitly incorporate the effects of the multiplicative noise terms $\delta_{ti}$ and $\gamma_{tj}$ in the dynamics. 
For the policy gradient and natural policy gradient algorithms, we show explicitly how the maximum allowable step size depends on problem data and in particular on the multiplicative noise terms. Compared to deterministic LQR, the multiplicative noise terms decrease the allowable step size and thereby decrease the convergence rate; specifically, the state-multiplicative noise increases the initial cost $C(K_0)$ and the norms of the covariance $\Sigma_{K^*}$ and cost $P_K$, and the input-multiplicative noise also increases the denominator term $\|B\|^2+\sum_{j=1}^q \beta_j \|B_j\|^2$. This means that the algorithm parameters for deterministic LQR in \cite{Fazel2018} may cause failure to converge on problems with multiplicative noise. Moreover, even the optimal policies for deterministic LQR may actually \emph{destabilize} systems in the presence of small amounts of multiplicative noise uncertainty, indicating the possibility for a catastrophic lack of robustness; observe the results of the example in Section \ref{sec:numexp_A}. The results and proofs also differ from that of \cite{Fazel2018} because the more complicated mean-square stability must be accounted for, and because \emph{generalized} Lyapunov equations must be solved to compute the gradient steps, which requires specialized solvers.

\section{Global Convergence of Policy Gradient in the~Model-Free~Setting} \label{sec:modelfree}

The results in the previous section are model-based; the policy gradient steps are computed exactly based on knowledge of the model parameters. In the model-free setting, the policy gradient is estimated to arbitrary accuracy from sample trajectories with a  sufficient number of sample trajectories $n_{\text{sample}}$ of sufficiently long horizon length $\ell$ using gain matrices randomly selected from a Frobenius-norm ball around the current gain of sufficiently small exploration radius $r$. We show for multiplicative noise LQR that with a finite number of samples polynomial in the problem data, the model-free policy gradient algorithm still converges to the globally optimal policy, despite small perturbations on the gradient.

In the model-free setting, the policy gradient method proceeds as before except that at each iteration Algorithm \ref{algorithm:algo1} is called to generate an estimate of the gradient via the zeroth-order optimization procedure described by Fazel et al. \cite{Fazel2018}.

\begin{algorithm}
\caption{Model-Free policy gradient estimation}
\begin{algorithmic}[1]
\label{algorithm:algo1}
    \REQUIRE Gain matrix $K$, number of samples $n_{\text{sample}}$, rollout length $\ell$, exploration radius $r$
    \FOR{$i=1,\ldots,n_{\text{sample}}$}
        \STATE Generate a sample gain matrix $\widehat{K}_{i}=K+U_{i},$ where $U_{i}$ is drawn uniformly at random over matrices with Frobenius norm $r$ \;
        \STATE Generate a sample initial state $x_0^{(i)} \sim \mathcal{P}_0$ \;
        \STATE Simulate the closed-loop system for $\ell$ steps according to the stochastic dynamics in \eqref{eq:LQRm} starting from $x_0^{(i)}$ with $u_t^{(i)} = \widehat{K}_{i} x_t^{(i)}$, yielding the state sequence $\{ x_t^{(i)} \}_{t=0}^{t=\ell}$ \;
        \STATE Collect the empirical finite-horizon cost estimate ${\widehat{C}_{i} \Let \sum_{t=0}^{\ell} {x_t^{(i)}}^\intercal (Q+\widehat{K}_{i}^\intercal R \widehat{K}_{i}) x_t^{(i)}}$ \;
    \ENDFOR
    \ENSURE Gradient estimate $\widehat{\nabla} C(K) \Let \frac{1}{n_{\text{sample}}} \sum_{i=1}^{n_{\text{sample}}} \frac{mn}{r^{2}} \widehat{C}_{i} U_{i}$
\end{algorithmic}
\end{algorithm}

\begin{Thm}[Model-free policy gradient convergence] \label{thm:model-free}
    Let $\epsilon$ and $\mu$ be a given small tolerance and probability respectively and $N$ be the number of gradient descent steps taken.
    Suppose that the distribution of the initial states is bounded such that $x_0 \sim \mathcal{P}_0$ implies $\|x_0^i\| \leq L_0$ almost surely for any given realization $x_0^i$ of $x_0$. 
    Suppose additionally that the distribution of the multiplicative noises is bounded such that the following inequality is satisfied almost surely for any given realized sequence $x_t^i$ of $x_t$ with a positive scalar $z \geq 1$:
    \begin{align*} 
        \sum_{t=0}^{\ell-1} \Big( {x_{t}^{i}}^\intercal Q x_{t}^{i}+{u_{t}^{i}}^\intercal R u_{t}^{i} \Big)  \leq z \underset{\delta, \gamma}{\mathbb{E}} \left[ \sum_{t=0}^{\ell-1} \Big( x_t^\intercal Q x_t + u_t^\intercal R u_t \Big) \right]
    \end{align*}
    under the closed-loop dynamics with any gain such that $C(K) \leq 2 C(K_0)$.
    Suppose the step size $\eta$ is chosen according to the restriction in Theorem \ref{thm:grad_exact_convergence} and at every iteration the gradient is estimated according to the finite-horizon procedure in Algorithm \ref{algorithm:algo1} where the number of samples $n_{\text{sample}}$, rollout length $\ell$, and exploration radius $r$ are chosen according to the fixed polynomials of the problem data $A$, $B$, $\alpha_i$, $\beta_j$, $A_i$, $B_j$, $Q$, $R$, $\Sigma_0$, $K_0$, $L_0$ and $z$ which are all defined in the proofs in Appendix \ref{appendix:model_free_gd}.
    Then, with high probability of at least $1 - \mu$, performing gradient descent results in convergence to the global optimum over all $N$ steps: at each step, either progress is made at the linear rate
    \begin{align}
         \frac{C(K_{s+1})-C(K^*)}{C(K_s)-C\left(K^{*}\right)} \leq 1 - \eta  \frac{\underline{\sigma}(R)\underline{\sigma}(\Sigma_0)^{2}}{\left\|\Sigma_{K^{*}}\right\|} .
    \end{align}
    or convergence has been attained with $C(K_s) - C(K^*) \leq \epsilon$.
\end{Thm}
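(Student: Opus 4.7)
The plan is to establish the theorem via a perturbation analysis of the exact policy gradient result (Theorem \ref{thm:grad_exact_convergence}) combined with a high-probability bound on the accuracy of the gradient estimate produced by Algorithm \ref{algorithm:algo1}. Concretely, at each step $s$ one wishes to show that if $\|\widehat{\nabla}C(K_s) - \nabla C(K_s)\|_F \leq \varepsilon_{\text{grad}}$ for a sufficiently small tolerance depending polynomially on the problem data, then one descent step with the perturbed gradient still delivers geometric progress toward $C(K^*)$, only with a rate constant reduced by a factor of roughly $2$ relative to the exact case. A union bound across all $N$ iterations then yields the stated global convergence guarantee with probability at least $1-\mu$, provided $n_{\text{sample}}$, $\ell$, and $1/r$ are chosen as appropriate polynomials in the problem data, $L_0$, and $z$.

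First I would establish the zeroth-order smoothing identity: letting $C_r(K) \Let \mathbb{E}_U [C(K+U)]$ with $U$ uniform on the Frobenius sphere of radius $r$, Stokes' theorem gives $\nabla C_r(K) = \tfrac{mn}{r^2}\mathbb{E}_U[C(K+U)U]$, so Algorithm \ref{algorithm:algo1} produces (ignoring finite-horizon and finite-sample error) an unbiased estimate of $\nabla C_r(K)$. Bounding $\|\nabla C_r(K) - \nabla C(K)\|_F$ requires local Lipschitz continuity of $\nabla C$ on the sublevel set $\{K : C(K) \leq 2C(K_0)\}$, which I would derive in the same manner as the remark following Lemma \ref{lemma:almost_smooth}, using the cost bounds of Lemma \ref{lemma:cost_bounds} to control $\|\Sigma_K\|$, $\|P_K\|$, $\|R_K\|$ and their perturbations; taking $r$ suitably small then makes the bias negligible. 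Second, the finite-horizon error: mean-square stability implies $\mathbb{E}[x_t x_t^\intercal] \lesssim \rho(\mathcal{F}_K)^t$, so $|\mathbb{E}\widehat{C}_i - C(\widehat{K}_i)|$ decays geometrically in $\ell$ and can be made arbitrarily small by choosing $\ell$ polynomial in the problem data. Third, the sample-average error: here the bounded support assumptions on $x_0$ (via $L_0$) and the multiplicative noise (via $z$) furnish an almost-sure bound on each summand $\tfrac{mn}{r^2}\widehat{C}_i U_i$, allowing the matrix Bernstein concentration inequality to yield $\|\widehat{\nabla}C(K) - \nabla C_r(K)\|_F \leq \varepsilon_{\text{conc}}$ with probability $1-\mu/N$, provided $n_{\text{sample}}$ is polynomial in the problem data and $\log(N/\mu)$.

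The induction then proceeds as follows: maintain the invariant that $C(K_t) \leq 2 C(K_0)$ for all $t \leq s$, which ensures that the sublevel-set bounds (hence the local Lipschitz constants and the bound $z$) are valid throughout. At step $s$, decompose the total gradient error into bias plus concentration, and substitute the perturbed descent direction into the almost-smoothness plus gradient-domination chain used to prove Theorem \ref{thm:grad_exact_convergence}. This shows that either $C(K_s) - C(K^*) \leq \epsilon$, or else $C(K_{s+1}) - C(K^*) \leq \bigl(1 - \eta \underline{\sigma}(R)\underline{\sigma}(\Sigma_0)^2/\|\Sigma_{K^*}\|\bigr)\bigl(C(K_s) - C(K^*)\bigr)$, which in particular preserves the invariant. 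Summing the per-step failure probabilities $\mu/N$ yields the stated overall probability $1-\mu$.

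The main obstacle is the careful bookkeeping around the multiplicative noise. Unlike the deterministic setting of \cite{Fazel2018}, even for a fixed $K$ the sample trajectories have unbounded cost realizations unless one explicitly assumes bounded supports for $\delta_{ti}$, $\gamma_{tj}$ (the role of $z$) and for $x_0$ (the role of $L_0$); these must then propagate into the polynomial expressions for $n_{\text{sample}}$, $\ell$, and $r$. Moreover, the covariance operator $\mathcal{F}_K$ contains the extra terms involving $\alpha_i$ and $\beta_j$, so bounding perturbations such as $\|\Sigma_{K'} - \Sigma_K\|$ and $\|P_{K'} - P_K\|$ requires working with the \emph{generalized} Lyapunov operator and its perturbation rather than the ordinary one; this is where the additional polynomial factors in $\alpha_i$, $\beta_j$, $\|A_i\|$, $\|B_j\|$ that distinguish our bounds from those of \cite{Fazel2018} enter the analysis.
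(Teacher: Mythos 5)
Your proposal matches the paper's proof essentially step for step: the same three-way error decomposition (smoothing bias via the Stokes/zeroth-order identity, finite-horizon truncation via mean-square stability, finite-sample concentration via matrix Bernstein under the $L_0$ and $z$ bounded-support assumptions), the same union bound over $N$ iterations with per-step failure probability $\mu/N$, and the same induction maintaining the sublevel-set invariant $C(K_s) \leq 2C(K_0)$ so that the perturbation lemmas for the generalized Lyapunov operators remain valid and the halved linear rate is preserved. No substantive differences from the paper's argument in Appendix B--C.
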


\begin{proof}
The proof is developed in Appendix \ref{appendix:model_free_gd}.
\end{proof}

From a sample complexity standpoint, it is notable that the number of samples $n_{\text{sample}}$, rollout length $\ell$, and exploration radius $r$ in Theorem \ref{thm:model-free} are polynomial in the problem data $A$, $B$, $\alpha_i$, $\beta_j$, $A_i$, $B_j$, $Q$, $R$, $\Sigma_0$, $C(K_0)$.
The constant $z$ imposes a bound on the multiplicative noise, which is naturally absent in \cite{Fazel2018}. 
Note that $z \geq 1$ since any upper bound of a scalar distribution with finite support must be equal to or greater than the mean. In general, this implicitly requires the noises to have bounded support. 
Such an assumption is qualitatively the same as the condition imposed on the initial states.
These assumptions are reasonable; in a practical setting with a physical system the initial state and noise distributions will have finite support. There is no restriction on how large the support is, only that it not be unbounded.
Also note that the rate is halved compared with the model-based case of Theorem \ref{thm:grad_exact_convergence}; this is because the ``other half'' is consumed by the error between the estimated and true gradient.

\section{Numerical Experiments} \label{sec:numexp}
In this section we present results for three systems:
\begin{enumerate}[label=\Alph*.]
    \item Shows that ``optimal'' control that ignores actual multiplicative noise can lead to loss of mean-square stability,
    \item Shows the efficacy of the policy gradient algorithms on a networked system,
    \item Shows the increased difficulty of estimating the gradient from sample data in the presence of multiplicative noise.
\end{enumerate}
All systems considered permitted a solution to the GARE \eqref{eq:genriccati}.
The bounds on the step size, number of rollouts, and rollout length given by the theoretical analysis can be rather conservative.
For practicality, we selected the constant step size, number of rollouts, rollout length, and exploration radius according to a grid search over reasonable values. Additionally, we investigated the use of backtracking line search to adaptively select the step size; see e.g. \cite{boyd2004convex}. Throughout the simulations, we computed the baseline optimal cost $C(K^*)$ by solving the GARE \eqref{eq:genriccati} to high precision via value iteration.
Python code which implements the algorithms and generates the figures reported in this work can be found in the GitHub repository at \url{https://github.com/TSummersLab/polgrad-multinoise/}.
The code was run on a desktop PC with a quad-core Intel i7 6700K 4.0GHz CPU, 16GB RAM; no GPU computing was utilized.

\subsection{Importance of Accounting for Multiplicative Noise} \label{sec:numexp_A}

We first considered an open-loop mean-square unstable system with four states and one input representing an active two-mass suspension converted from continuous to discrete time using a standard bilinear transformation, with parameters:
\begin{align*}
    A =
    \begin{bmatrix}
    +0.261 & +0.315 & +0.093 & -0.008 \\
    -2.955 & +0.261 & +0.373 & -0.033 \\
    +1.019 & +0.255 & -0.853 & +0.011 \\
    -3.170 & -0.793 & -4.902 & -0.146
    \end{bmatrix}  , \quad
    B = 
    \begin{bmatrix}
        0.133 \\
        0.532 \\
        0.161 \\
        2.165
    \end{bmatrix}  , \quad
    Q = I_4, \quad
    R = I_1, \\   
    [A_i]_{y,z} = 
    \begin{cases}
    1 \text{ if } z = i , \\
    0 \text{ otherwise,}
    \end{cases}
    B_1 = \mathbf{1}_{4 \times 1} , \quad
    \{ \alpha_i \} = \{ 0.017, 0.017, 0.017, 0.017 \} , \quad
    \beta_1  = 0.035 .
\end{align*}
We performed model-based policy gradient descent; at each iteration gradients were calculated by solving generalized Lyapunov equations \eqref{eq:glyap1} and \eqref{eq:glyap2} using the problem data. 
The gains $K_m$ and $K_\ell$ represent iterates during optimization of (``training'' on) the LQRm and LQR cost (with the multiplicative noise variances set to zero), respectively.
We performed the optimization starting from the same feasible initial gain, 
which was generated by perturbing the exact solution of the generalized algebraic Riccati equation such that the LQRm cost under the initial control was approximately 10 times that of the optimal control.
The step size was chosen via backtracking line search. The optimization stopped once the Frobenius norm of the gradient fell below a small threshold. 
The plot in Fig. \ref{fig:plot_cost_vs_iteration_suspension_both} shows the ``testing'' cost of the gains at each iteration evaluated on the LQRm cost (with multiplicative noise).
From this figure, it is clear that $K_m$ minimized the LQRm as desired.
When there was high multiplicative noise, the noise-ignorant controller $K_\ell$ actually \textit{destabilized} the system in the mean-square sense; this can be seen as the LQRm cost exploded upwards to infinity after iteration 10.
In this sense, the multiplicative noise-aware optimization is generally safer and more robust than noise-ignorant optimization, and in examples like this is actually \textit{necessary} for mean-square stabilization.

\begin{figure}[ht]
\centering
 \subfloat[LQRm cost. \label{fig:plot_lqrm_cost_vs_iteration_suspension}]{%
  \includegraphics[width=0.45\textwidth]{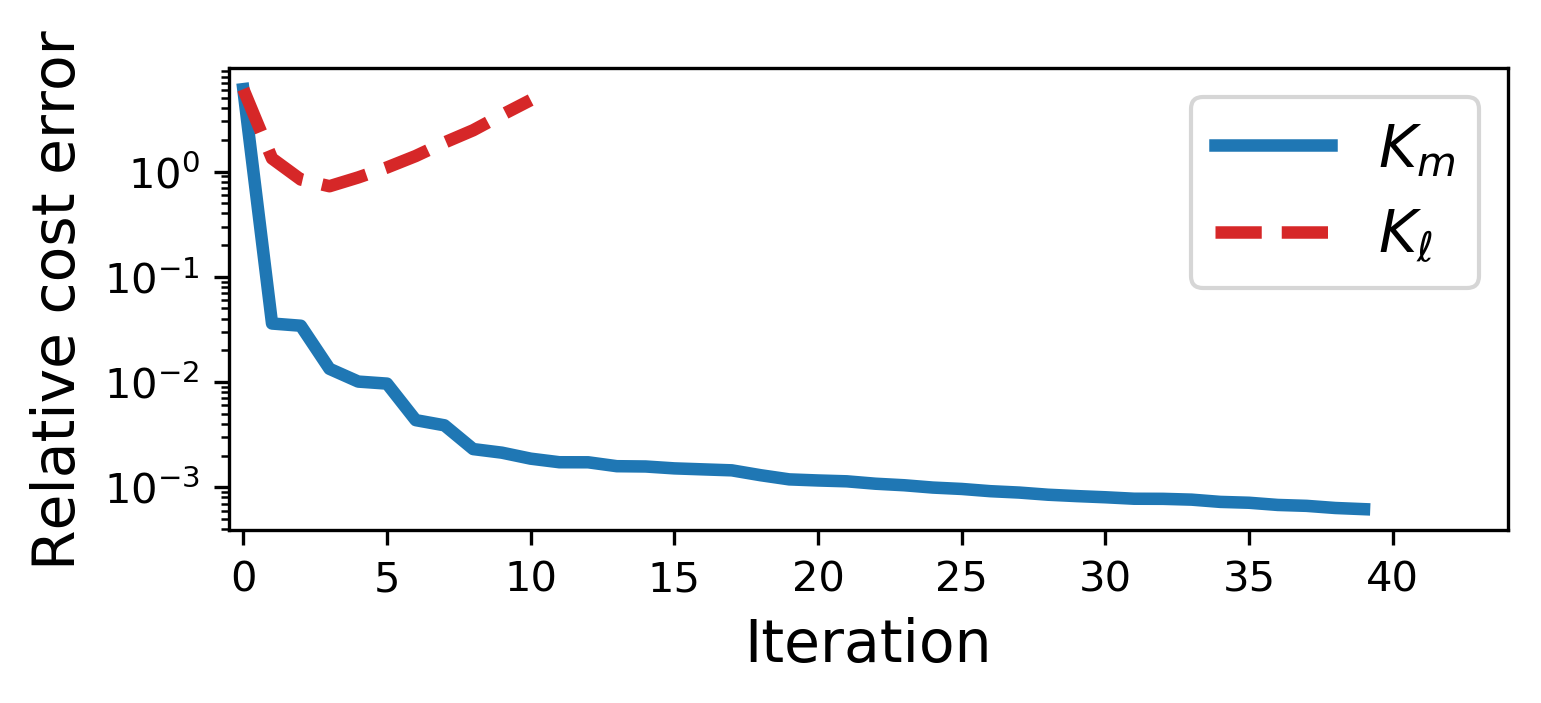}
 }
 \quad
 \subfloat[LQR cost. \label{fig:plot_lqr_cost_vs_iteration_suspension}]{%
  \includegraphics[width=0.45\textwidth]{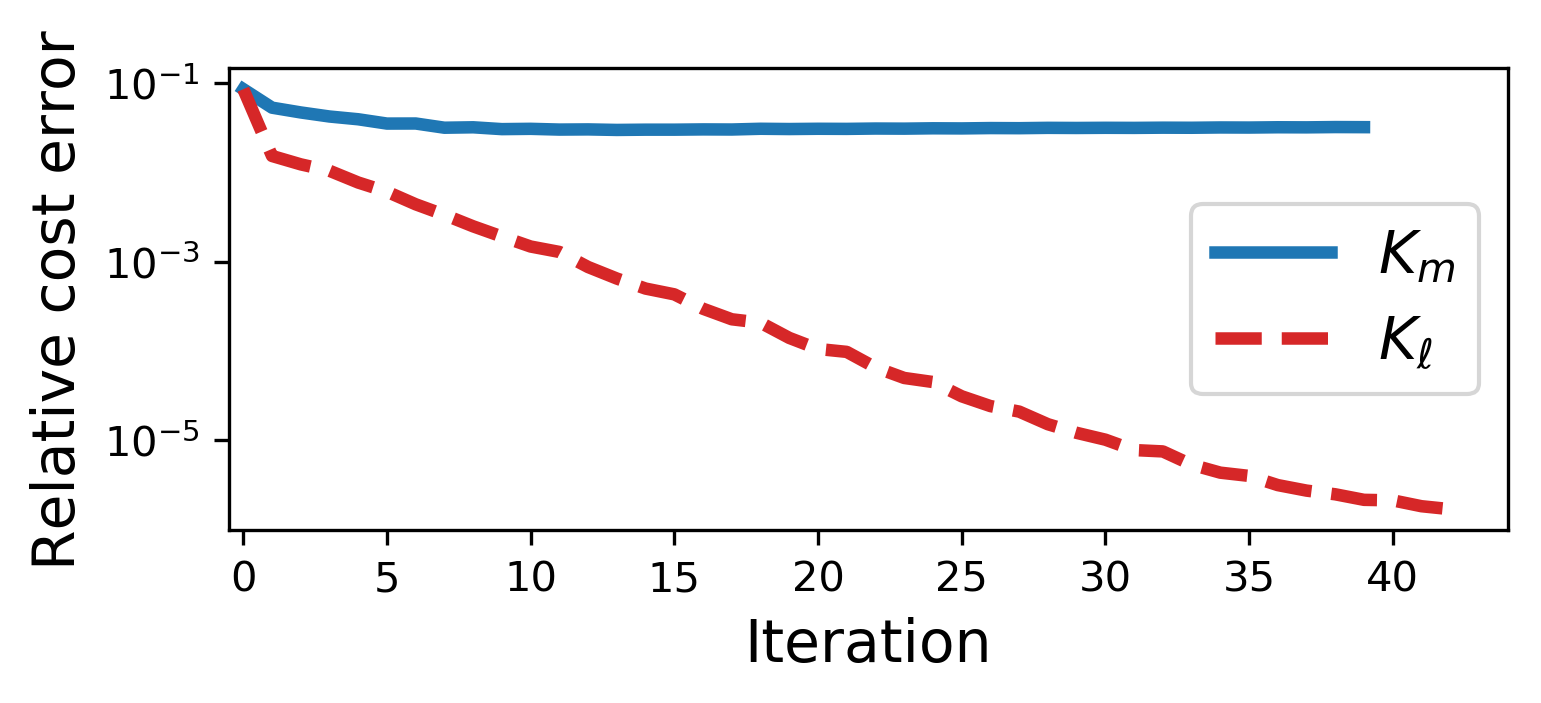}
 }
 \caption{Relative cost error $\frac{C(K)-C(K^*)}{C(K^*)}$ vs. iteration during policy gradient descent on the 4-state, 1-input suspension example system.}
 \label{fig:plot_cost_vs_iteration_suspension_both}
\end{figure}

\subsection{Policy Gradient Methods Applied to a Network}
Many practical networked systems can be approximated by diffusion dynamics with losses and stochastic diffusion constants (edge weights) between nodes; examples include heat flow through uninsulated pipes, hydraulic flow through leaky pipes, information flow between processors with packet loss, electrical power flow between generators with resistant electrical power lines, etc. A derivation of the discrete-time dynamics of this system is given in \cite{gravell2020acc}.
We considered a particular 4-state, 4-input system and open-loop mean-square stable with the following parameters:
\begin{align*}
    A &= 
    \begin{bmatrix}
    0.795 & 0.050 & 0.100 & 0.050 \\
    0.050 & 0.845 & 0.050 & 0.050 \\
    0.100 & 0.050 & 0.695 & 0.150 \\
    0.050 & 0.050 & 0.150 & 0.745
    \end{bmatrix}, \quad
    B = Q = R = \Sigma_0 = I_4 , \\
    \{ \alpha_i \} &= \{ 0.005, 0.015, 0.010 , 0.015, 0.005, 0.020 \} , \quad
    \{ \beta_j \} = \{ 0.050, 0.150, 0.050, 0.100 \} , \\
    [A_i]_{y,z} &= 
    \begin{cases}
    +1 \text{ if } \{ c_i \! = \! y \ \& \ d_i \! = \! y \} \text{ or } \{ c_i \! = \! z \ \& \ d_i \! = \! z \} , \\
    -1 \text{ if } \{ c_i \! = \! z \ \& \ d_i \! = \! y \} \text{ or } \{ c_i \! = \! y \ \& \ d_i \! = \! z \} , \\
    0 \text{ otherwise, }
    \end{cases} \quad
    [B_j]_{y,z} = 
    \begin{cases}
    +1 \text{ if } j=y=z, \\
    0 \text{ otherwise. }
    \end{cases} \\
    \{ (c_i, d_i) \} &= \{ (1, 2), (1, 3), (1, 4), (2, 3), (2, 4), (3, 4) \} .
\end{align*}
This system is open-loop mean-square stable, so we initialized the gains to all zeros for each trial.
We performed policy optimization using the model-free gradient, and the model-based gradient, model-based natural gradient, and model-based Gauss-Newton step directions on 20 unique problem instances using two step size schemes: \\
\noindent
\textbf{Backtracking line search:} Step sizes $\eta$ were chosen adaptively at each iteration by backtracking line search with parameters $\alpha = 0.01$, $\beta = 0.5$ (see \cite{boyd2004convex} for a description), except for Gauss-Newton which used the optimal constant step-size of $1/2$. Model-free gradients and costs were estimated with 100,000 rollouts per iteration. We ran a fixed number, 20, of iterations chosen such that the final cost using model-free gradient descent was no more than $5\%$ worse than optimal. \\
\noindent
\textbf{Constant step size:} Step sizes were set to constants chosen as large as possible without observing infeasibility or divergence, which on this problem instance was $\eta = 5 \times 10^{-5}$ for gradient, $\eta = 2 \times 10^{-4}$ for natural gradient, and $\eta = 1/2$ for Gauss-Newton step directions. Model-free gradients were estimated with 1,000 rollouts per iteration. We ran a fixed number, 20,000, of iterations chosen such that convergence was achieved with all step directions.

In both cases sample gains were chosen for model-free gradient estimation with exploration radius $r \! = \! 0.1$ and the rollout length was set to $\ell \! = \! 20$.
The plots in Fig. \ref{fig:plot_costnorm_vs_iteration_random} show the relative cost over the iterations; for the model-free gradient descent, the bold centerline is the mean of all trials and the shaded region is between the 10\textsuperscript{th} and 90\textsuperscript{th} percentile of all trials. 
Using backtracking line search, it is evident that in terms of convergence the Gauss-Newton step was extremely fast, and both the natural gradient and model-based gradient were slightly slower, but still quite fast. The model-free policy gradient converged to a reasonable neighborhood of the minimum cost quickly, but stagnated with further iterations; this is a consequence of the inherent gradient and cost estimation errors that arise due to random sampling and the multiplicative noise. 
Using constant stepsizes, we were forced to take small steps due to the steepness of the cost function near the initial gains, slowing overall convergence using the gradient and natural gradient methods. Here we observed that Gauss-Newton again converged most quickly, followed by natural gradient and lastly the gradient methods. The smaller step size also allowed us to use far fewer samples in the model-free setting, where we observed somewhat faster initial cost decrease with eventual stagnation around $10^{-2}$, or 1\%,  relative error, which represents excellent control performance. All algorithms exhibited convergence to the optimum, confirming the asserted theoretical claims.

\begin{figure}[ht]
\centering
 \subfloat[Backtracking line search. \label{fig:plot_costnorm_vs_iteration_random_backtrack}]{%
  \includegraphics[width=0.45\textwidth]{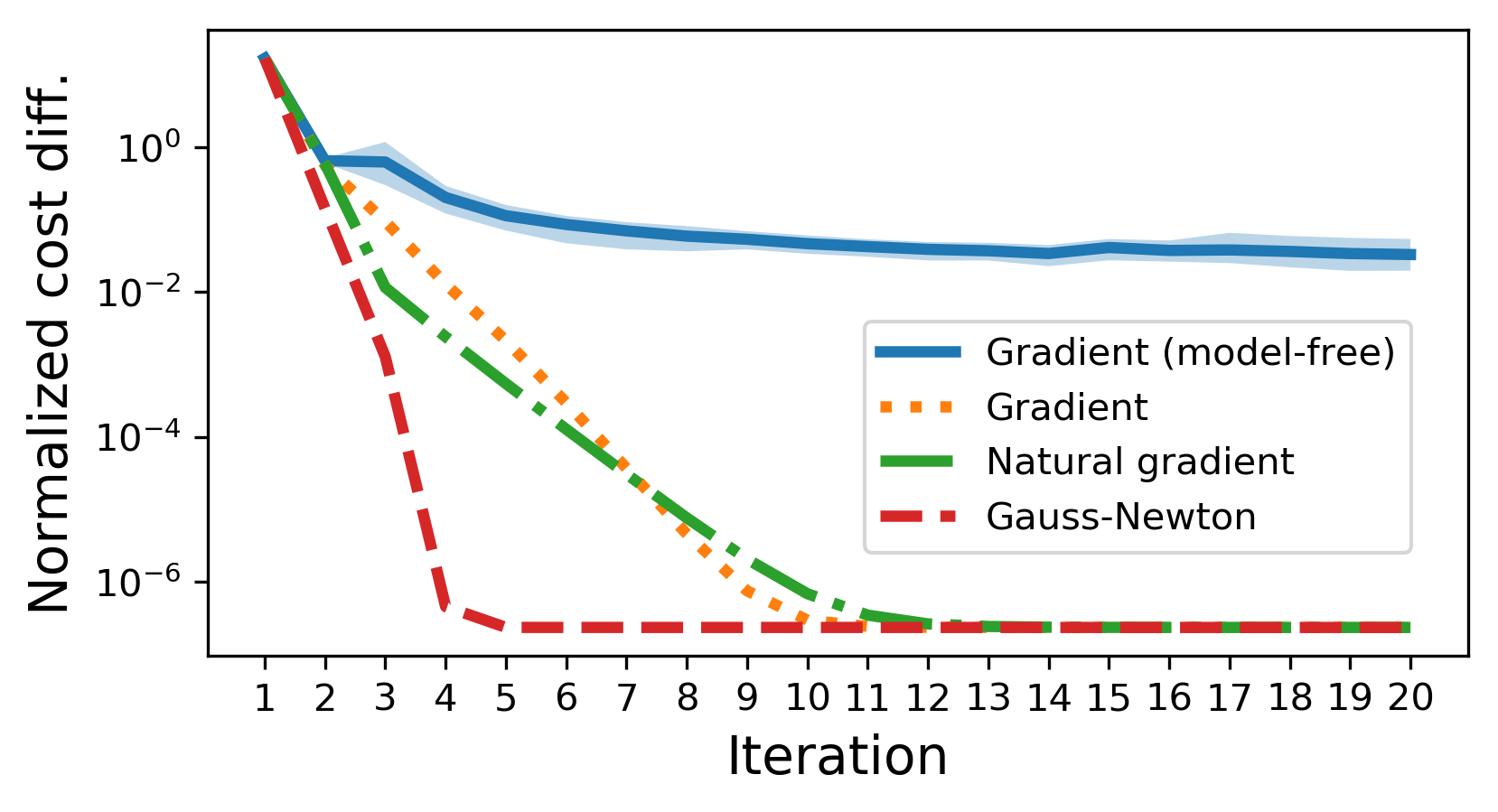}
 }\quad 
 \subfloat[Constant step sizes. \label{fig:plot_costnorm_vs_iteration_random_constant}]{%
  \includegraphics[width=0.45\textwidth]{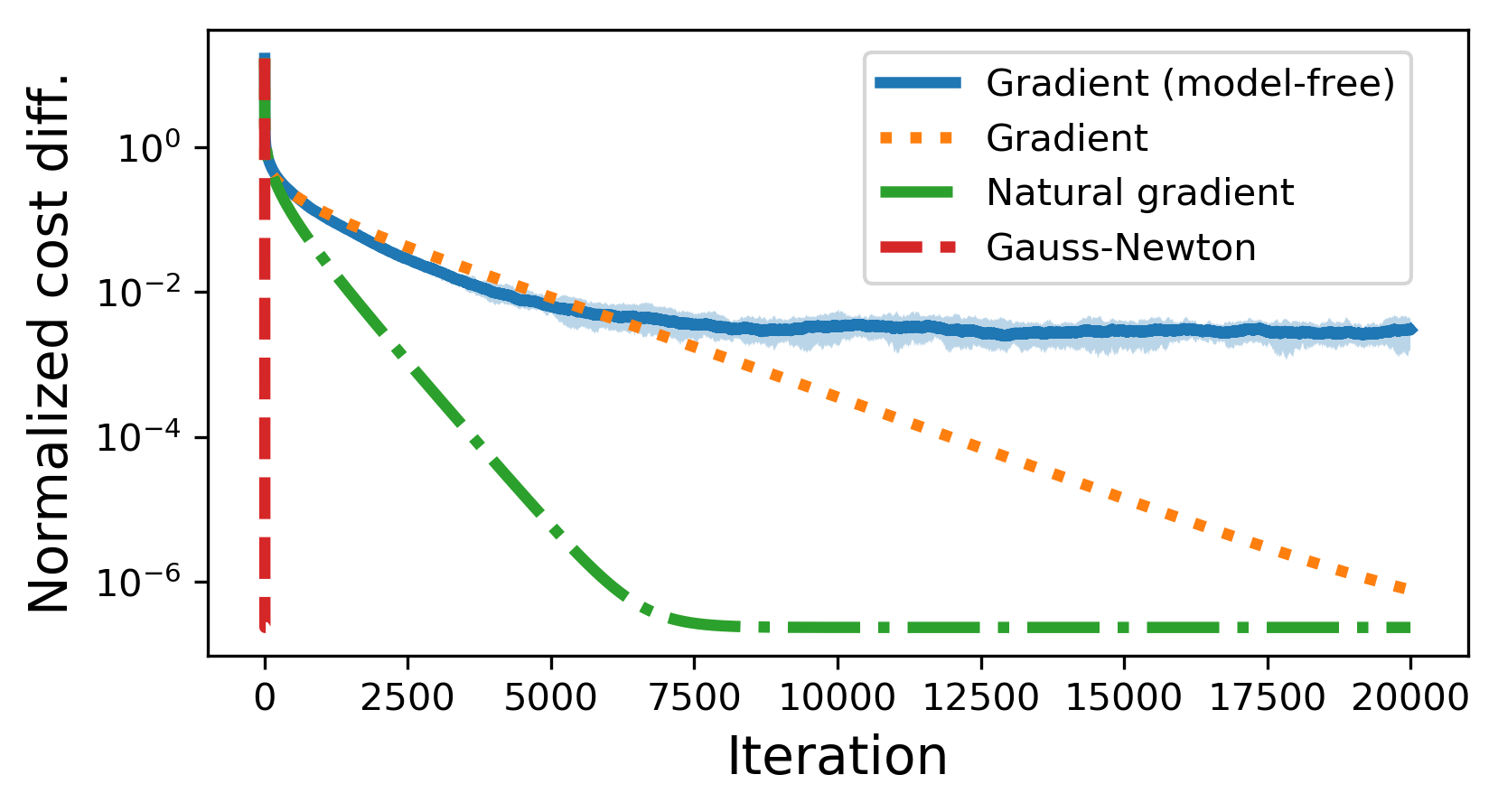}
 }
 \caption{Relative cost error $\frac{C(K)-C(K^*)}{C(K^*)}$ vs. iteration during policy gradient methods on a 4-state, 4-input lossy diffusion network with multiplicative noise using a) backtracking line search and b) constant step sizes.}
 \label{fig:plot_costnorm_vs_iteration_random}
\end{figure}

\subsection{Gradient Estimation}
Multiplicative noise can significantly increase the variance and sample complexity of cost gradient estimates relative to the noiseless case, which is novelly reflected in the theoretical analysis for the number of rollouts and rollout length. 
To demonstrate this empirically, we evaluated the relative gradient estimation error vs. number of rollouts for the system
\begin{align} \label{eq:gradient_estimation_example}
    {x_{t+1}=\left( \begin{bmatrix} 0.8 & 0.1 \\ 0.1 & 0.8 \end{bmatrix} + \delta_t \begin{bmatrix} 0 & 1 \\ 1 & 0 \end{bmatrix} + \begin{bmatrix} 1 \\ 0 \end{bmatrix} K \right) x_t}
\end{align}
with $K = 0, Q = \Sigma_0 = I_2, R = 1$, $\delta_t \sim \mathcal{N}(0,0.1)$, rollout length $l=40$, exploration radius $r = 0.2$, averaged over 10 gradient estimates. The results are plotted in Figure \ref{fig:gradient_estimation}. To achieve the same gradient estimate error of $10\%$, the system with multiplicative noise required $200 \times$ the number of rollout samples ($10^8$) as when there was no noise ($5 \times 10^5$). 

\begin{figure}[ht]
\centering
\includegraphics[width=0.45\textwidth]{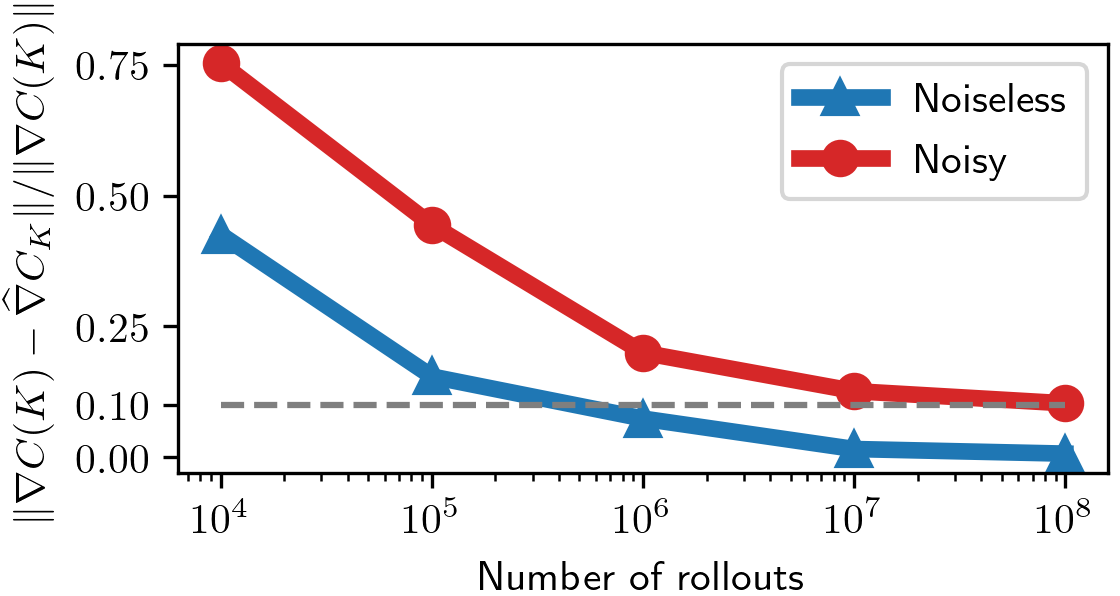}
\caption{Relative gradient estimation error vs. number of rollouts for \eqref{eq:gradient_estimation_example}.}
\label{fig:gradient_estimation}
\end{figure}

\section{Conclusions}

We have shown that policy gradient methods in both model-based and model-free settings give global convergence to the globally optimal policy for LQR systems with multiplicative noise. These techniques are directly applicable for the design of robust controllers of uncertain systems and serve as a benchmark for data-driven control design. Our ongoing work is exploring ways of mitigating the relative sample inefficiency of model-free policy gradient methods by leveraging the special structure of LQR models and Nesterov-type acceleration, and exploring alternative system identification and adaptive control approaches. We are also investigating other methods of building robustness through $\mathcal{H}_\infty$ and dynamic game approaches. Another extension relevant to networked control systems is enforcing sparse structure constraints on the gain matrix via projected policy gradient as suggested in \cite{bu2019lqr}.

\appendix
\section{Standard matrix expressions} \label{sec:matrix}

Before proceeding with the proof of the main results of this study, we first review several basic expressions that will be used later throughout the section. 
In this section we let $A$, $B$, $C$, $M_i$ be generic matrices $\in \mathds{R}^{n \times m}$, $a$, $b$ be generic vectors, and $s$ be a generic scalar.
\begin{description}
\item[Spectral norm] \hfill \\
We denote the matrix spectral norm as $\|A\| = \sigma_{\text{max}}(A)$ which clearly satisfies
\begin{equation} \label{eq:spectral_max_sing}
    \|A\| = \sigma_{\text{max}}(A) \geq \sigma_{\min}(A) .
\end{equation}

\item[Frobenius norm] \hfill \\
We denote the matrix Frobenius norm as $\|A\|_F$ whose square satisfies
\begin{equation} \label{eq:fro_sq}
    \|A\|_F^2 = \Tr(A^T A) .
\end{equation}

\item[Frobenius norm $\geq$ spectral norm] \hfill \\
For any matrix $A$ the Frobenius norm is greater than or equal to the spectral norm:
\begin{equation} \label{eq:fro_greaterthan_spectral}
    \|A\|_F \geq \|A\| .
\end{equation}

\item[Inverse of spectral norm inequality]
\begin{equation} \label{eq:spectral_norm_inv}
    \|A^{-1}\| \geq \|A\|^{-1} .
\end{equation}

\item[Invariance of trace under cyclic permutation]
\begin{equation} \label{eq:trace_invar}
    \Tr \left( \prod_{i = 1}^{n}M_i \right) = \Tr \left( M_n \prod_{i = 1}^{n-1}M_i \right) .
\end{equation}

\item[Invariance of trace under arbitrary permutation for a product of three matrices]
\begin{equation} \label{eq:trace_invar_triple}
    \Tr(ABC) = \Tr(BCA) = \Tr(CAB) = \Tr(ACB) = \Tr(BAC) = \Tr(CBA) .
\end{equation}

\item[Scalar trace equivalence]
\begin{equation} \label{eq:scalar_trace}
    s = \Tr(s) .
\end{equation}

\item[Trace-spectral norm inequalities]
\begin{equation} \label{eq:trace_norm1}
    |\Tr(A^TB)| \leq \|A^T\||\Tr(B)| = \|A\||\Tr(B)| .
\end{equation}
If $A \in \mathds{R}^{n \times n}$
\begin{equation} \label{eq:trace_norm2}
|\Tr(A)| \leq n\|A\|
\end{equation}
and if $A \succeq 0$
\begin{equation} \label{eq:trace_norm3}
\Tr(A) \geq \|A\| .
\end{equation}

\item[Sub-multiplicativity of spectral norm]
\begin{equation} \label{eq:sub_mult_spectral}
    \|AB\| \leq \|A\| \|B\| .
\end{equation}

\item[Positive semidefinite matrix inequality] \hfill \\
Suppose $A \succeq 0$ and $B \succeq 0$. Then
\begin{equation} \label{eq:psd_ineq}
    A + B \succeq A \text{ \ and \ } A + B \succeq B .
\end{equation}

\item[Vector self outer product positive semidefiniteness]
\begin{equation} \label{eq:vec_self_outer_product}
    aa^T \succeq 0
\end{equation}
since $b^Taa^Tb=(a^Tb)^T(a^Tb) \geq 0 \text{ for any } b$.

\item[Singular value inequality for positive semidefinite matrices] \hfill \\
Suppose $A \succeq 0$ and $B \succeq 0$ and $A \succeq B$. Then
\begin{equation} \label{eq:psd_spectral_ineq}
    \sigma_{\min}(A) \geq \sigma_{\min}(B) .
\end{equation}

\item[Weyl's Inequality for singular values] \hfill \\
Suppose $B=A+C$. Let singular values of $A$, $B$, and $C$ be
\begin{align}
    \sigma_1(A) \geq \sigma_2(A) \geq \ldots \geq \sigma_r(A) \geq 0 \nonumber \\
    \sigma_1(B) \geq \sigma_2(B) \geq \ldots \geq \sigma_r(B) \geq 0 \nonumber \\
    \sigma_1(C) \geq \sigma_2(C) \geq \ldots \geq \sigma_r(C) \geq 0 \nonumber 
\end{align}
where $r = \min \{m,n\}$. Then we have
\begin{align}
    \sigma_{i+j-1}(B) \leq \sigma_i(A) + \sigma_j(C) \ \forall \ i \in \{1,2,\ldots r\},j \in \{1,2,\ldots r\},i+j-1 \in \{1,2,\ldots r\} .
\end{align}
Consequently, we have
\begin{equation} \label{eq:weyl1}
    \|B\| \leq \|A\| + \|C\| 
\end{equation}
and
\begin{equation} \label{eq:weyl2}
    \sigma_{\min}(B) \geq \sigma_{\min}(A) -\|C\| .
\end{equation}

\item[Vector Bernstein inequality] \hfill \\
Suppose $\hat{a}=\sum_{i} \hat{a}_{i},$ where $\hat{a}_{i}$ are independent random vectors of dimension $n$.
Let $\mathbb{E}[\hat{a}]=a$ , and the variance $\sigma^{2}=\mathbb{E}\left[\sum_{i}\left\|\hat{a}_{i}\right\|^{2}\right]$.
If every $\hat{a}_{i}$ has norm $\left\|\hat{a}_{i}\right\| \leq s$ then with high probability we have
\begin{align}
    \|\hat{a}-a\| \leq \mathcal{O}\left(s \log n+\sqrt{\sigma^{2} \log n}\right) .
\end{align}
This is the same inequality given in \cite{Fazel2018}. See \cite{Tropp2012} for the exact scale constants and a proof.

\end{description}

\section{Model-based policy gradient descent} \label{appendix:model_based_gd}

The proof of convergence using gradient descent proceeds by establishing several technical lemmas, bounding the infinite-horizon covariance $\Sigma_K$, then using that bound to limit the step size, and finally obtaining a one-step bound on gradient descent progress and applying it inductively at each successive step.
We begin with a bound on the induced operator norm of $\mathcal{T}_K$:
\begin{Lem} \label{lemma:TK_norm_bound} ($\mathcal{T}_K$ norm bound)
The following bound holds for any mean-square stabilizing $K$:
    \begin{align}
        \|\mathcal{T}_K\| \Let \underset{X}{\text{sup}} \frac{\|\mathcal{T}_K(X)\|}{\|X\|} \leq \frac{C(K)}{{\underline{\sigma} (\Sigma_0)} \underline{\sigma}(Q)} .
    \end{align}
\end{Lem}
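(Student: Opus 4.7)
The plan is to exploit the fact that $\mathcal{T}_K$ is a positive linear operator on the space of symmetric matrices, and then reduce the supremum defining $\|\mathcal{T}_K\|$ to evaluating $\mathcal{T}_K$ at the identity, which can in turn be compared to the already bounded quantity $\Sigma_K = \mathcal{T}_K(\Sigma_0)$.

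First I would observe positivity: from the formula $\mathcal{F}_K(X) = A_K X A_K^\intercal + \sum_i \alpha_i A_i X A_i^\intercal + \sum_j \beta_j B_j K X (B_j K)^\intercal$, each term preserves the PSD cone, so $X \succeq 0 \Rightarrow \mathcal{F}_K^t(X) \succeq 0$ for every $t$ and therefore $\mathcal{T}_K(X) = \sum_{t=0}^\infty \mathcal{F}_K^t(X) \succeq 0$ (convergence being guaranteed by mean-square stability via Lemma \ref{lemma:mss_char}). Using positivity, for any symmetric $X$ we have $-\|X\| I \preceq X \preceq \|X\| I$, hence $-\|X\|\,\mathcal{T}_K(I) \preceq \mathcal{T}_K(X) \preceq \|X\|\,\mathcal{T}_K(I)$, which gives $\|\mathcal{T}_K(X)\| \leq \|X\|\cdot\|\mathcal{T}_K(I)\|$ and therefore $\|\mathcal{T}_K\| \leq \|\mathcal{T}_K(I)\|$.

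Next I would compare $\mathcal{T}_K(I)$ to $\Sigma_K$. Since $\Sigma_0 \succeq \underline{\sigma}(\Sigma_0)\, I$, positivity of $\mathcal{T}_K$ yields
\begin{align}
\Sigma_K = \mathcal{T}_K(\Sigma_0) \succeq \underline{\sigma}(\Sigma_0)\,\mathcal{T}_K(I),
\end{align}
so $\mathcal{T}_K(I) \preceq \Sigma_K/\underline{\sigma}(\Sigma_0)$ and consequently $\|\mathcal{T}_K(I)\| \leq \|\Sigma_K\|/\underline{\sigma}(\Sigma_0)$. Combining with Lemma \ref{lemma:cost_bounds}, which gives $\|\Sigma_K\| \leq C(K)/\underline{\sigma}(Q)$, yields the claimed bound.

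The main conceptual point is recognizing that $\mathcal{T}_K$ is a positive map; after that, everything reduces to inequalities that have already been established earlier in the excerpt. The one subtlety worth flagging is that the supremum in the definition of $\|\mathcal{T}_K\|$ is naturally taken over symmetric (and usually PSD) arguments, since that is the domain on which $\mathcal{T}_K$ acts meaningfully in this paper; the reduction $\|\mathcal{T}_K\| = \|\mathcal{T}_K(I)\|$ above handles this cleanly without needing any Russo–Dye style decomposition.
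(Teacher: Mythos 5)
Your proof is correct and follows essentially the same route as the paper's (which simply defers to Fazel et al.): positivity/monotonicity of $\mathcal{T}_K$, the comparison $I \preceq \Sigma_0/\underline{\sigma}(\Sigma_0)$ giving $\mathcal{T}_K(I) \preceq \Sigma_K/\underline{\sigma}(\Sigma_0)$, and the cost bound $\|\Sigma_K\| \leq C(K)/\underline{\sigma}(Q)$ from Lemma~\ref{lemma:cost_bounds}. Since the paper's own proof is only a citation, your write-up in fact supplies the missing details; the sole cosmetic slip is asserting $\|\mathcal{T}_K\| = \|\mathcal{T}_K(I)\|$ at the end where you established (and only need) $\leq$.
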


\begin{proof}
    The proof follows that given in \cite{Fazel2018} using our definition of $\mathcal{T}_K$.
\end{proof}



\begin{Lem} \label{lemma:F_K_perturbation} ($\mathcal{F}_K$ perturbation)
    Consider a pair of mean-square stabilizing gain matrices $K$ and $K{^\prime}$.
    The following $\mathcal{F}_K$ perturbation bound holds:
    \begin{align} \label{eq:mathcal_F_bound}
       \| \mathcal{F}_{K{^\prime}} - \mathcal{F}_{K} \| 
       & \leq 
       2 \| A+BK \| \|B\|\|\Delta\| + h_B \|B\| \|\Delta\|^2 \\
       \text{where } \quad h_B &\Let {\|B\|}^{-1}\big(\|B\|^2+\sum_{j=1}^q \beta_j \|B_j\|^2 \big) . 
    \end{align}
\end{Lem}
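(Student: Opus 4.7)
The plan is to expand the operator difference directly from the definition
\[
\mathcal{F}_K(X) \;=\; A_K X A_K^\intercal \;+\; \sum_{i=1}^p \alpha_i A_i X A_i^\intercal \;+\; \sum_{j=1}^q \beta_j B_j K X K^\intercal B_j^\intercal,
\]
substitute $K^\prime = K + \Delta$, and bound the resulting pieces via submultiplicativity of the spectral norm. The $\sum_{i=1}^p \alpha_i A_i X A_i^\intercal$ contribution does not depend on $K$, so it cancels in $\mathcal{F}_{K^\prime}(X)-\mathcal{F}_K(X)$; only the $A_K$ block and the $B_j K$ blocks remain.

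For the deterministic closed-loop block I would use the telescoping identity
\[
A_{K^\prime} X A_{K^\prime}^\intercal - A_K X A_K^\intercal \;=\; (B\Delta) X A_{K^\prime}^\intercal \;+\; A_K X (B\Delta)^\intercal,
\]
apply submultiplicativity, and then insert $\|A_{K^\prime}\| \le \|A_K\| + \|B\|\|\Delta\|$, which yields a bound of the form $\big(2\|A_K\|\|B\|\|\Delta\| + \|B\|^2\|\Delta\|^2\big)\|X\|$ on this piece. For each $B_j K$ block the same telescoping gives
\[
B_j K^\prime X (B_j K^\prime)^\intercal - B_j K X (B_j K)^\intercal \;=\; (B_j\Delta) X (B_j K^\prime)^\intercal \;+\; (B_j K) X (B_j\Delta)^\intercal,
\]
whose quadratic-in-$\Delta$ contribution, after summing against $\beta_j$, is exactly $\sum_{j=1}^q \beta_j \|B_j\|^2 \|\Delta\|^2 \|X\|$; combining with the $\|B\|^2\|\Delta\|^2\|X\|$ term from the first block reproduces $h_B \|B\| \|\Delta\|^2 \|X\|$ by definition of $h_B$. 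Dividing by $\|X\|$ and taking the supremum would then deliver the stated operator-norm bound.

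The step I expect to require the most care is handling the \emph{linear-in-$\Delta$} cross terms produced by the $B_j K$ block: a naive submultiplicative estimate on $(B_j\Delta) X (B_j K^\prime)^\intercal + (B_j K) X (B_j\Delta)^\intercal$ introduces a factor $\|B_j K^\prime\| + \|B_j K\|$, and hence a cross-term proportional to $\|K\| \sum_{j=1}^q \beta_j \|B_j\|^2$ that is not visible in the stated right-hand side. To close the gap I would either regroup the telescoping so that $B_j K = B_j(A_K - A)/B$-type rewrites (made precise through $BK = A_K - A$ and the structure of the $B_j$) push the cross-term into the $\|A+BK\|\|B\|\|\Delta\|$ leading coefficient, or, more straightforwardly, absorb the cross-term into the leading linear coefficient using a bound of $\|K\|$ in terms of $\|A+BK\|$ and the problem data that holds uniformly on the sublevel set where the lemma is subsequently applied. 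This careful bookkeeping of the input-multiplicative noise, which is absent from the corresponding perturbation estimate of \cite{Fazel2018}, is the genuinely new ingredient in the proof.
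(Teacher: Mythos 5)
Your expansion strategy is the same as the paper's: write out $\mathcal{F}_{K'}(X)-\mathcal{F}_K(X)$, note that the $\sum_i\alpha_i A_i X A_i^\intercal$ block cancels, telescope the remaining blocks in $\Delta$, and apply submultiplicativity. The place where you hesitate is exactly the right place to hesitate, and you should not try to argue your way past it: the linear-in-$\Delta$ cross terms coming from the input-noise block are genuinely present and are \emph{not} controlled by the stated right-hand side. The paper's own proof loses them in the step where it evaluates
\begin{align}
\underset{\delta,\gamma}{\mathbb{E}}\big[\widetilde{A}_{K} X (\widetilde{B}\Delta')^\intercal\big] = A_K X (B\Delta')^\intercal ,
\end{align}
which silently ignores that $\widetilde{A}_K=\widetilde{A}+\widetilde{B}K$ contains the \emph{same} random $\widetilde{B}$ as the factor $\widetilde{B}\Delta'$; the correct expectation carries the extra term $\sum_{j=1}^q\beta_j (B_jK) X (B_j\Delta')^\intercal$ (plus its transpose from the symmetric partner). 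These are precisely the cross terms you identified.

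Neither of your two proposed repairs can close the gap without changing the statement, because the lemma as written is false. Take $n=m=1$, $A=0$, $B=B_1=1$, $p=0$, $q=1$, $\beta_1=1$, $K=1$, $K'=1.1$. Then $\mathcal{F}_K(X)=(1+\beta_1)K^2X=2K^2X$, so $\|\mathcal{F}_{K'}-\mathcal{F}_K\|=2(1.1^2-1)=0.42$, while the claimed bound is $2\|A+BK\|\,\|B\|\,\|\Delta\|+h_B\|B\|\,\|\Delta\|^2=0.2+0.02=0.22$. A correct bound must include the cross term, e.g.
\begin{align}
\|\mathcal{F}_{K'}-\mathcal{F}_K\| \leq 2\Big(\|A_K\|\,\|B\|+\textstyle\sum_{j=1}^q\beta_j\|B_j\|\,\|B_jK\|\Big)\|\Delta\| + h_B\|B\|\,\|\Delta\|^2 ,
\end{align}
which is tight in the example above. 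Your second remedy --- bounding $\|K\|$ on the sublevel set $\{C(K)\le C(K_0)\}$ (Lemma \ref{lemma:gradient_gain_bounds} provides $\|K\|\le h_2(K)$) and folding the extra $2\|\Delta\|\,\|K\|\sum_j\beta_j\|B_j\|^2$ into the constants --- is the viable fix, but it enlarges the perturbation bound and hence forces a corresponding shrinkage of $h_\Delta(K)$ and the step-size polynomials downstream in Lemmas \ref{lemma:Sigma_K_perturbation} and \ref{lemma:grad_exact_one_step}; the qualitative convergence conclusions survive, but the constants as printed do not. Your first remedy (rewriting $B_jK$ through $A_K-A$) does not work, since $B_jK$ is not expressible in terms of $A_K$ for general $B_j$.
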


\begin{proof}
    Let $\Delta^\prime = -\Delta$.
    For any matrix $X$ we have
    \begin{align}
        (\mathcal{F}_{K}-\mathcal{F}_{K{^\prime}})(X) 
        &= \underset{\delta,\gamma}{\mathbb{E}} \left[ \widetilde{A}_{K} X \widetilde{A}_{K}^\intercal - \widetilde{A}_{K{^\prime}} X \widetilde{A}_{K{^\prime}}^\intercal \right] \\
        &= \underset{\delta,\gamma}{\mathbb{E}} \left[ \widetilde{A}_{K} X (\widetilde{B} \Delta^\prime)^\intercal + (\widetilde{B} \Delta^\prime) X \widetilde{A}_{K}^\intercal - (\widetilde{B} \Delta^\prime) X (\widetilde{B} \Delta^\prime)^\intercal \right] \\
        &=  A_K X ({B} \Delta^\prime)^\intercal + ({B} \Delta^\prime) X A_K^\intercal - \underset{\gamma_{tj}}{\mathbb{E}} \left[ (\widetilde{B} \Delta^\prime) X (\widetilde{B} \Delta^\prime)^\intercal \right] \\
        &=  A_K X ({B} \Delta^\prime)^\intercal + ({B} \Delta^\prime) X A_K^\intercal 
        - (B \Delta^\prime) X (B \Delta^\prime)^\intercal - \sum_{j=1}^q \beta_j (B_j \Delta^\prime) X (B_j \Delta^\prime)^\intercal . \label{eq:mathcal_F_bound_last_step}
    \end{align}
    The operator norm $\| \mathcal{F}_{K{^\prime}} - \mathcal{F}_{K} \|$ is
    \begin{align}
        \| \mathcal{F}_{K{^\prime}} - \mathcal{F}_{K} \| = \| \mathcal{F}_{K} - \mathcal{F}_{K{^\prime}}\| = \underset{X}{\text{sup}} \ \frac{\|(\mathcal{F}_{K}-\mathcal{F}_{K{^\prime}})(X)\|}{\|X\|}
    \end{align}
    Applying submultiplicativity of spectral norm to \eqref{eq:mathcal_F_bound_last_step} and noting that $\|\Delta^\prime\|=\|\Delta\|$ completes the proof.
\end{proof}

\begin{Lem}[$\mathcal{T}_K$ perturbation] \label{lemma:T_K_perturbation} 
    If $K$ and $K{^\prime}$ are mean-square stabilizing and 
    $
        \| \mathcal{T}_{K} \| \|\mathcal{F}_{K{^\prime}} - \mathcal{F}_{K} \| \leq \frac{1}{2}
    $
    then
    \begin{align}
        \| (\mathcal{T}_{K{^\prime}}-\mathcal{T}_{K})(\Sigma) \| &\leq 2 \| \mathcal{T}_{K} \| \|\mathcal{F}_{K{^\prime}} - \mathcal{F}_{K} \| \| \mathcal{T}_{K}(\Sigma) \| \\
                                                            &\leq 2 \| \mathcal{T}_{K} \|^2 \|\mathcal{F}_{K{^\prime}} - \mathcal{F}_{K} \| \|\Sigma\|  .
    \end{align}    
\end{Lem}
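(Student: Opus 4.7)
The plan is to exploit the fact that $\mathcal{T}_K$ is essentially the resolvent of $\mathcal{F}_K$. Since $K$ is mean-square stabilizing, Lemma~\ref{lemma:mss_char} gives $\rho(\mathcal{F}_K)<1$, so the Neumann series $\sum_{t=0}^\infty \mathcal{F}_K^t$ converges and equals $(I-\mathcal{F}_K)^{-1}$. By the characterization $\mathcal{T}_K(X)=\sum_{t=0}^\infty \mathcal{F}_K^t(X)$ we therefore have the operator identities
\begin{equation}
    \mathcal{T}_K = (I-\mathcal{F}_K)^{-1}, \qquad \mathcal{T}_{K^\prime} = (I-\mathcal{F}_{K^\prime})^{-1},
\end{equation}
and likewise for $K^\prime$ (which is also assumed mean-square stabilizing).

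First I would invoke the standard resolvent identity $A^{-1}-B^{-1}=A^{-1}(B-A)B^{-1}$ applied to $A=I-\mathcal{F}_{K^\prime}$ and $B=I-\mathcal{F}_K$ to obtain
\begin{equation}
    \mathcal{T}_{K^\prime} - \mathcal{T}_{K} = \mathcal{T}_{K^\prime}\,(\mathcal{F}_{K^\prime}-\mathcal{F}_K)\,\mathcal{T}_K .
\end{equation}
Evaluating at $\Sigma$ and passing to operator norms via submultiplicativity yields
\begin{equation}
    \|(\mathcal{T}_{K^\prime}-\mathcal{T}_K)(\Sigma)\| \;\leq\; \|\mathcal{T}_{K^\prime}\|\,\|\mathcal{F}_{K^\prime}-\mathcal{F}_K\|\,\|\mathcal{T}_K(\Sigma)\|.
\end{equation}
So the entire remaining task is to control $\|\mathcal{T}_{K^\prime}\|$ by $\|\mathcal{T}_K\|$, which is the main obstacle of the proof.

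To handle this, I would apply the resolvent identity again but in the opposite direction, writing
\begin{equation}
    \mathcal{T}_{K^\prime} = \mathcal{T}_K + \mathcal{T}_{K^\prime}(\mathcal{F}_{K^\prime}-\mathcal{F}_K)\mathcal{T}_K,
\end{equation}
take norms, and rearrange:
\begin{equation}
    \|\mathcal{T}_{K^\prime}\|\bigl(1-\|\mathcal{T}_K\|\,\|\mathcal{F}_{K^\prime}-\mathcal{F}_K\|\bigr) \;\leq\; \|\mathcal{T}_K\|.
\end{equation}
Under the hypothesis $\|\mathcal{T}_K\|\|\mathcal{F}_{K^\prime}-\mathcal{F}_K\|\le \tfrac12$, the bracket on the left is at least $\tfrac12$, so $\|\mathcal{T}_{K^\prime}\|\le 2\|\mathcal{T}_K\|$. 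Plugging this into the first display gives the first inequality of the lemma. The second inequality then follows immediately from $\|\mathcal{T}_K(\Sigma)\|\le\|\mathcal{T}_K\|\,\|\Sigma\|$.

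The only subtlety worth double-checking is the justification that $\mathcal{T}_{K^\prime}$ is a bounded operator at all, i.e.\ that the Neumann-style rearrangement is legitimate. This follows because $K^\prime\in\mathcal{K}$ is assumed mean-square stabilizing, so $\rho(\mathcal{F}_{K^\prime})<1$ and $(I-\mathcal{F}_{K^\prime})^{-1}$ exists as a bounded linear operator on symmetric matrices, matching the series definition of $\mathcal{T}_{K^\prime}$. Everything else is algebraic manipulation with submultiplicativity and the triangle inequality for operator norms; the smallness condition $\|\mathcal{T}_K\|\|\mathcal{F}_{K^\prime}-\mathcal{F}_K\|\le\tfrac12$ is precisely what makes the perturbation argument work and produces the constant $2$ in the final bound.
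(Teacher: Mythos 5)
Your proof is correct and follows essentially the same route as the paper, which simply defers to the argument of Fazel et al.\ based on writing $\mathcal{T}_K=(I-\mathcal{F}_K)^{-1}$, applying the resolvent identity $\mathcal{T}_{K^\prime}-\mathcal{T}_K=\mathcal{T}_{K^\prime}(\mathcal{F}_{K^\prime}-\mathcal{F}_K)\mathcal{T}_K$, and using the smallness hypothesis to absorb $\|\mathcal{T}_{K^\prime}\|$ into $2\|\mathcal{T}_K\|$. The justification that mean-square stabilizability of $K^\prime$ (via $\rho(\mathcal{F}_{K^\prime})<1$) makes $\mathcal{T}_{K^\prime}$ a well-defined bounded operator is exactly the right point to flag, and your handling of it is sound.
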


\begin{proof}
The proof follows \cite{Fazel2018} using our modified definitions of $\mathcal{T}_K$ and $\mathcal{F}_K$.
\end{proof}

\begin{Lem}[$\Sigma_K$ trace bound] \label{lemma:S_K_trace_bound} 
    If $\rho(\mathcal{F}_K) < 1$ then 
    \begin{align}
        \Tr \left(\Sigma_{K}\right) \geq \frac{{\underline{\sigma} (\Sigma_0)}}{1-\rho(\mathcal{F}_K)} .
    \end{align}    
\end{Lem}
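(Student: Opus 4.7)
The plan is to exploit the Neumann-series characterization
\[
\Sigma_K \;=\; \sum_{t=0}^\infty \mathcal{F}_K^{\,t}(\Sigma_0)
\]
from \eqref{eq:TK_char}, together with the fact that $\mathcal{F}_K$ is a \emph{positive} linear operator on the cone of symmetric positive semidefinite matrices (it is a nonnegative combination of maps $X\mapsto M X M^\intercal$). Taking the trace and passing it through the sum, I will lower bound each summand $\Tr(\mathcal{F}_K^{\,t}(\Sigma_0))$ by a multiple of $\rho(\mathcal{F}_K)^t$, and then sum the geometric series.

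First I would use $\Sigma_0 \succeq \underline{\sigma}(\Sigma_0)\, I$, which holds since $\Sigma_0 \succ 0$, together with the order-preserving property of $\mathcal{F}_K$ to obtain $\mathcal{F}_K^{\,t}(\Sigma_0) \succeq \underline{\sigma}(\Sigma_0)\,\mathcal{F}_K^{\,t}(I)$, and hence $\Tr(\mathcal{F}_K^{\,t}(\Sigma_0)) \geq \underline{\sigma}(\Sigma_0)\,\Tr(\mathcal{F}_K^{\,t}(I))$. Second, by the Krein--Rutman (Perron--Frobenius) theorem applied to the positive operator $\mathcal{F}_K$ on the proper cone of PSD matrices, the spectral radius $\rho \coloneqq \rho(\mathcal{F}_K)$ is attained by an eigenmatrix $V \succeq 0$ with $\mathcal{F}_K(V) = \rho\,V$. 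Normalize $V$ so that $\Tr(V)=1$. Since $V\succeq 0$, its largest eigenvalue does not exceed its trace, so $V \preceq I$. Monotonicity of $\mathcal{F}_K^{\,t}$ then yields $\mathcal{F}_K^{\,t}(I) \succeq \mathcal{F}_K^{\,t}(V) = \rho^{t} V$, and taking traces gives $\Tr(\mathcal{F}_K^{\,t}(I)) \geq \rho^{t}\,\Tr(V) = \rho^{t}$.

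Combining the two bounds yields $\Tr(\mathcal{F}_K^{\,t}(\Sigma_0)) \geq \underline{\sigma}(\Sigma_0)\,\rho^{t}$, so summing the geometric series (which converges since $\rho<1$ by hypothesis) produces
\[
\Tr(\Sigma_K) \;=\; \sum_{t=0}^\infty \Tr\!\big(\mathcal{F}_K^{\,t}(\Sigma_0)\big) \;\geq\; \underline{\sigma}(\Sigma_0)\sum_{t=0}^\infty \rho^{t} \;=\; \frac{\underline{\sigma}(\Sigma_0)}{1-\rho(\mathcal{F}_K)},
\]
which is the claimed inequality.

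The main obstacle I anticipate is justifying the existence of a PSD eigenmatrix $V$ attaining the spectral radius; this is where the proof departs from purely algebraic manipulation. It is not a matrix-Perron-Frobenius statement (the matrix representation of $\mathcal{F}_K$ need not be entrywise nonnegative), but the operator-theoretic Krein--Rutman theorem applies because $\mathcal{F}_K$ leaves the self-dual closed convex cone of PSD matrices invariant and this cone has nonempty interior (containing $I$). An alternative route, which avoids invoking Krein--Rutman explicitly, is to work in the vectorized representation and use the Jordan canonical form of the matrix $\mathcal{F}_K$: pick a vector $v$ with $\mathcal{F}_K v = \rho v$, write $v = \vect(V_1) + i\,\vect(V_2)$ with $V_1,V_2$ real symmetric, and then pass to the PSD part via a symmetrization / positive-part argument. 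Either route is routine but must be carefully invoked; everything else in the argument reduces to monotonicity of positive operators and summing a geometric series.
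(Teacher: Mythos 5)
Your proof is correct, but it takes a genuinely different route from the paper's. Both arguments share the first two steps (the Neumann series $\Tr(\Sigma_K)=\sum_{t}\Tr(\mathcal{F}_K^t(\Sigma_0))$ and the reduction, via $\Sigma_0\succeq\underline{\sigma}(\Sigma_0)I$ and monotonicity of $\mathcal{F}_K$, to showing $\Tr(\mathcal{F}_K^t(I))\geq\rho(\mathcal{F}_K)^t$), but they diverge on how that last inequality is established. You invoke the cone Perron--Frobenius (Krein--Rutman) theorem to produce a PSD eigenmatrix $V$ with $\mathcal{F}_K(V)=\rho V$, normalize $\Tr(V)=1$ so that $V\preceq I$, and push $V$ through $\mathcal{F}_K^t$. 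The paper instead expands $\mathcal{F}_K^t(I)$ as a sum of $(1+p+q)^t$ terms of the form $M_{\mathbf i}M_{\mathbf i}^\intercal$, matches these one-to-one with the terms $M_{\mathbf i}\otimes M_{\mathbf i}$ in the expansion of the matrix representation $\mathcal{F}_K^t$, and uses $\Tr(MM^\intercal)=\|M\|_F^2=\|M\otimes M\|_F$ together with the triangle inequality to conclude $\Tr(\mathcal{F}_K^t(I))\geq\|\mathcal{F}_K^t\|_F\geq\rho(\mathcal{F}_K)^t$ --- a purely algebraic ``term matching'' induction that the authors list among their technical contributions. Your route is shorter and more conceptual, but it carries one extra obligation you should discharge explicitly: Krein--Rutman applied on the PSD cone yields an eigenmatrix for the spectral radius of $\mathcal{F}_K$ \emph{restricted to the symmetric subspace}, whereas $\rho(\mathcal{F}_K)$ in the lemma refers to the full $n^2\times n^2$ matrix representation. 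For completely positive maps $X\mapsto\sum_i c_i M_i X M_i^\intercal$ these two spectral radii coincide (for instance because $\|\mathcal{F}_K^t(X)\|\leq\|\mathcal{F}_K^t(I)\|$ for all $\|X\|\leq 1$ by a $2\times 2$ block-PSD argument, so Gelfand's formula gives $\rho(\mathcal{F}_K)=\lim_t\|\mathcal{F}_K^t(I)\|^{1/t}$), but this identification deserves a sentence; the paper's argument sidesteps the issue entirely by working directly with the Frobenius norm of the full representation.
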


\begin{proof}
    We have by \eqref{eq:TK_char} that
    \begin{align} \label{eq:sk_trace_bound1}
        \Tr (\Sigma_{K})    &= \Tr (\mathcal{T}_K(\Sigma_0)) = \sum_{t=0}^\infty \Tr ({\mathcal{F}^t_K(\Sigma_0)}) .
    \end{align}
    Since $\Sigma_0 \succeq {\underline{\sigma} (\Sigma_0)} I$ we know the $t^{th}$ term satisfies the inequality
    $
        {\mathcal{F}^t_K(\Sigma_0)} \geq {\underline{\sigma} (\Sigma_0)} {\mathcal{F}^t_K(I)} ,
    $
    so we have
    \begin{align} \label{eq:sk_trace_bound2}
        \Tr (\Sigma_{K}) \geq {\underline{\sigma} (\Sigma_0)} \sum_{t=0}^\infty \Tr ({\mathcal{F}^t_K(I)}) .
    \end{align}
    We have a generic inequality for a sum of $n$ matrices $M_i$:
    \begin{align}
        \Tr \left[ \sum_i^n M_i M_i^\intercal \right]   = \sum_i^n \Tr \left[ M_i M_i^\intercal \right] = \sum_i^n \left\| M_i \right\|_F^2 \label{eq:sum_kron_fro}
                                                        = \sum_i^n \left\| M_i \otimes M_i \right\|_F \geq  \left\| \sum_i^n M_i \otimes M_i \right\|_F \nonumber
    \end{align}
    where the last step is due to the triangle inequality.
    Recalling the definitions of $\mathcal{F}^t_K(I)$ and $\mathcal{F}^t_K$ we see they are of the form of the LHS and RHS in \eqref{eq:sum_kron_fro} with all terms matched between $\mathcal{F}^t_K(I)$ and $\mathcal{F}^t_K$ so that the inequality in \eqref{eq:sum_kron_fro} holds; this can be seen by starting with $t=1$ and incrementing $t$ up by $1$ which will give $(1+p+q)^t$ terms which are all matched.
    Thus,
    \begin{align}
        \Tr [\mathcal{F}^t_K(I)] \geq \|\mathcal{F}^t_K\|_F \geq \rho(\mathcal{F}_K)^t .
    \end{align}
    Continuing from \eqref{eq:sk_trace_bound2} we have
    \begin{align}
        \Tr (\Sigma_{K}) \geq {\underline{\sigma} (\Sigma_0)} \sum_{t=0}^\infty \rho(\mathcal{F}_K)^t .
    \end{align}
    By hypothesis $\rho(\mathcal{F}_K) < 1$, and taking the sum of the geometric series completes the proof.
\end{proof}

\begin{Lem}[$\Sigma_K$ perturbation] \label{lemma:Sigma_K_perturbation} 
    If $K$ is mean-square stabilizing and
    $\|\Delta\| \leq h_\Delta(K)$
    where $h_\Delta(K)$ is the polynomial
    \begin{align} 
        h_\Delta(K)    & \Let \frac{\underline{\sigma}(Q) {\underline{\sigma} (\Sigma_0)}}{4 h_B C(K)  \left(\|A_{K}\|+1\right)},
    \end{align}
    then the associated state covariance matrices satisfy
    \begin{align*} 
        \|\Sigma_{K^\prime} \! - \! \Sigma_{K}\|   
        \leq 4 \left( \frac{C(K)}{\underline{\sigma}(Q)} \right)^2  \frac{\|B\| (\|A_{K}\|+1)}{{\underline{\sigma} (\Sigma_0)}} \|\Delta\|
        \leq \frac{C(K)}{\underline{\sigma}(Q)} .
    \end{align*}
\end{Lem}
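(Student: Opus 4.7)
The plan is to combine all of the perturbation machinery already built up, namely Lemmas~\ref{lemma:TK_norm_bound}, \ref{lemma:F_K_perturbation}, \ref{lemma:T_K_perturbation}, and the cost bounds of Lemma~\ref{lemma:cost_bounds}, in sequence. The starting point is the identity $\Sigma_{K^\prime} - \Sigma_K = (\mathcal{T}_{K^\prime} - \mathcal{T}_K)(\Sigma_0)$, so everything reduces to quantifying this operator perturbation applied to the fixed matrix $\Sigma_0$.

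First I would verify the ``small perturbation'' hypothesis of Lemma~\ref{lemma:T_K_perturbation}, namely $\|\mathcal{T}_K\|\,\|\mathcal{F}_{K^\prime}-\mathcal{F}_K\|\leq \tfrac{1}{2}$. Lemma~\ref{lemma:F_K_perturbation} gives a linear-plus-quadratic bound $2\|A_K\|\|B\|\|\Delta\| + h_B\|B\|\|\Delta\|^2$. The key simplification step is to absorb the quadratic term using the assumption $\|\Delta\|\leq h_\Delta(K)$: since the cost bound $C(K)\geq \underline{\sigma}(Q)\underline{\sigma}(\Sigma_0)$ (from Lemma~\ref{lemma:cost_bounds} combined with $\Sigma_K \succeq \Sigma_0$) makes the factor $h_\Delta(K)$ quite small, one obtains $h_B\|B\|\|\Delta\|^2 \leq \tfrac{1}{4}\|B\|\|\Delta\|$, and hence the clean bound
\[
\|\mathcal{F}_{K^\prime}-\mathcal{F}_K\| \leq 2\|B\|(\|A_K\|+1)\|\Delta\|.
\]
Combining this with the bound $\|\mathcal{T}_K\|\leq C(K)/({\underline{\sigma}(\Sigma_0)}\underline{\sigma}(Q))$ from Lemma~\ref{lemma:TK_norm_bound} and again invoking $\|\Delta\|\leq h_\Delta(K)$, I expect the $(\|A_K\|+1)$ and $C(K)$ factors to cancel cleanly and leave $\|B\|/(2h_B)\leq \tfrac{1}{2}$, since $h_B\geq \|B\|$ by definition. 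This verifies the hypothesis.

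With the hypothesis in hand, I would invoke Lemma~\ref{lemma:T_K_perturbation} in its sharper form
\[
\|\Sigma_{K^\prime}-\Sigma_K\| \;\leq\; 2\|\mathcal{T}_K\|\,\|\mathcal{F}_{K^\prime}-\mathcal{F}_K\|\,\|\mathcal{T}_K(\Sigma_0)\|,
\]
and then substitute the three ingredients: the operator norm bound on $\mathcal{T}_K$, the simplified perturbation bound on $\mathcal{F}_{K^\prime}-\mathcal{F}_K$, and $\|\mathcal{T}_K(\Sigma_0)\|=\|\Sigma_K\|\leq C(K)/\underline{\sigma}(Q)$ from Lemma~\ref{lemma:cost_bounds}. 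The product yields exactly the first claimed inequality
\[
\|\Sigma_{K^\prime}-\Sigma_K\| \leq 4\!\left(\tfrac{C(K)}{\underline{\sigma}(Q)}\right)^{\!2}\!\tfrac{\|B\|(\|A_K\|+1)}{\underline{\sigma}(\Sigma_0)}\|\Delta\|.
\]
The second inequality then follows by plugging in $\|\Delta\|\leq h_\Delta(K)$ one more time, which again triggers the cancellations of $C(K)$ and $(\|A_K\|+1)$ and leaves $C(K)\|B\|/(h_B\underline{\sigma}(Q))\leq C(K)/\underline{\sigma}(Q)$ via $\|B\|\leq h_B$.

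The main obstacle I anticipate is the bookkeeping in the $\mathcal{F}_K$ perturbation step: Lemma~\ref{lemma:F_K_perturbation} gives a mixed linear/quadratic bound in $\|\Delta\|$, and the statement of the present lemma requires a purely linear bound with a specific leading constant $(\|A_K\|+1)$ and no $h_B$ factor. Getting the quadratic tail absorbed requires carefully exploiting the universal inequality $C(K)\geq \underline{\sigma}(Q)\underline{\sigma}(\Sigma_0)$ so that $h_\Delta(K)$ is small enough; this is what forces the factor of $4$ (rather than $2$) in the denominator of $h_\Delta(K)$. All other steps are direct substitutions of previously established bounds.
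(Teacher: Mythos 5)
Your quantitative chain is essentially the paper's own argument and it checks out: absorb the quadratic tail of Lemma~\ref{lemma:F_K_perturbation} using $h_B\|\Delta\|\leq\tfrac14$ (which follows from $C(K)\geq\underline{\sigma}(Q)\underline{\sigma}(\Sigma_0)$), verify $\|\mathcal{T}_K\|\|\mathcal{F}_{K'}-\mathcal{F}_K\|\leq\tfrac{\|B\|}{2h_B}\leq\tfrac12$, apply Lemma~\ref{lemma:T_K_perturbation} with $\|\mathcal{T}_K(\Sigma_0)\|=\|\Sigma_K\|\leq C(K)/\underline{\sigma}(Q)$, and cancel against $h_\Delta(K)$ for the second inequality. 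All of those constants work out exactly as you predict.

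However, there is a genuine gap: you never establish that $K'$ is mean-square stabilizing. The lemma only \emph{assumes} this of $K$, and Lemma~\ref{lemma:T_K_perturbation} explicitly requires \emph{both} $K$ and $K'$ to be mean-square stabilizing --- without $\rho(\mathcal{F}_{K'})<1$ the series defining $\mathcal{T}_{K'}(\Sigma_0)=\Sigma_{K'}$ diverges and the identity $\Sigma_{K'}-\Sigma_K=(\mathcal{T}_{K'}-\mathcal{T}_K)(\Sigma_0)$ that your whole argument rests on is meaningless. This is not a formality: showing that the perturbation radius $h_\Delta(K)$ preserves mean-square stability is where the paper spends the first half of its proof, via a contradiction argument. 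One assumes a destabilizing $K'$ exists within the radius, uses continuity of the spectral radius to find an intermediate gain $K'''$ on the path with $\rho(\mathcal{F}_{K'''})=1-\epsilon$, and then plays the trace lower bound of Lemma~\ref{lemma:S_K_trace_bound} (which forces $\Tr(\Sigma_{K'''})\geq 2\Gamma$) against the perturbation upper bound (which forces $\Tr(\Sigma_{K'''})\leq\Gamma$). That step is the reason Lemma~\ref{lemma:S_K_trace_bound} exists at all, and it is one of the places where the multiplicative-noise setting genuinely differs from the deterministic case (stability must be tracked through $\rho(\mathcal{F}_K)$ rather than $\rho(A_K)$). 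Your proof needs this argument, or some substitute for it, before any of the operator bounds can be invoked.
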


\begin{proof}
    First we show that $K$ is mean-square stabilizing and $ \|\Delta\| \leq h_\Delta(K) $ then $K{^\prime}$ is also mean-square stabilizing. This follows from an analogous argument in \cite{Fazel2018} by characterizing mean-square stability in terms of $\rho(\mathcal{F}_K)$ rather than $\rho(A_K)$ and using Lemma \ref{lemma:S_K_trace_bound}.
    Let $K^{\prime\prime}$ be distinct from $K$ with $\rho(\mathcal{F}_{K^{\prime\prime}}) < 1$ and $ \|K^{\prime\prime}- K\| \leq h_\Delta $.
    We have
    \begin{align}
        |\Tr(\Sigma_{K^{\prime\prime}} - \Sigma_{K} )| \leq n \|\Sigma_{K^{\prime\prime}} - \Sigma_{K}\| .
    \end{align}
    Since $K$ and $K^{\prime\prime}$ are mean-square stabilizing Lemma \ref{lemma:Sigma_K_perturbation} holds so we have
    \begin{align}
        |\Tr(\Sigma_{K^{\prime\prime}} - \Sigma_{K} )| \leq n  \frac{C(K)}{\sigma_{\min}(Q)} 
        \Rightarrow
        \Tr(\Sigma_{K^{\prime\prime}}) \leq \Gamma ,
    \end{align}
    where we define
    \begin{align}
        \Gamma \Let \Tr(\Sigma_{K}) + n \frac{C(K)}{\sigma_{\min}(Q)} , \quad \epsilon = \frac{\sigma_{\min} (\Sigma_0)}{2\Gamma} .
    \end{align}
    Using Lemma \ref{lemma:S_K_trace_bound} we have
    \begin{align}
        \Tr \left(\Sigma_{K}\right) \geq \frac{{\sigma_{\min} (\Sigma_0)}}{1-\rho(\mathcal{F}_K)} = \frac{{2 \epsilon \Gamma}}{1-\rho(\mathcal{F}_K)} 
    \end{align}
    Rearranging and substituting for $\Gamma$,
    \begin{align}
        \rho(\mathcal{F}_K)     \leq 1 - \frac{2 \epsilon \Gamma} {\Tr (\Sigma_{K})} &= 1 - \frac{2 \epsilon \left(\Tr(\Sigma_{K}) + n \frac{C(K)}{\sigma_{\min}(Q)}\right)} {\Tr (\Sigma_{K})} \nonumber \\
                                & = 1 - 2 \epsilon \left( 1 +  \frac{n C(K)} {\sigma_{\min}(Q) \Tr (\Sigma_{K})} \right) \\
                                & < 1-2\epsilon < 1-\epsilon .
    \end{align}
    Now we construct the proof by contradiction. Suppose there is a $K^\prime$ with $\rho(\mathcal{F}_{K^{\prime}}) > 1$ satisfying the perturbation restriction
    \begin{align}
        \|K^\prime-K\| \leq h_\Delta .
    \end{align}
    Since spectral radius is a continuous function (see \cite{Tyrtyshnikov2012}) there must be a point $K^{\prime\prime\prime}$ on the path between $K$ and $K^\prime$ such that $\rho(\mathcal{F}_{K^{\prime\prime\prime}}) = 1-\epsilon < 1$. Since $K$ and $K^{\prime\prime\prime}$ are mean-square stabilizing Lemma \ref{lemma:Sigma_K_perturbation} holds so we have 
    \begin{align}
        |\Tr(\Sigma_{K^{\prime\prime\prime}} - \Sigma_{K} )| \leq n  \frac{C(K)}{\sigma_{\min}(Q)},
    \end{align}
    and rearranging
    \begin{align}
        \Tr(\Sigma_{K^{\prime\prime\prime}}) \leq \Tr(\Sigma_{K}) + n \frac{C(K)}{\sigma_{\min}(Q)} = \Gamma .
    \end{align}
    However since $K^{\prime\prime\prime}$ is mean-square stabilizing Lemma \ref{lemma:S_K_trace_bound} holds so we have
    \begin{align}
        \Tr \left(\Sigma_{K^{\prime\prime\prime}}\right) \geq \frac{{\sigma_{\min} (\Sigma_0)}}{1-\rho(\mathcal{F}_{K^{\prime\prime\prime}})} = \frac{{2 \epsilon \Gamma}}{1-(1-\epsilon)} = 2 \Gamma
    \end{align}
    which is a contradiction. Therefore no such mean-square unstable $K^\prime$ satisfying the hypothesized perturbation restriction can exist, completing the first part of the proof.
    
    The rest of the proof follows \cite{Fazel2018} by using the condition on $\|\Delta\|$, ${\|\Sigma_K\| \geq {\underline{\sigma} (\Sigma_0)}}$, and Lemmas \ref{lemma:cost_bounds}, \ref{lemma:TK_norm_bound}, and \ref{lemma:T_K_perturbation}.
    The condition on $\|\Delta\|$ directly implies
    \begin{align}
        h_B \|\Delta\| &\leq \frac{\underline{\sigma}(Q) {\underline{\sigma} (\Sigma_0)}}{4C(K) \left(\|A_K\|+1\right)} \leq \frac{\underline{\sigma}(Q) {\underline{\sigma} (\Sigma_0)}}{4C(K)} \leq \frac{1}{4} . 
    \end{align}
    where the last step is due to the combination of Lemma \ref{lemma:cost_bounds} with ${\|\Sigma_K\| \geq {\underline{\sigma} (\Sigma_0)}}$.
    By Lemma \ref{lemma:F_K_perturbation} we have
    \begin{align}
        \| \mathcal{F}_{K{^\prime}} - \mathcal{F}_{K} \|    & \leq 2  \|B\|\|\Delta\| \big(  \| A+BK \| + \|\Delta\| h_B/2   \big) \\
                                                            & \leq 2  \|B\|\|\Delta\| \big( \| A+BK \| + 1/8 \big) \\
                                                            & \leq 2  \|B\|\|\Delta\| \big( \| A+BK \| + 1 \big) .
    \end{align}
    Combining this with Lemma \ref{lemma:TK_norm_bound} we have
    \begin{align}
      \|T_{K}\| \| \mathcal{F}_{K{^\prime}} - \mathcal{F}_{K} \| 
      \leq \left(\frac{C(K)}{{\underline{\sigma} (\Sigma_0)} \underline{\sigma}(Q)}\right) \left(2  \|B\|\|\Delta\| \left( \| A_{K} \| + 1 \right)\right) . \label{eq:Sigma_K_perturbation_intstep2}
    \end{align}
    By the condition on $\|\Delta\|$ we have
    \begin{align}
        \|\Delta\|  &\leq \frac{\underline{\sigma}(Q) {\underline{\sigma} (\Sigma_0)}}{4C(K) \|B\| \left(\|A_{K}\|+1\right)} ,
    \end{align}
    so 
    \begin{align}
      \|T_{K}\| \| \mathcal{F}_{K{^\prime}} - \mathcal{F}_{K} \| &\leq \frac{1}{2}
    \end{align}
    which allows us to use Lemma \ref{lemma:T_K_perturbation} by which we have
    \begin{align*}
        \| (\mathcal{T}_{K}-\mathcal{T}_{K})(\Sigma_0) \| \leq 2 \| \mathcal{T}_{K} \| \|\mathcal{F}_{K{^\prime}} - \mathcal{F}_{K} \| \| \mathcal{T}_{K}(\Sigma_0) \| 
        \leq \left(\frac{2 C(K)}{{\underline{\sigma} (\Sigma_0)} \underline{\sigma}(Q)}\right) \left(2  \|B\|\|\Delta\| \left( \| A_{K} \| + 1 \right)\right) \| \mathcal{T}_{K}(\Sigma_0) \| 
    \end{align*}
    where the last step used \eqref{eq:Sigma_K_perturbation_intstep2}.
    Using $\mathcal{T}_{K}(\Sigma_0)=\Sigma_K$ gives
    \begin{align*}
        \| \Sigma_{K{^\prime}}-\Sigma_{K} \| 
        \leq  \left(\frac{2 C(K)}{{\underline{\sigma} (\Sigma_0)} \underline{\sigma}(Q)}\right) \left(2  \|B\|\|\Delta\| \left( \| A_{K} \| + 1 \right)\right) \| (\Sigma_{K}) \| .
    \end{align*}
    Using Lemma \ref{lemma:cost_bounds} completes the proof.
\end{proof}

Now we bound the one step progress of policy gradient where we allow the step size to depend explicitly on the current gain matrix iterate $K_s$.
\begin{Lem}[Gradient descent, one-step] \label{lemma:grad_exact_one_step} 
    Using the policy gradient step update
    $
        K_{s+1} = K_s - \eta \nabla C(K_s)
    $
    with step size 
    \begin{align*}
        0 < \eta \leq \frac{1}{16}\min \Bigg\{ \frac{ \left( \frac{\underline{\sigma}(Q) {\underline{\sigma} (\Sigma_0)}}{C(K)} \right)^2 }{ h_B \|\nabla C(K)\| (\|A_K\|+1)} , \frac{\underline{\sigma}(Q)}{C(K) \|R_K\|} \Bigg\}
    \end{align*}
    gives the one step progress bound
    \begin{align}
      \frac{ C(K_{s+1}) - C(K^*)}{ C(K_{s}) - C(K^*)} \leq  1 - 2 \eta \frac{ \underline{\sigma}(R) {\underline{\sigma} (\Sigma_0)}^2}{\|\Sigma_{K^*}\|} .
    \end{align}
\end{Lem}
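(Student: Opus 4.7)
My plan is to combine the almost-smoothness identity (Lemma \ref{lemma:almost_smooth}) with gradient domination (Lemma \ref{lemma:gradient_dominated}), using the covariance perturbation bound (Lemma \ref{lemma:Sigma_K_perturbation}) to control the discrepancy between $\Sigma_{K_{s+1}}$ and $\Sigma_{K_s}$ which appears in the almost-smoothness expansion. With $\Delta = -\eta \nabla C(K_s)$ and the splitting $\Sigma_{K_{s+1}} = \Sigma_{K_s} + (\Sigma_{K_{s+1}} - \Sigma_{K_s})$, Lemma \ref{lemma:almost_smooth} yields
\begin{align*}
    C(K_{s+1}) - C(K_s) = -\eta \|\nabla C(K_s)\|_F^2 \;+\; \mathrm{I} \;+\; \mathrm{II},
\end{align*}
where $\mathrm{I} \Let 2\Tr[(\Sigma_{K_{s+1}}-\Sigma_{K_s})\Delta^\intercal E_{K_s}]$ and $\mathrm{II} \Let \Tr[\Sigma_{K_{s+1}}\Delta^\intercal R_{K_s}\Delta]$; the leading linear-in-$\eta$ progress term is obtained by using $\nabla C(K_s)=2E_{K_s}\Sigma_{K_s}$ and cyclicity of trace. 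The objective is then to show that each of $\mathrm{I}$ and $\mathrm{II}$ is in magnitude at most $\tfrac{1}{4}\eta\|\nabla C(K_s)\|_F^2$, after which gradient domination closes the argument.

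For $\mathrm{II}$, I would use Lemma \ref{lemma:Sigma_K_perturbation} together with Lemma \ref{lemma:cost_bounds} to obtain $\|\Sigma_{K_{s+1}}\| \leq 2C(K_s)/\underline{\sigma}(Q)$, giving $|\mathrm{II}| \leq 2 C(K_s) \|R_{K_s}\| \eta^2 \|\nabla C(K_s)\|_F^2 / \underline{\sigma}(Q)$; the second step-size cap $\eta \leq \underline{\sigma}(Q)/(16 C(K_s)\|R_{K_s}\|)$ is precisely tuned to absorb this into $\tfrac{1}{8}\eta\|\nabla C(K_s)\|_F^2$. For $\mathrm{I}$, Lemma \ref{lemma:Sigma_K_perturbation} yields $\|\Sigma_{K_{s+1}}-\Sigma_{K_s}\| \leq 4 (C(K_s)/\underline{\sigma}(Q))^2 \|B\|(\|A_{K_s}\|+1) \|\Delta\|/\underline{\sigma}(\Sigma_0)$ (which also requires $\|\Delta\| \leq h_\Delta(K_s)$, itself implied by the first step-size cap), while $E_{K_s} = \tfrac{1}{2} \nabla C(K_s)\Sigma_{K_s}^{-1}$ gives $\|E_{K_s}\|_F \leq \|\nabla C(K_s)\|_F / (2\underline{\sigma}(\Sigma_0))$. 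A trace-norm inequality such as $|\Tr[MN]| \leq \|M\| \cdot n\|N\|$ then produces a bound of the shape $|\mathrm{I}| \leq C_1 \eta^2 \|\nabla C(K_s)\|_F^3$ with $C_1$ a polynomial in $C(K_s)$, $\|B\|$, $\|A_{K_s}\|+1$, $h_B$, $\underline{\sigma}(Q)^{-1}$, $\underline{\sigma}(\Sigma_0)^{-1}$. The first step-size cap, whose denominator contains $\|\nabla C(K_s)\|$, is designed to absorb this into $\tfrac{1}{8}\eta\|\nabla C(K_s)\|_F^2$ as well.

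The principal obstacle is the bookkeeping for $\mathrm{I}$: the covariance perturbation bound is only linear in $\|\Delta\|$, not quadratic, and an extra factor of $\|\nabla C(K_s)\|$ emerges from bounding $\|E_{K_s}\|$. This is exactly why the step-size restriction is phrased with $\|\nabla C(K_s)\|$ in the denominator, so that $\eta \|\nabla C(K_s)\|$ is capped by a polynomial in the problem data rather than $\eta$ alone. Once both error terms are absorbed and $C(K_{s+1}) - C(K_s) \leq -\tfrac{1}{2}\eta\|\nabla C(K_s)\|_F^2$ is established, Lemma \ref{lemma:gradient_dominated} supplies $\|\nabla C(K_s)\|_F^2 \geq 4 \underline{\sigma}(R)\underline{\sigma}(\Sigma_0)^2 (C(K_s)-C(K^*))/\|\Sigma_{K^*}\|$, and subtracting $C(K^*)$ from both sides and rearranging yields the claimed one-step contraction $C(K_{s+1}) - C(K^*) \leq (1 - 2\eta\underline{\sigma}(R)\underline{\sigma}(\Sigma_0)^2/\|\Sigma_{K^*}\|)(C(K_s) - C(K^*))$.
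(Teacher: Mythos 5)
Your proposal is correct and follows essentially the same route as the paper's proof: expand via almost-smoothness with $\Delta=-2\eta E_{K_s}\Sigma_{K_s}$, split off the leading $-\eta\|\nabla C(K_s)\|_F^2$ term, absorb the covariance-perturbation cross term and the quadratic-in-$\eta$ term using Lemma \ref{lemma:Sigma_K_perturbation} and the two step-size caps (the $\|\nabla C(K)\|$ in the first cap playing exactly the role you identify), and finish with gradient domination. The only difference is cosmetic bookkeeping — the paper normalizes the error terms relative to $\Tr[\Sigma_{K_s}\Sigma_{K_s}E_{K_s}^\intercal E_{K_s}]$ via $\underline{\sigma}(\Sigma_{K_s})$ rather than working with $\|\nabla C\|_F^2$ and $\|E_{K_s}\|_F$ directly, and the constants close in both versions.
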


\begin{proof}
The gradient update yields
$
    \Delta =  -2\eta E_{K_s} \Sigma_{K_s} .
$
Putting this into Lemma \ref{lemma:almost_smooth} gives
\begin{align*}       
    & C(K_{s+1}) - C(K_s) \nonumber \\
    & = 2\Tr\big[ \Sigma_{K_{s+1}} \Delta^\intercal E_{K_s} \big] + \Tr\big[ \Sigma_{K_{s+1}} \Delta^\intercal R_{K_s}\Delta \big] \\
    & = -4\eta\Tr\big[ \Sigma_{K_{s+1}} \Sigma_{K_s} E_{K_s}^\intercal E_{K_s} \big]  + 4\eta^2 \Tr\big[ \Sigma_{K_{s}} \Sigma_{K_{s+1}}  \Sigma_{K_s} E_{K_s}^\intercal R_{K_s} E_{K_s} \big] \\
    & = -4\eta\Tr\big[ \Sigma_{K_{s}} \Sigma_{K_s} E_{K_s}^\intercal E_{K_s} \big] 
    + 4\eta\Tr\big[ (-\Sigma_{K_{s+1}}+\Sigma_{K_{s}}) \Sigma_{K_s} E_{K_s}^\intercal E_{K_s} \big] \\
    &\quad + 4\eta^2 \Tr\big[ \Sigma_{K_{s}} \Sigma_{K_{s+1}}  \Sigma_{K_s} E_{K_s}^\intercal R_{K_s} E_{K_s} \big] \\
    & \leq -4\eta\Tr\big[ \Sigma_{K_{s}} \Sigma_{K_s} E_{K_s}^\intercal E_{K_s} \big] 
    +4 \eta \|\Sigma_{K_{s+1}}-\Sigma_{K_{s}}\| \Tr\big[ \Sigma_{K_s} E_{K_s}^\intercal E_{K_s} \big] \\
    & \quad + 4\eta^2 \|\Sigma_{K_{s+1}} \| \|R_{K_s} \| \Tr\big[ \Sigma_{K_{s}} \Sigma_{K_s} E_{K_s}^\intercal  E_{K_s} \big] \\
    & \leq -4\eta\Tr\big[ \Sigma_{K_{s}} \Sigma_{K_s} E_{K_s}^\intercal E_{K_s} \big]
    +4 \eta \frac{\|\Sigma_{K_{s+1}}-\Sigma_{K_{s}}\|}{\underline{\sigma}(\Sigma_{K_{s}})} \Tr\big[ \Sigma_{K_s}^\intercal E_{K_s}^\intercal E_{K_s} \Sigma_{K_s} \big] \\
    & \quad + 4\eta^2 \|\Sigma_{K_{s+1}} \| \|R_{K_s} \| \Tr\big[ \Sigma_{K_{s}} \Sigma_{K_s} E_{K_s}^\intercal  E_{K_s} \big] \\
    & = -4\eta \bigg( 1 - \frac{\|\Sigma_{K_{s+1}}-\Sigma_{K_{s}}\|}{\underline{\sigma}(\Sigma_{K_{s}})} - \eta \|\Sigma_{K_{s+1}} \| \|R_{K_s} \| \bigg) 
    \times \Tr\big[ \Sigma_{K_{s}}^\intercal  E_{K_s}^\intercal E_{K_s} \Sigma_{K_s} \big] \\
    & = -\eta \bigg( 1 - \frac{\|\Sigma_{K_{s+1}}-\Sigma_{K_{s}}\|}{\underline{\sigma}(\Sigma_{K_{s}})} - \eta \|\Sigma_{K_{s+1}} \| \|R_{K_s} \| \bigg) 
    \times \Tr\big[ \nabla C(K_{s})^\intercal \nabla C(K_{s}) \big] \\
    & \leq -\eta \bigg( 1 - \frac{\|\Sigma_{K_{s+1}}-\Sigma_{K_{s}}\|}{{\underline{\sigma} (\Sigma_0)}} - \eta \|\Sigma_{K_{s+1}} \| \|R_{K_s} \| \bigg) 
    \times \Tr\big[ \nabla C(K_{s})^\intercal \nabla C(K_{s}) \big] \\
    & \leq -\eta \bigg( 1 - \frac{\|\Sigma_{K_{s+1}}-\Sigma_{K_{s}}\|}{{\underline{\sigma} (\Sigma_0)}} - \eta \|\Sigma_{K_{s+1}} \| \|R_{K_s} \| \bigg)
    \times 4 \frac{\underline{\sigma}(R) {\underline{\sigma} (\Sigma_0)}^2 }{\|\Sigma_{K^*}\|}  (C(K_s)-C(K^*)) .
\end{align*}
where the last step is due to $\underline{\sigma} (\Sigma_0) \leq \underline{\sigma}(\Sigma_{K_{s}})$ and Lemma \ref{lemma:gradient_dominated}.
Note that the assumed condition on the step size ensures the gain matrix difference satisfies the condition for Lemma \ref{lemma:Sigma_K_perturbation} as follows:
\begin{align}
   \|\Delta\| = \eta \|\nabla C(K_s)\| 
                &\leq  \frac{1}{16} \left( \frac{\underline{\sigma}(Q) {\underline{\sigma} (\Sigma_0)}}{C(K_s)} \right)^2 \frac{\|\nabla C(K_s)\|}{h_B \|\nabla C(K_s)\| (\|A_{K_s}\|+1)}  \\
                &\leq \frac{1}{4} \left( \frac{\underline{\sigma}(Q) {\underline{\sigma} (\Sigma_0)}}{C(K_s)} \right)^2 \frac{1}{h_B  (\|A_{K_s}\|+1)}  
                \leq h_\Delta(K)
\end{align}
where the last inequality is due to Lemma \ref{lemma:cost_bounds}.
Thus we can indeed apply Lemma \ref{lemma:Sigma_K_perturbation}, by which we have
\begin{align*}
    \frac{\|\Sigma_{K_{s+1}}-\Sigma_{K_s}\|}{{\underline{\sigma} (\Sigma_0)}}  
    \leq  \frac{4 C(K_{s})^2}{\underline{\sigma}(Q)^2 {\underline{\sigma} (\Sigma_0)^2}}  \|B\| (\|A_{K_{s}}\|+1) \|\Delta\| \leq \frac{1}{4} 
\end{align*}
where the last inequality is due to using the substitution ${\|\Delta\|  = \eta \|\nabla C(K_s)\|}$ and the hypothesized condition on $\eta$.
Using this and Lemma \ref{lemma:cost_bounds} we have
\begin{align*}
    \|\Sigma_{K_{s+1}}\|    \leq \|\Sigma_{K_{s+1}}-\Sigma_{K_s}\|+\|\Sigma_{K_s}\| 
                            \leq \frac{{\underline{\sigma} (\Sigma_0)}}{4}+\frac{C(K_s)}{\underline{\sigma}(Q)} \leq \frac{\|\Sigma_{K_{s+1}}\|}{4}+\frac{C(K_s)}{\underline{\sigma}(Q)} .
\end{align*}
Solving for $\|\Sigma_{K_{s+1}}\|$ gives
$
    \|\Sigma_{K_{s+1}}\| \leq \frac{4}{3} \frac{C(K_s)}{\underline{\sigma}(Q)} ,
$
so
\begin{align}
    1 - \frac{\|\Sigma_{K_{s+1}}-\Sigma_{K_{s}}\|}{{\underline{\sigma} (\Sigma_0)}} - \eta \|\Sigma_{K_{s+1}} \| \|R_{K_s} \|   
    \geq 1 - \frac{1}{4} - \eta \frac{4}{3} \frac{C(K_s)}{\underline{\sigma}(Q)} \|R_{K_s}\| 
    \geq 1 - \frac{1}{4} - \frac{4}{3} \cdot \frac{1}{16} = \frac{2}{3} \geq \frac{1}{2}
\end{align}
where the second-to-last inequality used the hypothesized condition on $\eta$.
Therefore
\begin{align}
    \frac{C(K_{s+1}) - C(K_s)}{C(K_s)-C(K^*)}
    & \leq - 2 \eta \frac{\underline{\sigma}(R) {\underline{\sigma} (\Sigma_0)}^2 }{\|\Sigma_{K^*}\|} .
\end{align}
Adding 1 to both sides completes the proof.
\end{proof}

\begin{Lem}[Cost difference lower bound] \label{lemma:cost_difference_lower_bound} 
    The following cost difference inequality holds:
    \begin{align}
        C(K)-C(K^*) \geq \frac{{\underline{\sigma} (\Sigma_0)}}{\|R_K\|} \Tr(E_K^\intercal E_K) .
    \end{align}    
\end{Lem}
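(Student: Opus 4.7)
The plan is to mirror the upper bound construction in the gradient domination proof (Lemma \ref{lemma:gradient_dominated}) but with the comparison policy chosen as the one-step policy-iteration update from $K$, rather than $K^*$. Specifically, define
\begin{align*}
    \tilde K \Let K - R_K^{-1} E_K ,
\end{align*}
which is exactly the Gauss--Newton iterate from $K$ with step size $\eta = \tfrac{1}{2}$ (using the identity $\nabla C(K) = 2 E_K \Sigma_K$). By Theorem \ref{thm:gauss_newton_exact}, $\tilde K$ is mean-square stabilizing whenever $K$ is, so $C(\tilde K)$ is finite. Since $K^*$ is globally optimal, $C(K^*) \leq C(\tilde K)$, and hence $C(K) - C(K^*) \geq C(K) - C(\tilde K)$; the task reduces to lower bounding $C(K) - C(\tilde K)$.

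Next, I would apply Lemma \ref{lemma:value_difference} with $K{^\prime}$ replaced by $\tilde K$ to write
\begin{align*}
    C(\tilde K) - C(K) = \underset{x_0}{\mathbb{E}}\, \underset{\delta,\gamma}{\mathbb{E}} \sum_{t=0}^\infty \mathcal{A}_{K}\big(\{x_t\}_{\tilde K,x},\, \tilde K \{x_t\}_{\tilde K,x}\big) .
\end{align*}
With $\Delta = \tilde K - K = -R_K^{-1} E_K$, the advantage formula from Lemma \ref{lemma:value_difference} combined with the completion-of-squares identity used already in \eqref{eq:grad_dom_ineq1} gives exactly the minimizer,
\begin{align*}
    \mathcal{A}_K(x,\tilde K x) = -\Tr\!\big[ x x^\intercal E_K^\intercal R_K^{-1} E_K \big].
\end{align*}
Summing, taking expectations, and recognizing the aggregate covariance $\Sigma_{\tilde K}$ of the closed loop under $\tilde K$, I obtain
\begin{align*}
    C(K) - C(\tilde K) = \Tr\!\big[ \Sigma_{\tilde K}\, E_K^\intercal R_K^{-1} E_K \big].
\end{align*}

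Finally, I would apply two standard PSD inequalities. From the dual generalized Lyapunov equation \eqref{eq:glyap2}, $\Sigma_{\tilde K} \succeq \Sigma_0 \succeq \underline{\sigma}(\Sigma_0) I$, so $\Tr[\Sigma_{\tilde K} M] \geq \underline{\sigma}(\Sigma_0)\,\Tr(M)$ for any $M \succeq 0$, applied with $M = E_K^\intercal R_K^{-1} E_K \succeq 0$. Also $R_K \succ 0$ implies $R_K^{-1} \succeq \|R_K\|^{-1} I$, hence $\Tr[E_K^\intercal R_K^{-1} E_K] \geq \|R_K\|^{-1} \Tr[E_K^\intercal E_K]$. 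Chaining these bounds yields
\begin{align*}
    C(K) - C(K^*) \;\geq\; C(K) - C(\tilde K) \;\geq\; \frac{\underline{\sigma}(\Sigma_0)}{\|R_K\|} \Tr(E_K^\intercal E_K),
\end{align*}
which is the desired inequality.

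The main (only) subtlety is the mean-square stabilizability of $\tilde K$; without it $\Sigma_{\tilde K}$ would be undefined and the identity $C(K) - C(\tilde K) = \Tr[\Sigma_{\tilde K} E_K^\intercal R_K^{-1} E_K]$ would be meaningless. I would resolve this by invoking Theorem \ref{thm:gauss_newton_exact} at $\eta = \tfrac12$, which shows the Gauss--Newton update maps $\mathcal{K}$ into itself and in fact strictly decreases the cost, thereby also re-confirming $C(\tilde K) \leq C(K)$ (though that direction is not what we need here).
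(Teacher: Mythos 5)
Your proof is correct and follows essentially the same route as the paper: the paper likewise chooses the comparison gain $K^\prime$ with $\Delta = -R_K^{-1}E_K$ (i.e., exactly your policy-iteration/Gauss--Newton iterate $\tilde K$), invokes $C(K^*)\leq C(K^\prime)$ and Lemma \ref{lemma:value_difference} with the equality case of the completed square to get $\Tr\big[\Sigma_{K^\prime}E_K^\intercal R_K^{-1}E_K\big]$, and then bounds $\Sigma_{K^\prime}\succeq \underline{\sigma}(\Sigma_0)I$ and $R_K^{-1}\succeq \|R_K\|^{-1}I$. Your explicit attention to the mean-square stabilizability of $\tilde K$ is a point the paper's own proof glosses over, but it does not change the argument.
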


\begin{proof}
    The proof follows that for an analogous condition located in the gradient domination lemma in \cite{Fazel2018}.
    Let $K$ and $K{^\prime}$ generate the (stochastic) state and action sequences 
    $ \{x_t\}_{K,x} \ , \{u_t\}_{K,x} $ and $ \{x_t\}_{K{^\prime},x} \ , \{u_t\}_{K{^\prime},x}$
    respectively. 
    By definition of the optimal gains we have $C(K^*) \leq C(K{^\prime})$.
    Then by Lemma \ref{lemma:value_difference} we have
    \begin{align}
        C(K) - C(K^*)   &\geq C(K) - C(K{^\prime}) \\
                        &= -\underset{x_0}{\mathbb{E}} \bigg[ \sum_{t=0}^\infty \mathcal{A}_{K}\big(\{x_t\}_{K{^\prime},x} , \{u_t\}_{K{^\prime},x}\big)\bigg] .
    \end{align}
    Choose $K^\prime$ such that $\Delta = K{^\prime}-K = - R_{K}^{-1} E_{K}$ so that \eqref{eq:grad_dom_ineq1} from Lemma \ref{lemma:gradient_dominated} holds with equality as
    \begin{align}
        \mathcal{A}_{K}(x,K{^\prime} x) = -\Tr\big[xx^\intercal E_{K}^\intercal R_{K}^{-1} E_{K} \big] .
    \end{align}
    Thus we have
    \begin{align*}
        C(K) - C(K^*)  
        & \geq \underset{x_0}{\mathbb{E}} \bigg[ \sum_{t=0}^\infty \Tr\bigg(\{x_t\}_{K{^\prime},x} \{x_t\}_{K{^\prime},x}^\intercal E_{K}^\intercal R_{K}^{-1} E_{K} \bigg)\bigg] \\
        & = \Tr(\Sigma_{K{^\prime}} E_{K}^\intercal R_{K}^{-1} E_{K}) \\
        &\geq \frac{{\underline{\sigma} (\Sigma_0)}}{\|R_K\|} \Tr(E_{K}^\intercal E_{K}) .
    \end{align*}
\end{proof}

\begin{Lem} \label{lemma:gradient_gain_bounds}
    The following inequalities hold:
    \begin{align*}
        \|\nabla C(K)\| \leq \|\nabla C(K)\|_F \leq h_1(K)  \quad \text{ and } \quad
        \|K\| \leq h_2(K) .
    \end{align*}
    where $h_0(K)$, $h_1(K)$, $h_2(K)$ are the polynomials
    \begin{align*}
        h_0(K)    & \Let \sqrt{\frac{\|R_{K}\|(C(K)-C(K^*))}{{\underline{\sigma} (\Sigma_0)}}} , \\
        h_1(K)    & \Let 2\frac{C(K) h_0(K)}{\underline{\sigma}(Q)}  , \quad
        h_2(K)      \Let \frac{h_0(K) + \|B^\intercal P_{K} A\|}{\underline{\sigma}(R)} .
    \end{align*}
\end{Lem}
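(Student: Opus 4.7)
The plan is to derive both bounds from the closed form of the policy gradient in Lemma \ref{lemma:policy_grad_exp} together with the earlier cost bounds (Lemma \ref{lemma:cost_bounds}) and the cost-difference lower bound (Lemma \ref{lemma:cost_difference_lower_bound}). The core quantity linking everything is $E_K = R_K K + B^\intercal P_K A$, which appears both in $\nabla C(K) = 2 E_K \Sigma_K$ and as the vehicle for solving for $K$.

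For the gradient bound, I would first note that $\|\nabla C(K)\| \le \|\nabla C(K)\|_F$ is immediate from the standard inequality between spectral and Frobenius norms. Then, using $\nabla C(K) = 2 E_K \Sigma_K$ and submultiplicativity with respect to the Frobenius/spectral pairing, I would write
\begin{align*}
    \|\nabla C(K)\|_F \;=\; 2\,\|E_K \Sigma_K\|_F \;\le\; 2\,\|\Sigma_K\|\,\|E_K\|_F.
\end{align*}
The factor $\|\Sigma_K\|$ is controlled by Lemma \ref{lemma:cost_bounds}, yielding $\|\Sigma_K\| \le C(K)/\underline{\sigma}(Q)$. The factor $\|E_K\|_F = \sqrt{\Tr(E_K^\intercal E_K)}$ is exactly what Lemma \ref{lemma:cost_difference_lower_bound} bounds: rearranging that lemma gives $\Tr(E_K^\intercal E_K) \le \|R_K\|(C(K)-C(K^*))/\underline{\sigma}(\Sigma_0) = h_0(K)^2$. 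Combining the two factors produces $2\,(C(K)/\underline{\sigma}(Q))\,h_0(K) = h_1(K)$, as required.

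For the gain bound, I would solve the definition $E_K = R_K K + B^\intercal P_K A$ for $K$ to obtain $K = R_K^{-1}(E_K - B^\intercal P_K A)$. Taking spectral norms and using the triangle inequality gives
\begin{align*}
    \|K\| \;\le\; \|R_K^{-1}\|\,\bigl(\|E_K\| + \|B^\intercal P_K A\|\bigr).
\end{align*}
Since $R_K = R + B^\intercal P_K B + \sum_j \beta_j B_j^\intercal P_K B_j \succeq R \succ 0$, we have $\|R_K^{-1}\| = 1/\underline{\sigma}(R_K) \le 1/\underline{\sigma}(R)$. Also $\|E_K\| \le \|E_K\|_F \le h_0(K)$ from the step above. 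Substituting yields exactly $h_2(K)$.

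The proof is essentially an accounting exercise stitching together results already established, so there is no serious obstacle; the only subtlety is recognizing that the quantity $h_0(K)$ is designed to match the upper bound on $\|E_K\|_F$ coming from Lemma \ref{lemma:cost_difference_lower_bound}, and that the monotonicity $R_K \succeq R$ is what lets $\underline{\sigma}(R)$ replace $\underline{\sigma}(R_K)$ in the denominator of $h_2(K)$.
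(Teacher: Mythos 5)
Your proposal is correct and follows essentially the same route as the paper: both bounds come from $\nabla C(K) = 2E_K\Sigma_K$, the cost bound $\|\Sigma_K\| \le C(K)/\underline{\sigma}(Q)$, and the rearrangement of Lemma \ref{lemma:cost_difference_lower_bound} to get $\Tr(E_K^\intercal E_K) \le h_0(K)^2$, with the gain bound obtained by isolating $K$ from $E_K = R_K K + B^\intercal P_K A$ and using $\underline{\sigma}(R_K) \ge \underline{\sigma}(R)$. The only differences are cosmetic (the paper squares the Frobenius norm before bounding, and reaches $\|R_K K\| \le \|E_K\| + \|B^\intercal P_K A\|$ by the triangle inequality rather than by explicitly inverting $R_K$).
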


\begin{proof}
    The proof follows \cite{Fazel2018} with $R_K$ defined here.
    From the policy gradient expression we have
    \begin{align*}
        \|\nabla C(K)\|_F^2 = \|2 E_K \Sigma_K\|_F^2 & = 4\Tr(\Sigma_K^\intercal E_K^\intercal E_K \Sigma_K) \\
        & \leq 4 \|\Sigma_K\|^2 \Tr(E_K^\intercal E_K) .
    \end{align*}
    Using Lemma \ref{lemma:cost_bounds} we have
    \begin{align}
        \|\nabla C(K)\|^2 \leq 4 \left(\frac{C(K)}{\underline{\sigma}(Q)}\right)^2 \Tr(E_K^\intercal E_K),
    \end{align}
    and using Lemma \ref{lemma:cost_difference_lower_bound} we have
    \begin{align}
        \Tr(E_K^\intercal E_K) \leq \frac{\|R_K\| (C(K) - C(K^*))}{{\underline{\sigma} (\Sigma_0)}},
    \end{align}
    so
    \begin{align}
        \|\nabla C(K)\|_F^2 \leq 4 \left(\frac{C(K)}{\underline{\sigma}(Q)}\right)^2 \frac{\|R_K\| (C(K) - C(K^*))}{{\underline{\sigma} (\Sigma_0)}}
    \end{align}
    Taking square roots completes the proof of the first part of the lemma. \\
    For the second part of the lemma, we have
    \begin{align}
        \|K\|   \leq \|R_K^{-1}\|\|R_K K\| &\leq \frac{1}{\underline{\sigma}(R)}\|R_K K\| \\
                & \leq \frac{1}{\underline{\sigma}(R)}(\|R_K K +B^\intercal P_K A\|+\|B^\intercal P_K A\|) \\
                & \leq \frac{1}{\underline{\sigma}(R)}\left(\sqrt{\Tr(E_K^\intercal E_K)}+\|B^\intercal P_K A\|\right)
    \end{align}
    where the second line is due to Weyl's inequality for singular values \cite{Fazel2018}. Using Lemma \ref{lemma:cost_difference_lower_bound} again on the $\Tr(E_K^\intercal E_K)$ term completes the proof.
\end{proof}

We now give the parameter and proof of global convergence of policy gradient descent in Theorem \ref{thm:grad_exact_convergence}.
\begin{Thm}[Policy gradient convergence] \label{thm:grad_exact_convergence_restated} 
    Consider the assumptions and notations of Theorem \ref{thm:grad_exact_convergence} and define
    \begin{align*}
        c_{\text{pg}} & \Let \frac{1}{16}\min \Bigg\{ \frac{\left( \frac{\underline{\sigma}(Q) {\underline{\sigma} (\Sigma_0)}}{C(K_0)} \right)^2}{h_B \overline{h_{1}} (\|A\| \! + \! \overline{h_{2}} \|B\| \! + \! 1)} , \frac{\underline{\sigma}(Q)}{C(K_0) \overline{\|R_{K}\|} } \Bigg\} \\
        \overline{h_{1}} & \Let \text{ max}_K \ h_1(K) \text{ subject to } C(K) \leq C(K_0) , \\
        \overline{h_{2}} & \Let \text{ max}_K \ h_2(K) \text{ subject to } C(K) \leq C(K_0) , \\
        \overline{\|R_{K}\|} & \Let \text{ max}_K \ \| R_K \| \text{ subject to } C(K) \leq C(K_0) .
    \end{align*}
    Then the claim of Theorem \ref{thm:grad_exact_convergence} holds.
\end{Thm}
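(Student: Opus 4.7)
The plan is to prove Theorem \ref{thm:grad_exact_convergence_restated} by induction on the iteration index $s$, using Lemma \ref{lemma:grad_exact_one_step} as the workhorse one-step progress result. The crucial observation is that $c_{\text{pg}}$ is constructed precisely so that, uniformly over the sublevel set $\{K : C(K) \leq C(K_0)\}$, it lies below the iterate-dependent step-size threshold appearing in Lemma \ref{lemma:grad_exact_one_step}. So the entire proof reduces to verifying that this uniform domination holds and that the iterates never leave the sublevel set.

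First I would establish the inductive invariant $C(K_s) \leq C(K_0)$ for all $s \geq 0$. The base case $s=0$ is trivial. For the inductive step, suppose $C(K_s) \leq C(K_0)$. Lemma \ref{lemma:gradient_gain_bounds} applied at $K_s$ yields $\|\nabla C(K_s)\| \leq h_1(K_s) \leq \overline{h_1}$ and $\|K_s\| \leq h_2(K_s) \leq \overline{h_2}$, hence by the triangle inequality and submultiplicativity $\|A_{K_s}\| = \|A + B K_s\| \leq \|A\| + \|B\|\overline{h_2}$. Combining this with the bound $\|R_{K_s}\| \leq \overline{\|R_K\|}$ and $C(K_s) \leq C(K_0)$, I would substitute into the two ratios defining the step-size threshold in Lemma \ref{lemma:grad_exact_one_step}. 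Each denominator increases and each numerator decreases relative to the expressions in the definition of $c_{\text{pg}}$, so $\eta \leq c_{\text{pg}}$ satisfies the threshold at $K_s$. Lemma \ref{lemma:grad_exact_one_step} then applies, giving the one-step contraction and in particular $C(K_{s+1}) \leq C(K_s) \leq C(K_0)$, which closes the induction.

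Given the invariant, the one-step bound of Lemma \ref{lemma:grad_exact_one_step} holds at every $s$, yielding
\begin{equation*}
C(K_{s+1}) - C(K^*) \leq \left(1 - 2\eta \frac{\underline{\sigma}(R)\,\underline{\sigma}(\Sigma_0)^2}{\|\Sigma_{K^*}\|}\right)\bigl(C(K_s)-C(K^*)\bigr),
\end{equation*}
and iterating this inequality gives the claimed linear convergence rate.

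The main technical obstacle I anticipate is arguing that $\overline{h_1}$, $\overline{h_2}$, and $\overline{\|R_K\|}$ are in fact finite and polynomial in the problem data $A$, $B$, $\alpha_i$, $\beta_j$, $A_i$, $B_j$, $Q$, $R$, $\Sigma_0$, $K_0$. The definitions of $h_1$, $h_2$, and $\|R_K\|$ involve $C(K)$, $\|P_K\|$, and $\|R_K\|$, all of which can be bounded on the sublevel set: $C(K) \leq C(K_0)$ by definition, $\|P_K\| \leq C(K)/\underline{\sigma}(\Sigma_0) \leq C(K_0)/\underline{\sigma}(\Sigma_0)$ by Lemma \ref{lemma:cost_bounds}, and then $\|R_K\| \leq \|R\| + (\|B\|^2 + \sum_j \beta_j \|B_j\|^2)\|P_K\|$ by the triangle inequality. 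Chaining these together gives explicit polynomial upper bounds on $\overline{h_1}$, $\overline{h_2}$, $\overline{\|R_K\|}$ in terms of the original problem data, as asserted. Coercivity of $C$ on $\mathcal{K}$ (sublevel sets are compact inside $\mathcal{K}$) ensures that the suprema are attained and all iterates remain mean-square stabilizing throughout the induction.
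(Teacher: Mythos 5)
Your proposal is correct and follows essentially the same route as the paper's proof: bound $\|\nabla C(K)\|$, $\|A_K\|$, $C(K)$, and $\|R_K\|$ uniformly over the sublevel set via Lemma \ref{lemma:gradient_gain_bounds} and the triangle inequality so that $\eta \le c_{\text{pg}}$ satisfies the iterate-dependent threshold of Lemma \ref{lemma:grad_exact_one_step}, then induct on the invariant $C(K_s) \le C(K_0)$. Your additional remarks on the finiteness of $\overline{h_1}$, $\overline{h_2}$, $\overline{\|R_K\|}$ correspond to what the paper defers to Remark \ref{rem:param_bounds}.
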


\begin{proof}
    We have by Weyl's inequality for singular values \eqref{eq:weyl1}, submultiplicativity of spectral norm, and Lemma \ref{lemma:gradient_gain_bounds} that
    \begin{align}
        \|B\| \|\nabla C(K)\| (\|A+BK\|+1) 
        & \leq \|B\| \|\nabla C(K)\| (\|A\|+\|B\|\|K\|+1) \\
        & \leq \|B\| h_1(K) (\|A\| + \|B\|h_2(K) +1)
    \end{align}
    Thus by choosing $0 < \eta \leq c_{pg}$ we satisfy the requirements for Lemma \ref{lemma:grad_exact_one_step} at $s=1$, which implies that progress is made at $s=1$, i.e., that $C(K_1)\leq C(K_0)$ according to the rate in Lemma \ref{lemma:grad_exact_one_step}. Proceeding inductively and applying Lemma \ref{lemma:grad_exact_one_step} at each step completes the proof.
\end{proof}

\begin{Rem} \label{rem:param_bounds}
The quantities $\overline{h_{1}}$, $\overline{h_{2}}$, and $\overline{\|R_{K}\|}$ may be upper bounded by quantities that depend only on problem data and $C(K_0)$ e.g. using the cost bounds in Lemma \ref{lemma:cost_bounds}, which we omit for brevity, so a conservative minimum step size $\eta$ may be computed exactly.
\end{Rem}

\section{Model-free policy gradient descent} \label{appendix:model_free_gd}

The overall proof technique of showing high probability convergence of model-free policy gradient proceeds by showing that the difference between estimated gradient and true gradient is bounded by a sufficiently small value that the iterative descent progress remains sufficiently large to maintain a linear rate. We use a matrix Bernstein inequality to ensure that enough sample trajectories are collected to estimate the gradient to high enough accuracy.

We begin with a lemma shows that $C(K)$ and $\Sigma_K$ can be estimated with arbitrarily high accuracy as the rollout length $\ell$ increases. 
\begin{Lem}[Approximating \texorpdfstring{$C(K)$}{CK} and \texorpdfstring{$\Sigma_K$}{SK} with infinitely many finite horizon rollouts] \label{lemma:finite_horizon} 
    Let $\epsilon$ be an arbitrary small constant.
    Suppose $K$ gives finite $C(K)$. Define the finite-horizon estimates 
    \begin{align*}
        \Sigma_{K}^{(\ell)}     \Let \mathbb{E}\left[\sum_{i=0}^{\ell-1} x_{i} x_{i}^\intercal\right], \
        C^{(\ell)}(K)           \Let \mathbb{E}\left[\sum_{i=0}^{\ell-1} x_{i}^\intercal Q x_{i}+u_{i}^\intercal R u_{i} \right] 
    \end{align*}
    where expectation is with respect to $x_0,\{\delta_{ti}\}, \{\gamma_{tj}\}$. 
    Then the following hold:
    \begin{align*}
        \ell \geq \overline{h_{\ell}} (\epsilon) & \Let \frac{n \cdot C^{2}(K)}{\epsilon {\underline{\sigma} (\Sigma_0)} \underline{\sigma}^{2}(Q)} 
        \quad \Rightarrow \quad
        \|\Sigma_{K}^{(\ell)}-\Sigma_{K}\| \leq \epsilon \\
        \ell \geq h_\ell(\epsilon) & \Let \overline{h_{\ell}} (\epsilon) \| Q_K \|
        \qquad \Rightarrow \quad
        | C^{(\ell)}(K) - C(K) | \leq \epsilon.
    \end{align*}
\end{Lem}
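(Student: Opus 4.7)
The plan is to express the tail error as $\Sigma_K - \Sigma_K^{(\ell)} = \mathcal{F}_K^\ell(\Sigma_K)$ and then exploit monotonicity and summability of the scalar sequence $M_t \Let \Tr(\mathcal{F}_K^t(\Sigma_K))$ to deduce an $O(1/\ell)$ decay of $M_\ell$. Since the target rate $\overline{h_\ell}(\epsilon) = O(1/\epsilon)$ is polynomial rather than logarithmic in $1/\epsilon$, no explicit handle on $\rho(\mathcal{F}_K)$ is needed---only boundedness of $\mathcal{T}_K$ from Lemma \ref{lemma:TK_norm_bound}.

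First I would iterate the dual generalized Lyapunov identity $\Sigma_K = \Sigma_0 + \mathcal{F}_K(\Sigma_K)$ from \eqref{eq:glyap2} to get $\mathcal{F}_K^\ell(\Sigma_K) = \Sigma_K - \sum_{t=0}^{\ell-1} \Sigma_t = \Sigma_K - \Sigma_K^{(\ell)}$, which is PSD. Since $\|X\| \leq \Tr(X)$ for any PSD $X$, this yields the crucial chain $\|\Sigma_K - \Sigma_K^{(\ell)}\| \leq \Tr(\Sigma_K - \Sigma_K^{(\ell)}) = M_\ell$. A direct computation shows $M_{t+1} - M_t = -\Tr(\Sigma_t) \leq 0$, so $\{M_t\}$ is non-increasing and non-negative, while the partial sums telescope via $\mathcal{T}_K = \sum_t \mathcal{F}_K^t$ to give $\sum_{t=0}^\infty M_t = \Tr(\mathcal{T}_K(\Sigma_K))$. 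Combining Lemma \ref{lemma:TK_norm_bound} with the covariance bound in Lemma \ref{lemma:cost_bounds} and using $\Tr(X) \leq n\|X\|$ for PSD $X$,
\begin{align*}
    \sum_{t=0}^\infty M_t \leq n \|\mathcal{T}_K\| \|\Sigma_K\| \leq \frac{n C^2(K)}{{\underline{\sigma} (\Sigma_0)}\, \underline{\sigma}^2(Q)}.
\end{align*}
Monotonicity then forces $\ell \cdot M_\ell \leq \sum_{t=0}^{\ell-1} M_t \leq \sum_{t=0}^\infty M_t$, and dividing by $\ell$ recovers the first claim verbatim.

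For the cost difference, $C(K) - C^{(\ell)}(K) = \Tr(Q_K (\Sigma_K - \Sigma_K^{(\ell)}))$ is non-negative since both factors are PSD, and a standard trace inequality gives $|C(K) - C^{(\ell)}(K)| \leq \|Q_K\| \, \Tr(\Sigma_K - \Sigma_K^{(\ell)}) \leq \|Q_K\| M_\ell$. Requiring the right-hand side to be at most $\epsilon$ is equivalent to running the covariance bound with tolerance $\epsilon/\|Q_K\|$, which yields exactly the threshold $\ell \geq \|Q_K\|\overline{h_\ell}(\epsilon) = h_\ell(\epsilon)$.

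The principal hurdle is identifying the right scalar quantity to track: neither $\|\Sigma_t\|$ nor $\Tr(\Sigma_t)$ is monotone in general, so a naive term-by-term tail bound fails. The choice $M_t = \Tr(\mathcal{F}_K^t(\Sigma_K))$ is the natural remedy because the tail identity makes $M_t$ the trace of a PSD matrix that shrinks by exactly $\Sigma_t$ at each step, automatically supplying both monotonicity and a closed-form bound on the infinite sum through $\mathcal{T}_K$.
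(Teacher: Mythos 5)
Your argument is correct and is essentially the proof the paper intends: the paper simply defers to the corresponding lemma in Fazel et al.\ with the modified operators, and the chain you give --- the tail identity $\Sigma_K-\Sigma_K^{(\ell)}=\mathcal{F}_K^\ell(\Sigma_K)$, monotonicity of $M_t=\Tr(\mathcal{F}_K^t(\Sigma_K))$, the summation bound $\ell M_\ell\leq \Tr(\mathcal{T}_K(\Sigma_K))\leq n\|\mathcal{T}_K\|\|\Sigma_K\|$ via Lemmas \ref{lemma:TK_norm_bound} and \ref{lemma:cost_bounds}, and the $\|Q_K\|$ rescaling for the cost --- is exactly that argument and reproduces the stated thresholds $\overline{h_\ell}(\epsilon)$ and $h_\ell(\epsilon)$ verbatim. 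No gaps.
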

\begin{proof}
    The proof follows \cite{Fazel2018} exactly using suitably modified definitions of $C(K)$, $\mathcal{T}_K$, $\mathcal{F}_K$.
\end{proof}

Next we bound cost and gradient perturbations in terms of gain matrix perturbations and problem data.
Using the same restriction as in Lemma \ref{lemma:Sigma_K_perturbation} we have Lemmas \ref{lemma:C_K_perturbation} and \ref{lemma:nabla_K_C_K_perturbation}.
\begin{Lem}[$C(K)$ perturbation] \label{lemma:C_K_perturbation} 
    If
    $
        \|\Delta\| \leq h_\Delta(K)  ,
    $
    then the cost difference is bounded as
    \begin{align}
        |C({K{^\prime}})-C({K})| &\leq h_{\text{cost}}(K) C(K) \|\Delta\| 
    \end{align}
    where $h_{\text{cost}}(K)$ is the polynomial
    \begin{align*}
        h_{\text{cost}}(K) & \Let \frac{4 \Tr (\Sigma_0) \|R\|}{\underline{\sigma} (\Sigma_0) \underline{\sigma}(Q)} \Big( \| K \| + \frac{h_\Delta(K)}{2} + \|B\| \|K\|^2  ( \|A_{K}\| + 1) \frac{C(K)}{\underline{\sigma} (\Sigma_0) \underline{\sigma}(Q)} \Big)
    \end{align*}
\end{Lem}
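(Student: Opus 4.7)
The plan is to split the cost difference into a ``cost-matrix perturbation'' term and a ``covariance perturbation'' term, bound each using standard trace inequalities together with Lemma \ref{lemma:Sigma_K_perturbation} and Lemma \ref{lemma:cost_bounds}, and then reassemble. No new inductive structure is required; this is essentially a first-order Taylor-type estimate that leverages the harder covariance perturbation bound already established.

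Starting from the identity $C(K) = \Tr(Q_K \Sigma_K)$, I would add and subtract $\Tr(Q_K \Sigma_{K^\prime})$ to produce
\[
C(K^\prime) - C(K) = \Tr\bigl((Q_{K^\prime} - Q_K)\Sigma_{K^\prime}\bigr) + \Tr\bigl(Q_K(\Sigma_{K^\prime} - \Sigma_K)\bigr).
\]
For the first term I expand $Q_{K^\prime} - Q_K = \Delta^\intercal R K + K^\intercal R \Delta + \Delta^\intercal R \Delta$ and apply $|\Tr(XY)| \leq \|X\|\Tr(Y)$ (which is valid since $\Sigma_{K^\prime} \succeq 0$) to obtain a bound of $2\|R\|\|\Delta\|(\|K\| + \|\Delta\|/2)\Tr(\Sigma_{K^\prime})$; the hypothesis $\|\Delta\| \leq h_\Delta(K)$ lets me replace $\|\Delta\|/2$ by $h_\Delta(K)/2$. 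To bound $\Tr(\Sigma_{K^\prime})$, I would use $\Tr(\Sigma_{K^\prime}) \leq \Tr(\Sigma_K) + n\|\Sigma_{K^\prime}-\Sigma_K\|$, then control $\Tr(\Sigma_K) \leq n \|\Sigma_K\| \leq n C(K)/\underline{\sigma}(Q)$ via Lemma \ref{lemma:cost_bounds} and control the difference via Lemma \ref{lemma:Sigma_K_perturbation} (whose hypothesis coincides with ours). Finally, the elementary inequality $n \cdot {\underline{\sigma}(\Sigma_0)} \leq \Tr(\Sigma_0)$ converts the dimension factor into the ratio $\Tr(\Sigma_0)/{\underline{\sigma}(\Sigma_0)}$ appearing in $h_{\text{cost}}(K)$, producing the first two summands (the $\|K\|$ and $h_\Delta(K)/2$ terms).

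For the second term I combine $|\Tr(Q_K M)| \leq \|Q_K\| \Tr|M|$ with $\Tr|M| \leq n\|M\|$ (for symmetric $M$) to get $|\Tr(Q_K(\Sigma_{K^\prime}-\Sigma_K))| \leq n\|Q_K\|\|\Sigma_{K^\prime}-\Sigma_K\|$. I would then use $\|Q_K\| \leq \|Q\| + \|R\|\|K\|^2$, with the $\|R\|\|K\|^2$ piece driving the $\|K\|^2$ factor in $h_{\text{cost}}(K)$ and the $\|Q\|$ piece absorbed into the same polynomial structure. The norm $\|\Sigma_{K^\prime}-\Sigma_K\|$ is controlled by the explicit bound $4(C(K)/\underline{\sigma}(Q))^2 \|B\|(\|A_K\|+1)\|\Delta\|/{\underline{\sigma}(\Sigma_0)}$ from Lemma \ref{lemma:Sigma_K_perturbation}, and a second application of $n \leq \Tr(\Sigma_0)/\underline{\sigma}(\Sigma_0)$ yields the third summand inside the polynomial, carrying the extra factor $\|B\|(\|A_K\|+1)C(K)/(\underline{\sigma}(\Sigma_0)\underline{\sigma}(Q))$.

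Adding the two estimates and pulling out the common prefactor $\frac{4\Tr(\Sigma_0)\|R\|}{\underline{\sigma}(\Sigma_0)\underline{\sigma}(Q)} C(K)\|\Delta\|$ produces exactly $h_{\text{cost}}(K) C(K)\|\Delta\|$. The main obstacle I anticipate is bookkeeping: several conversions between spectral norm and trace are needed, and all of them must route through the single device $n \leq \Tr(\Sigma_0)/\underline{\sigma}(\Sigma_0)$ to avoid having $n$ appear separately in the final bound. A secondary subtlety is that Lemma \ref{lemma:Sigma_K_perturbation} gives only a spectral-norm estimate on $\Sigma_{K^\prime}-\Sigma_K$, so the trace-type estimate in the second term must be obtained by paying a factor of $n$, which then converts as above.
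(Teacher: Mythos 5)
Your decomposition $C(K^\prime)-C(K)=\Tr\big((Q_{K^\prime}-Q_K)\Sigma_{K^\prime}\big)+\Tr\big(Q_K(\Sigma_{K^\prime}-\Sigma_K)\big)$ is a genuinely different route from the paper's: the paper (following Fazel et al.) works on the value-matrix side, first establishing the intermediate bound $\|P_{K^\prime}-P_K\|\leq \frac{h_{\text{cost}}(K)C(K)}{\Tr(\Sigma_0)}\|\Delta\|$ via the adjoint of $\mathcal{T}_K$ and then contracting against $\Sigma_0$. That intermediate $P_K$-perturbation bound is explicitly reused later in the proof of Lemma \ref{lemma:nabla_K_C_K_perturbation}, so even a complete covariance-side argument would leave that dependency unserved. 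Your first term does work out: the chain $\|Q_{K^\prime}-Q_K\|\leq 2\|R\|\|\Delta\|(\|K\|+h_\Delta(K)/2)$, $\Tr(\Sigma_{K^\prime})\leq 2nC(K)/\underline{\sigma}(Q)$, and $n\leq\Tr(\Sigma_0)/\underline{\sigma}(\Sigma_0)$ reproduces the first two summands of $h_{\text{cost}}(K)$ exactly, with no slack left over.

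The gap is in the second term. You bound $|\Tr(Q_K(\Sigma_{K^\prime}-\Sigma_K))|\leq n\|Q_K\|\|\Sigma_{K^\prime}-\Sigma_K\|$ and split $\|Q_K\|\leq\|Q\|+\|R\|\|K\|^2$. The $\|R\|\|K\|^2$ piece matches the third summand of $h_{\text{cost}}(K)$, but the $\|Q\|$ piece produces a leftover contribution of order
\begin{align}
\frac{4\Tr(\Sigma_0)\,\|Q\|\,\|B\|(\|A_K\|+1)\,C(K)^2}{\underline{\sigma}(\Sigma_0)^2\,\underline{\sigma}(Q)^2}\,\|\Delta\|
\end{align}
which carries no factor of $\|R\|$ and no power of $\|K\|$ or $h_\Delta(K)$. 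Every summand of $h_{\text{cost}}(K)C(K)\|\Delta\|$ is proportional to $\|R\|$ and to one of $\|K\|$, $h_\Delta(K)$, or $\|K\|^2$, and your first term already consumes the first two summands with equality in the worst case, so there is nothing left to absorb this piece into; the phrase ``absorbed into the same polynomial structure'' is doing unjustified work. This is not merely bookkeeping: $\Tr\big(Q(\Sigma_{K^\prime}-\Sigma_K)\big)$ is first order in $\|\Delta\|$ with a coefficient that does not vanish as $\|R\|\to 0$ or $K\to 0$, so no rearrangement of your estimates can make it disappear. The standard device for taming it (used by Fazel et al.) is to avoid $\|Q_K\|$ entirely and instead use $\Tr(Q_K)\leq C(K)/\underline{\sigma}(\Sigma_0)$, which follows from $C(K)=\Tr(Q_K\Sigma_K)\geq\underline{\sigma}(\Sigma_K)\Tr(Q_K)$ together with $\Sigma_K\succeq\Sigma_0$; you would need to rework the second term along those lines (and track how the resulting term is reflected in the paper's definition of $h_{\text{cost}}$) before the argument closes.
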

\begin{proof}
The proof follows \cite{Fazel2018} using suitably modified definitions of $C(K)$, $\mathcal{T}_K$, $\mathcal{F}_K$, however compared with \cite{Fazel2018} we terminate the proof bound earlier so as to avoid a degenerate bound in the case of $K = 0$, and we also correct typographical errors.
Note that $\|\Delta\|$ has a more restrictive upper bound due to the multiplicative noise.
\end{proof}

\begin{Lem}[$\nabla C(K)$ perturbation] \label{lemma:nabla_K_C_K_perturbation} 
    If
    $
        \|\Delta\| \leq  h_\Delta(K) ,
    $
    then the policy gradient difference is bounded as
    \begin{align*}
        \|\nabla C({K{^\prime}})-\nabla C({K})\|   & \leq  h_{\text{grad}}(K) \|\Delta\| ,  \\
        \text{and } 
        \|\nabla C({K{^\prime}})-\nabla C({K})\|_F & \leq  h_{\text{grad}}(K) \|\Delta\|_F ,  
    \end{align*}
    where
    \begin{align*}        
        h_{\text{grad}}(K) & \Let 
        4 \left(\frac{C(K)}{\underline{\sigma}(Q)}\right) \bigg[ \|R\| + \|B\| \left( \|A\| + h_B (\|K\|+h_\Delta(K)) \right)   
        \left(\frac{h_{\text{cost}}(K) C(K)}{\Tr (\Sigma_0)} \right) + h_B \|B\| \left(\frac{C(K)}{{\underline{\sigma} (\Sigma_0)}}\right) \bigg] \\
        & \quad + 8 \left( \frac{C(K)}{\underline{\sigma}(Q)} \right)^2 \left( \frac{\|B\| (\|A_{K}\|+1)}{{\underline{\sigma} (\Sigma_0)}} \right)  h_0(K)  .
    \end{align*}
\end{Lem}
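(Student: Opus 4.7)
The plan is to split the gradient difference via the product rule
\[
\tfrac12\bigl(\nabla C(K^\prime)-\nabla C(K)\bigr) \;=\; E_{K^\prime}\Sigma_{K^\prime}-E_K\Sigma_K \;=\; (E_{K^\prime}-E_K)\,\Sigma_{K^\prime} \;+\; E_K\,(\Sigma_{K^\prime}-\Sigma_K),
\]
apply submultiplicativity of the spectral norm, and bound each of the four factors using the perturbation and size lemmas already proved. The Frobenius statement follows from the same estimates via $\|AB\|_F\le\|A\|\,\|B\|_F$, so I would concentrate on the spectral-norm version.

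First I would dispose of the ``easy'' summand $E_K(\Sigma_{K^\prime}-\Sigma_K)$. Lemma \ref{lemma:Sigma_K_perturbation} delivers
\[
\|\Sigma_{K^\prime}-\Sigma_K\|\le 4\bigl(C(K)/\underline{\sigma}(Q)\bigr)^2\,\|B\|(\|A_K\|+1)\,\|\Delta\|/{\underline{\sigma} (\Sigma_0)},
\]
while Lemma \ref{lemma:cost_difference_lower_bound} combined with $\|E_K\|\le\sqrt{\Tr(E_K^\intercal E_K)}$ gives $\|E_K\|\le h_0(K)$. Their product, doubled, is exactly the second bracket $8(C(K)/\underline{\sigma}(Q))^2(\|B\|(\|A_K\|+1)/\underline{\sigma}(\Sigma_0))\,h_0(K)$ in the target $h_{\text{grad}}(K)$.

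The substantive work lies in the first summand. For $\|\Sigma_{K^\prime}\|$, combine the two conclusions of Lemma \ref{lemma:Sigma_K_perturbation} with Lemma \ref{lemma:cost_bounds} to conclude $\|\Sigma_{K^\prime}\|\le 2C(K)/\underline{\sigma}(Q)$, which supplies the leading $4(C(K)/\underline{\sigma}(Q))$ prefactor. To bound $\|E_{K^\prime}-E_K\|$ I would expand
\[
E_{K^\prime}-E_K \;=\; R_{K^\prime}\Delta \;+\; (R_{K^\prime}-R_K)\,K \;+\; B^\intercal(P_{K^\prime}-P_K)\,A,
\]
with $R_{K^\prime}-R_K=B^\intercal(P_{K^\prime}-P_K)B+\sum_j\beta_j B_j^\intercal(P_{K^\prime}-P_K)B_j$. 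Everything then reduces to controlling $\|R_{K^\prime}\|$ and $\|P_{K^\prime}-P_K\|$. The first is handled by the triangle inequality and Lemma \ref{lemma:cost_bounds} applied at $K^\prime$ (where $C(K^\prime)\le 2C(K)$ follows from Lemma \ref{lemma:C_K_perturbation} under the hypothesized $\|\Delta\|\le h_\Delta(K)$), and the bound $\|K^\prime\|\le\|K\|+h_\Delta(K)$ supplies the $h_B(\|K\|+h_\Delta(K))$ term visible in $h_{\text{grad}}(K)$.

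The main obstacle is extracting a spectral-norm bound of the form $\|P_{K^\prime}-P_K\|\le h_{\text{cost}}(K)\,C(K)\,\|\Delta\|/\Tr(\Sigma_0)$, since Lemma \ref{lemma:C_K_perturbation} by itself only controls the trace-inner-product $\Tr((P_{K^\prime}-P_K)\Sigma_0)=C(K^\prime)-C(K)$. My plan is to replay the proof of Lemma \ref{lemma:C_K_perturbation} at the level of the value function: for an arbitrary unit vector $v$, write $v^\intercal(P_{K^\prime}-P_K)v=V_{K^\prime}(v)-V_K(v)$ via Lemma \ref{lemma:almost_smooth} with $\Sigma_0=vv^\intercal$, bound the resulting infinite sum using $\|E_K\|$, $\|R_{K^\prime}\|$, and the operator norm of $\mathcal{T}_{K^\prime}$ from Lemma \ref{lemma:TK_norm_bound}, and take the supremum over $v$. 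Matching the resulting coefficient to $h_{\text{cost}}(K)C(K)/\Tr(\Sigma_0)$ is the delicate bookkeeping step. Once $\|P_{K^\prime}-P_K\|$ is in hand, substituting it into the expansion of $E_{K^\prime}-E_K$ and multiplying by $\|\Sigma_{K^\prime}\|\le 2C(K)/\underline{\sigma}(Q)$ produces precisely the first bracket of $h_{\text{grad}}(K)$, and adding the two contributions completes the proof.
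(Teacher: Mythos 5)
Your proposal follows essentially the same route as the paper's proof: the identical two-term decomposition $2(E_{K'}-E_K)\Sigma_{K'}+2E_K(\Sigma_{K'}-\Sigma_K)$, the same ingredients $\|E_K\|\le h_0(K)$ (Lemma \ref{lemma:cost_difference_lower_bound}), $\|\Sigma_{K'}\|\le 2C(K)/\underline{\sigma}(Q)$ and the $\Sigma_K$ perturbation bound (Lemma \ref{lemma:Sigma_K_perturbation}), and the same reliance on the spectral-norm estimate $\|P_{K'}-P_K\|\le h_{\text{cost}}(K)\,C(K)\,\|\Delta\|/\Tr(\Sigma_0)$, which the paper likewise pulls out as an intermediate step of Lemma \ref{lemma:C_K_perturbation} (your plan of bounding $v^\intercal(P_{K'}-P_K)v$ over unit vectors is a reasonable way to make that step explicit). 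The one substantive deviation is your grouping $E_{K'}-E_K=R_{K'}\Delta+(R_{K'}-R_K)K+B^\intercal(P_{K'}-P_K)A$, which yields a valid bound but a slightly different polynomial (e.g.\ bounding $\|R_{K'}\|$ via $C(K')\le 2C(K)$ doubles the $h_B\|B\|C(K)/\underline{\sigma}(\Sigma_0)$ term, while you get $\|K\|$ in place of $\|K\|+h_\Delta(K)$); the paper instead expands against $R$ and $P_K$, keeping $K'$ only next to $P_{K'}-P_K$, which is what reproduces the stated $h_{\text{grad}}(K)$ exactly.
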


\begin{proof}
The proof generally follows \cite{Fazel2018} using Lemmas \ref{lemma:Sigma_K_perturbation}, \ref{lemma:C_K_perturbation}, and \ref{lemma:cost_difference_lower_bound} with $R_K$ and $E_K$ modified appropriately.
Recalling $\nabla C(K)=2 E_{K} \Sigma_{K}$, and using the triangle inequality,
\begin{align} \label{eq:int2}
    \|\nabla C\left(K^{\prime}\right)-\nabla C(K) \|  
    &= 
    \| 2 E_{K^{\prime}} \Sigma_{K^{\prime}}-2 E_{K} \Sigma_{K} \| \nonumber \\
    & 
    \leq 2 \|\left(E_{K^{\prime}}-E_{K}\right) \Sigma_{K^{\prime}} \| + 2 \| E_{K}\left(\Sigma_{K^{\prime}}-\Sigma_{K}\right) \| .
\end{align}
First we bound the second term of \eqref{eq:int2}.
By Lemma \ref{lemma:cost_difference_lower_bound}
\begin{align}
    \|E_K\| \leq \|E_K\|_F = \sqrt{\Tr(E_K^\intercal E_K)} \leq  h_0(K) .
\end{align}    
Since $\|\Delta\| \leq h_\Delta(K)$ Lemma \ref{lemma:Sigma_K_perturbation} holds and we have
\begin{align*}
    \|\Sigma_{K^\prime}-\Sigma_{K}\| \leq 4 \left( \frac{C(K)}{\underline{\sigma}(Q)} \right)^2  \frac{\|B\| (\|A_{K}\|+1)}{{\underline{\sigma} (\Sigma_0)}} \|\Delta\|  .
\end{align*}
Therefore the second term is bounded as
\begin{align}
    2 \| E_{K}\left(\Sigma_{K^{\prime}}-\Sigma_{K}\right)\| & \leq 2  \|E_{K}\| \|\Sigma_{K^{\prime}}-\Sigma_{K}\| \nonumber \\
    &\leq 8 \left( \frac{C(K)}{\underline{\sigma}(Q)} \right)^2 \left( \frac{\|B\| (\|A_{K}\|+1)}{{\underline{\sigma} (\Sigma_0)}} \right)  h_0(K) \|\Delta\| .
\end{align}
Now we bound the first term of \eqref{eq:int2}.
Since $\|\Delta\| \leq h_\Delta(K)$ Lemma \ref{lemma:Sigma_K_perturbation} holds and by the reverse triangle inequality
\begin{align}
    \|\Sigma_{K^\prime}\|-\|\Sigma_{K}\| \leq \|\Sigma_{K^\prime}-\Sigma_{K}\| \leq \frac{C(K)}{\underline{\sigma}(Q)} .
\end{align}
Rearranging and using Lemma \ref{lemma:cost_bounds} gives
\begin{align}
    \|\Sigma_{K^\prime}\| \leq \|\Sigma_{K}\| + \frac{C(K)}{\underline{\sigma}(Q)} \leq \frac{2 C(K)}{\underline{\sigma}(Q)} .
\end{align}
By an intermediate step in the proof of Lemma \ref{lemma:C_K_perturbation} we have
\begin{align} \label{eq:int1}
    \|P_{K^\prime}-P_K\| \leq \frac{h_{\text{cost}}(K) C(K)}{\Tr (\Sigma_0)} \|\Delta\| .
\end{align}
Expanding the difference $E_{K^\prime} - E_K$ gives
\begin{align}
    E_{K^\prime} - E_K  & = R\left(K^{\prime}-K\right)+B^\intercal\left(P_{K^{\prime}}-P_{K}\right) A \nonumber \\
                         &\quad +B^\intercal\left(P_{K^{\prime}}-P_{K}\right) B K^{\prime} +\sum_{j=1}^q \beta_j B_j^\intercal\left(P_{K^{\prime}}-P_{K}\right) B_j K^{\prime} \nonumber \\
                         &\quad +B^\intercal P_{K} B\left(K^{\prime}-K\right) +\sum_{j=1}^q \beta_j B_j^\intercal P_{K} B_j\left(K^{\prime}-K\right) .
\end{align}
Using the reverse triangle inequality with the $\Delta$ bound yields 
\begin{align}
    \|K^{\prime}\| - \|K\| \leq \|\Delta\| \leq h_\Delta(K)  \rightarrow \|K^{\prime}\| \leq \|K\| + h_\Delta(K)
\end{align}
so
\begin{align}
    \|E_{K^\prime} - E_K\|  
    & \leq  \|R\| \|K^{\prime}-K\|+\|A\|\|B\|\|P_{K^{\prime}}-P_{K}\| \nonumber \\
    &\quad  + \bigg( \|B\|^2 + \sum_{j=1}^q \beta_j \|B_j\|^2 \bigg) (\|K\| + h_\Delta(K)) \|P_{K^{\prime}}-P_{K}\| \nonumber \\
    &\quad  + \bigg( \|B\|^2 + \sum_{j=1}^q \beta_j \|B_j\|^2 \bigg) \|P_K\| \|K^{\prime}-K\| .
\end{align}
Substituting in \eqref{eq:int1} and $\|P_K\| \leq \frac{C(K)}{{\underline{\sigma} (\Sigma_0)}}$ from Lemma \ref{lemma:cost_bounds}:
\begin{align*}
    \|E_{K^\prime} \! - \! E_K\|  
    \leq  \left[ \|R\| + \left( \|A\|\|B\| + h_B \|B\| (\|K\| \! + \! h_\Delta(K)) \right)  \left(\frac{h_{\text{cost}}(K) C(K)}{\Tr (\Sigma_0)}\right)  +  h_B \|B\| \left(\frac{C(K)}{{\underline{\sigma} (\Sigma_0)}}\right) \right] \|\Delta\| .
\end{align*}
Using $\|\left(E_{K^{\prime}}-E_{K}\right) \Sigma_{K^{\prime}} \| \leq \|E_{K^{\prime}}-E_{K} \| \|\Sigma_{K^{\prime}} \| $ and adding the two terms of \eqref{eq:int2} completes the proof.
\end{proof}

We now discuss smoothing of the cost in the context of model-free gradient descent.
As in \cite{Fazel2018}, in the model-free setting we apply Frobenius-norm ball smoothing to the cost.
Let $\mathbb{S}_r$ be the uniform distribution over all matrices with Frobenius norm $r$ (the boundary of the ball), and $\mathbb{B}_r$ be the uniform distribution over all matrices with Frobenius norm at most $r$ (the entire ball). The smoothed cost is
\begin{align}
    C_{r}(K)=\mathbb{E}_{U \sim \mathbb{B}_r}[C(K+U)]
\end{align}
where $U$ is a random matrix with the same dimensions as $K$ and Frobenius norm $r$.
The following lemma shows that the gradient of the smoothed function can be estimated just with an oracle of the function value.
\begin{Lem}[Zeroth-order gradient estimation] \label{lemma:zeroth_order_optimization} 
The gradient of the smoothed cost is related to the unsmoothed cost by 
    \begin{align}
        \nabla C_{r}(K)=\frac{mn}{r^{2}} \mathbb{E}_{U \sim \mathbb{S}_r}[C(K+U) U] .
    \end{align}
\end{Lem}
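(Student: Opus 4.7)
The plan is to recognize this as a matrix-analogue of the standard Flaxman--Kalai--McMahan zeroth-order gradient identity and prove it by applying the divergence theorem to the smoothing integral. The ambient space is $\R^{m\times n}$, isometric to $\R^{mn}$ via vectorization, so I would work throughout in this $mn$-dimensional Euclidean space with the Frobenius inner product.

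First, I would write the smoothing as an explicit Lebesgue integral,
\begin{align*}
    C_r(K) = \frac{1}{\mathrm{vol}(\mathbb{B}_r)} \int_{\mathbb{B}_r} C(K+U)\, dU,
\end{align*}
and then differentiate under the integral sign with respect to $K$. Since $\nabla_K C(K+U) = \nabla_U C(K+U)$, this yields
\begin{align*}
    \nabla C_r(K) = \frac{1}{\mathrm{vol}(\mathbb{B}_r)} \int_{\mathbb{B}_r} \nabla_U C(K+U)\, dU.
\end{align*}
Justifying the interchange of gradient and integral is routine, since on any sublevel set of $C$ the map $U \mapsto C(K+U)$ is smooth (as established by the local Lipschitz / perturbation results in Appendix~\ref{appendix:model_based_gd}), so the integrand is continuously differentiable on a neighborhood of $\mathbb{B}_r$ for $r$ small enough that $K+U$ stays mean-square stabilizing.

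The main step is then the divergence theorem applied component-wise: for each coordinate of the matrix gradient, Stokes' theorem converts the volume integral of a partial derivative into a boundary flux, so that
\begin{align*}
    \int_{\mathbb{B}_r} \nabla_U C(K+U)\, dU = \int_{\mathbb{S}_r} C(K+U)\, \frac{U}{r}\, dS(U),
\end{align*}
where $U/r$ is the outward unit normal to the Frobenius-norm sphere of radius $r$ in $\R^{mn}$. The only subtle point here is to be careful that ``gradient'' and ``Frobenius-norm unit normal'' are defined with the same inner product, which they are by construction.

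Finally, I would convert the surface integral back into the stated expectation by tracking the volume-to-surface normalization constant. Using $\mathrm{vol}(\mathbb{B}_r) = V_{mn}\, r^{mn}$ and $\mathrm{area}(\mathbb{S}_r) = mn\, V_{mn}\, r^{mn-1}$, where $V_{mn}$ is the volume of the unit ball in $\R^{mn}$, the prefactor collapses to
\begin{align*}
    \frac{\mathrm{area}(\mathbb{S}_r)}{r\cdot \mathrm{vol}(\mathbb{B}_r)} = \frac{mn}{r^2},
\end{align*}
so that
\begin{align*}
    \nabla C_r(K) = \frac{mn}{r^2}\, \EE_{U \sim \mathbb{S}_r}\big[C(K+U)\, U\big],
\end{align*}
which is the claim. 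The hardest part is purely bookkeeping: making sure the matrix/vector identifications are consistent so that the divergence theorem can be invoked in its standard Euclidean form, and verifying that differentiation under the integral is valid on the set of mean-square stabilizing perturbations, which is guaranteed as long as $r$ is below the stability-preserving radius from Lemma~\ref{lemma:Sigma_K_perturbation}.
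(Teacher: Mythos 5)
Your proposal is correct and follows essentially the same route as the paper, which likewise invokes the Flaxman--Kalai--McMahan identity via Stokes' theorem on the Frobenius-norm ball and the surface-to-volume ratio $\mathrm{vol}_{d-1}(\mathbb{S}_r)/\mathrm{vol}_d(\mathbb{B}_r)=d/r$ with $d=mn$ to obtain the $mn/r^2$ prefactor. The only difference is that you spell out the interchange of gradient and integral and the matrix-to-vector identification, which the paper leaves implicit.
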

\begin{proof}
    The result is proved in Lemma 2.1 in \cite{Flaxman2005}. For completeness, we prove it here.
    By Stokes' Theorem we have
    \begin{align}
        \nabla \int_{\mathbb{B}_r} C(K+U) dU = \int_{\mathbb{S}_r} C(K+U) \left(\frac{U}{\|U\|_{F}}\right) dU=\int_{\mathbb{S}_r} C(K+U) \left(\frac{U}{r}\right) dU .
    \end{align}
    By definition we have
    \begin{align}
        C_{r}(K)=\frac{\int_{\mathbb{B}_r} C(K+U) dU}{\operatorname{vol}_{d}\left(\mathbb{B}_r\right)} .
    \end{align}
    We also have
    \begin{align}
        \mathbb{E}_{U \sim \mathbb{S}_r}[C(K+U) U] &= r \mathbb{E}_{U \sim \mathbb{S}_r}\left[C(K+U) \left(\frac{U}{r}\right)\right] = r \frac{\int_{\delta \mathbb{S}_r} C(K+U) \left(\frac{U}{r}\right) dU}{\operatorname{vol}_{d-1}\left(\mathbb{S}_r\right)}
    \end{align}
    and the ratio of ball surface area to volume 
    \begin{align}
        \frac{\operatorname{vol}_{d-1}\left(\mathbb{S}_r\right)}{\operatorname{vol}_{d}\left(\mathbb{B}_r\right)} = \frac{d}{r} .
    \end{align}
    Combining, we have
    \begin{align}
        \nabla C_{r}(K) &=  \frac{\nabla \int_{\mathbb{B}_r} C(K+U) dU}{\operatorname{vol}_{d}\left(\mathbb{B}_r\right)} = \frac{\int_{\mathbb{S}_r} C(K+U) \left(\frac{U}{r}\right) dU}{\operatorname{vol}_{d-1}\left(\mathbb{S}_r\right)} = \frac{d}{r^2} \mathbb{E}_{U \sim \mathbb{S}_r}[C(K+U) U] .
    \end{align}
\end{proof}
Lemma \ref{lemma:zeroth_order_optimization} shows that the gradient of the smoothed cost can be found exactly with infinitely many infinite-horizon rollouts. Much of the remaining proofs goes towards showing that the error between the gradient of the smoothed cost and the unsmoothed cost, the error due to using finite-horizon rollouts, and the error due to using finitely many rollouts can all be bounded by polynomials of the problem data.
As noted by \cite{Fazel2018} the reason for smoothing in a Frobenius norm ball rather than over a Gaussian distribution is to ensure stability and finiteness of the cost of every gain within the smoothing domain, although now in the multiplicative noise case we must be even more restrictive about our choice of perturbation on $K$ because we require not only mean stability, but also mean-square stability. 

We now give a Bernstein inequality for random matrices and some simple variants; this allows us to bound the difference between the sample average of a random matrix and its expectation.

\begin{Lem}[Matrix Bernstein inequality \cite{Tropp2012}] \label{eq:matrix_bernstein_original}
Let $\{ Z_i \}_{i=1}^N$ be a set of $N$ independent random matrices of dimension $d_1 \times d_2$ with $d_+ = d_1 + d_2$, $\mathbb{E} [Z_i] = 0$, $\| Z_i \| \leq R$ almost surely, and maximum variance 
$\sigma^2 := \max \left( \left\| \sum_{i=1}^N \mathbb{E} (Z_i Z_i^\intercal) \right\|, \left\| \sum_{i=1}^N \mathbb{E} ( Z_i^\intercal Z_i) \right\| \right)$.
Then for all $\epsilon \geq 0$
\begin{align}
    \mathbb{P} \left[ \left\| \sum_{i=1}^N Z_i \right\| \geq \epsilon \right] \leq d_+ \exp \left( -\frac{3}{2} \cdot \frac{\epsilon^2}{3 \sigma^2 + R \epsilon} \right)
\end{align}
\end{Lem}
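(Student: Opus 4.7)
Since this is the classical matrix Bernstein inequality, my plan is to follow Tropp's matrix Laplace transform argument. The first step is to reduce to the Hermitian case using the Hermitian dilation $\mathcal{H}(Z) = \bigl[\begin{smallmatrix} 0 & Z \\ Z^\intercal & 0 \end{smallmatrix}\bigr]$, which is a $(d_1+d_2)\times(d_1+d_2)$ self-adjoint matrix satisfying $\|\mathcal{H}(Z)\| = \|Z\|$ and $\lambda_{\max}(\mathcal{H}(Z)) = \|Z\|$. Writing $S = \sum_i Z_i$ and $\mathcal{H}(S) = \sum_i \mathcal{H}(Z_i)$, bounding $\|S\|$ reduces to bounding $\lambda_{\max}(\mathcal{H}(S))$, so for the remainder of the plan I treat the $Z_i$ as zero-mean, independent Hermitian matrices of dimension $d_+$ with $\lambda_{\max}(Z_i) \le R$ a.s.\ and $\bigl\|\sum_i \EE Z_i^2\bigr\| \le \sigma^2$, and I aim to bound $\PP[\lambda_{\max}(\sum_i Z_i) \ge \epsilon]$.

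Second, I would invoke the matrix Laplace transform bound: for any $\theta>0$, Markov's inequality applied to $e^{\theta\lambda_{\max}(S)}$ together with the spectral mapping property and the fact that $\lambda_{\max}(X)\le \Tr(X)$ for positive semidefinite $X$ gives
\begin{align}
\PP\!\left[\lambda_{\max}\!\left(\sum_i Z_i\right)\ge \epsilon\right] \;\le\; e^{-\theta \epsilon}\,\EE \Tr\exp\!\left(\theta \sum_i Z_i\right).
\end{align}
Third, I would apply Lieb's concavity theorem (iterated via Tropp's subadditivity-of-cumulant lemma) to obtain
\begin{align}
\EE \Tr\exp\!\left(\theta \sum_i Z_i\right) \;\le\; \Tr\exp\!\left(\sum_i \log \EE e^{\theta Z_i}\right).
\end{align}
This step is the technical heart of the proof and the place where the independence of the $Z_i$ is used.

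Fourth, I would establish the scalar-style Bernstein MGF bound in the matrix setting: for zero-mean $Z$ with $\lambda_{\max}(Z)\le R$, one has the Taylor expansion $e^{\theta Z} = I + \theta Z + \sum_{k\ge 2}\frac{\theta^k}{k!}Z^k$, and using $Z^k \preceq R^{k-2} Z^2$ for $k\ge 2$ gives $\EE e^{\theta Z}\preceq I + g(\theta)\EE Z^2$ with $g(\theta) = \theta^2/(2(1-\theta R/3))$ valid for $0<\theta < 3/R$. Combining with the operator monotonicity of $\log(1+x) \le x$ gives $\log\EE e^{\theta Z} \preceq g(\theta)\EE Z^2$. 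Plugging into the previous step yields
\begin{align}
\EE\Tr\exp\!\left(\theta \sum_i Z_i\right) \;\le\; d_+ \cdot \exp\!\left(g(\theta)\,\Bigl\|\sum_i \EE Z_i^2\Bigr\|\right) \;\le\; d_+ \exp\bigl(g(\theta)\sigma^2\bigr).
\end{align}

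Finally I would optimize over $\theta$: setting $\theta = \epsilon/(\sigma^2 + R\epsilon/3)$ yields after simplification the stated bound $d_+\exp(-\tfrac{3}{2}\cdot \tfrac{\epsilon^2}{3\sigma^2 + R\epsilon})$. The main obstacle is the rigorous justification of Lieb's concavity / subadditivity step, since the standard scalar Chernoff trick (independence factoring $\EE e^{\theta \sum Z_i} = \prod \EE e^{\theta Z_i}$) fails for non-commuting matrices and must be replaced by this more delicate convexity argument; the remaining steps are essentially mechanical once that tool is in place. For the purposes of this paper I would simply cite Tropp's monograph for the full argument and record only the final inequality in the form above.
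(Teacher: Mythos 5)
Your proposal is correct and matches the paper's treatment: the paper simply states that the lemma and its proof follow Tropp \cite{Tropp2012} exactly, and your sketch (Hermitian dilation, matrix Laplace transform, Lieb/subadditivity of cumulants, Bernstein MGF bound, optimization of $\theta$) is precisely the argument from that reference, with the correct final exponent since $-\tfrac{3}{2}\cdot\tfrac{\epsilon^2}{3\sigma^2+R\epsilon}=-\tfrac{\epsilon^2/2}{\sigma^2+R\epsilon/3}$. Citing Tropp's monograph for the details, as you propose, is exactly what the paper does.
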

\begin{proof}
    The lemma and proof follow \cite{Tropp2012} exactly.
\end{proof}

\begin{Lem}[Matrix Bernstein inequality, sample mean variant] \label{eq:matrix_bernstein_sample_mean_variant}
Let $\{ Z_i \}_{i=1}^N$ be a set of $N$ independent random matrices of dimension $d_1 \times d_2$ with $d_+=d_1 + d_2$, $\mathbb{E} [Z_i] = Z$, $\| Z_i - Z\| \leq R$ almost surely, and maximum variance 
$\sigma^2 := \max \left( \left\|  \mathbb{E} (Z_i Z_i^\intercal)  - Z Z^\intercal \right\|, \left\| \mathbb{E} ( Z_i^\intercal Z_i) - Z^\intercal Z\right\| \right)$, and sample average $\widehat{Z} := \frac{1}{N} \sum_{i=1}^N Z_i$.
Then for all $\epsilon \geq 0$
\begin{align}
    \mathbb{P} \left[ \left\| \widehat{Z} - Z \right\| \geq \epsilon \right] \leq d_+ \exp \left( -\frac{3}{2} \cdot \frac{\epsilon^2 N}{3 \sigma^2 + R \epsilon} \right)
\end{align}
\end{Lem}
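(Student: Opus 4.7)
The plan is to reduce this sample-mean variant to the standard matrix Bernstein inequality (Lemma \ref{eq:matrix_bernstein_original}) by a simple rescaling argument. Define the centered and normalized matrices $Y_i := \tfrac{1}{N}(Z_i - Z)$ for $i=1,\dots,N$. By construction the $Y_i$ are independent with mean zero, and their sum is precisely $\sum_{i=1}^N Y_i = \widehat{Z} - Z$, so bounding $\|\widehat{Z} - Z\|$ is equivalent to bounding $\|\sum_i Y_i\|$.

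Next I would verify the two ingredients needed by Lemma \ref{eq:matrix_bernstein_original}. The almost-sure norm bound is immediate: $\|Y_i\| = \tfrac{1}{N}\|Z_i - Z\| \leq R/N$, so the effective bound parameter is $R' = R/N$. For the variance parameter, I expand
\begin{align*}
    \mathbb{E}[Y_i Y_i^\intercal]
    &= \tfrac{1}{N^2}\,\mathbb{E}\bigl[(Z_i - Z)(Z_i - Z)^\intercal\bigr]
    = \tfrac{1}{N^2}\bigl(\mathbb{E}[Z_i Z_i^\intercal] - Z Z^\intercal\bigr),
\end{align*}
and similarly for $\mathbb{E}[Y_i^\intercal Y_i]$. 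Summing over $i$ and taking norms, the hypothesis on $\sigma^2$ gives
$\bigl\|\sum_i \mathbb{E}[Y_i Y_i^\intercal]\bigr\| \leq \tfrac{N}{N^2}\sigma^2 = \sigma^2/N$,
and likewise for the other term, so the effective variance parameter is $\sigma'^{2} = \sigma^2/N$.

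Finally, I would apply Lemma \ref{eq:matrix_bernstein_original} with parameters $R' = R/N$ and $\sigma'^2 = \sigma^2/N$ to obtain
\begin{align*}
    \mathbb{P}\bigl[\|\widehat{Z}-Z\| \geq \epsilon\bigr]
    \leq d_+\exp\!\left(-\tfrac{3}{2}\cdot\tfrac{\epsilon^2}{3\sigma^2/N + (R/N)\epsilon}\right)
    = d_+\exp\!\left(-\tfrac{3}{2}\cdot\tfrac{N\epsilon^2}{3\sigma^2 + R\epsilon}\right),
\end{align*}
which is exactly the claimed bound. There is essentially no obstacle here beyond bookkeeping; the only point requiring mild care is the variance computation, where one must confirm that the factor of $N$ from summing $N$ identically-bounded-variance terms cancels one factor of $1/N^2$ from the scaling to leave the $1/N$ improvement that manifests in the final exponent.
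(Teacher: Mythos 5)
Your proposal is correct and is precisely the argument the paper has in mind: the paper's own proof simply states that the result "follows readily from Lemma \ref{eq:matrix_bernstein_original} by variable substitutions," and your rescaling $Y_i = \tfrac{1}{N}(Z_i - Z)$ with $R' = R/N$ and $\sigma'^2 = \sigma^2/N$ is exactly that substitution, carried out explicitly and verified correctly.
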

\begin{proof}
    The lemma follows readily from the matrix Bernstein inequality in Lemma \ref{eq:matrix_bernstein_original} by variable substitutions. Notice that the bound in the RHS of this variant depends on the number of samples $N$.
\end{proof}

\begin{Lem}[Matrix Bernstein inequality, rephrased] \ \\
Let $\{ Z_i \}_{i=1}^N$ be a set of $N$ independent random matrices of dimension ${d_1 \times d_2}$ with ${\mathbb{E} [Z_i] = Z}$, ${\| Z_i - Z\| \leq R_Z}$ almost surely, and maximum variance 
${\max \left( \left\|  \mathbb{E} (Z_i Z_i^\intercal)  - Z Z^\intercal \right\|, \left\| \mathbb{E} ( Z_i^\intercal Z_i) - Z^\intercal Z\right\| \right) \leq \sigma_Z^2}$, and sample average ${\widehat{Z} := \frac{1}{N} \sum_{i=1}^N Z_i}$.
Let a small tolerance ${\epsilon \geq 0}$ and small probability ${ 0 \leq \mu \leq 1 }$ be given.
If
\begin{align*}
    N \geq \frac{2\min(d_1, d_2)}{\epsilon^2} \left( \sigma_Z^2 + \frac{R_Z \epsilon}{3\sqrt{\min(d_1, d_2)}}\right) \log \left[ \frac{d_1+d_2}{\mu} \right]
\end{align*}
then
\begin{align}
    \mathbb{P} \left[ \left\| \widehat{Z} - Z \right\|_F \leq \epsilon \right] \geq 1-\mu .
\end{align}
\end{Lem}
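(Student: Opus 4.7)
The plan is to reduce the claim to the spectral-norm tail bound of Lemma~\ref{eq:matrix_bernstein_sample_mean_variant} via the elementary inequality $\|M\|_F \leq \sqrt{\min(d_1,d_2)}\,\|M\|$, and then invert the resulting concentration inequality for the prescribed failure probability $\mu$.

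First I would set $d_{\min} \Let \min(d_1,d_2)$ and observe that, because $\widehat Z - Z$ has at most $d_{\min}$ nonzero singular values, the event $\{\|\widehat Z - Z\|_F \leq \epsilon\}$ is implied by $\{\|\widehat Z - Z\| \leq \epsilon/\sqrt{d_{\min}}\}$. It therefore suffices to bound the probability that the spectral-norm deviation exceeds the tightened tolerance $\epsilon' \Let \epsilon/\sqrt{d_{\min}}$.

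Next I would invoke the sample-mean matrix Bernstein inequality (Lemma~\ref{eq:matrix_bernstein_sample_mean_variant}) at tolerance $\epsilon'$, which yields
\begin{align*}
\mathbb{P}\!\left[\|\widehat Z - Z\| \geq \epsilon'\right] \leq (d_1+d_2)\exp\!\left(-\frac{3}{2}\cdot\frac{(\epsilon')^2 N}{3\sigma_Z^2 + R_Z \epsilon'}\right),
\end{align*}
and require the right-hand side to be at most~$\mu$. Taking logarithms and solving the resulting linear inequality for~$N$ produces a sufficient condition of the form $N \geq \bigl(2/(\epsilon')^2\bigr)\bigl(\sigma_Z^2 + R_Z \epsilon'/3\bigr)\log\bigl((d_1+d_2)/\mu\bigr)$. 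Substituting $\epsilon' = \epsilon/\sqrt{d_{\min}}$ then reproduces exactly the hypothesized lower bound on~$N$, so the complementary event has probability at least $1-\mu$, as claimed.

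Conceptually there is no real obstacle here: the argument is a mechanical translation from a spectral-norm tail bound to a Frobenius-norm high-probability statement, combined with a straightforward algebraic inversion of the Bernstein exponent. The only item requiring any care is using the sharp Frobenius-to-spectral conversion factor $\sqrt{d_{\min}}$ (rather than the loose $\sqrt{d_1 d_2}$ or $\sqrt{\max(d_1,d_2)}$), since this is precisely what calibrates the multiplicative factor $d_{\min}$ outside the parenthesis and the $\sqrt{d_{\min}}$ appearing in the $R_Z$ term inside the parenthesis in the stated sample-size requirement.
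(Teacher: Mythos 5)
Your proposal is correct and follows exactly the route the paper takes: the paper's proof is the one-line remark that the lemma ``follows readily from Lemma~\ref{eq:matrix_bernstein_sample_mean_variant} by rearrangement and the bound $\|M\|_F \leq \sqrt{\min(d_1,d_2)}\,\|M\|$,'' and your write-up simply makes that rearrangement explicit. The algebraic inversion of the Bernstein exponent at the tightened tolerance $\epsilon/\sqrt{\min(d_1,d_2)}$ reproduces the stated sample-size bound exactly, so nothing is missing.
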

\begin{proof}
    The lemma follows readily from the matrix Bernstein inequality in Lemma \ref{eq:matrix_bernstein_sample_mean_variant} by rearrangement and the bound $\| M \|_F \leq \sqrt{\min(d_1, d_2)} \| M \| $.
\end{proof}

\begin{Lem}[Estimating $\nabla C(K)$ with finitely many infinite-horizon rollouts] \label{lemma:est_grad_inf}
Given an arbitrary tolerance $\epsilon$ and probability $\mu$ suppose the exploration radius $r$ is chosen as 
\begin{align}
    r \leq h_r \left(\frac{\epsilon}{2} \right) \Let \min \left\{h_\Delta(K) , \frac{1}{h_{\text{cost}}(K)},\frac{\epsilon}{2h_{\text{grad}}(K)} \right\}
\end{align}
and the number of samples $n_{\text{sample}}$ of $U_{i} \sim \mathbb{S}_{r}$ is chosen as 
\begin{align*}
    n_{\text{sample}} \geq h_{\text{sample}} \left( \frac{\epsilon}{2}, \mu \right) 
    & \Let \frac{8 \min(m,n)}{\epsilon^2} \bigg( \! \sigma^2_{\widehat{\nabla}} + \frac{R_{\widehat{\nabla}} \epsilon}{6 \sqrt{\min(m, n)}} \bigg) \log \! \left[ \frac{m+n}{\mu} \right] \! , \\
    R_{\widehat{\nabla}} &:= \frac{2mn C(K)}{r} + \frac{\epsilon}{2} + h_1(K) , \\
    \sigma^2_{\widehat{\nabla}} &:= \left(\frac{2 m n C(K)}{r}\right)^2  + \left(\frac{\epsilon}{2} + h_1(K) \right)^2 .
\end{align*}
Then with high probability of at least $1-\mu$ the estimate
\begin{align}
    \hat{\nabla} C(K)=\frac{1}{n_{\text{sample}}} \sum_{i=1}^{n_{\text{sample}}} \frac{mn}{r^{2}} C\left(K+U_{i}\right) U_{i}
\end{align}
satisfies the error bound
\begin{align}
    \|\hat{\nabla} C(K) - \nabla C(K) \|_F \leq \epsilon .
\end{align}
\end{Lem}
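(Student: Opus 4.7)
The plan is to decompose the total estimation error via the triangle inequality into a \emph{smoothing bias} $\|\nabla C_r(K) - \nabla C(K)\|_F$ and a \emph{sampling error} $\|\hat{\nabla} C(K) - \nabla C_r(K)\|_F$, and to bound each by $\epsilon/2$ under the stated restrictions on $r$ and $n_{\text{sample}}$ respectively. Here $\nabla C_r(K)$ denotes the gradient of the Frobenius-ball-smoothed cost $C_r(K) \Let \mathbb{E}_{U \sim \mathbb{B}_r}[C(K+U)]$, which by Lemma \ref{lemma:zeroth_order_optimization} satisfies $\nabla C_r(K) = \tfrac{mn}{r^2} \mathbb{E}_{U \sim \mathbb{S}_r}[C(K+U) U]$, and is therefore exactly the common expectation of the i.i.d.\ sample matrices $Z_i \Let \tfrac{mn}{r^2} C(K+U_i) U_i$ used by Algorithm \ref{algorithm:algo1}.

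For the bias I would interchange gradient and expectation to write $\nabla C_r(K) - \nabla C(K) = \mathbb{E}_{U \sim \mathbb{B}_r}[\nabla C(K+U) - \nabla C(K)]$ and apply Jensen's inequality. Since $\|U\| \leq \|U\|_F \leq r \leq h_\Delta(K)$ for every $U$ in the support of $\mathbb{B}_r$, Lemma \ref{lemma:nabla_K_C_K_perturbation} fires and yields $\|\nabla C(K+U) - \nabla C(K)\|_F \leq h_{\text{grad}}(K)\|U\|_F \leq h_{\text{grad}}(K) r$. The further restriction $r \leq \epsilon/(2 h_{\text{grad}}(K))$ then gives the $\epsilon/2$ bound on the smoothing bias.

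For the sampling error, the plan is to apply the rephrased matrix Bernstein inequality to the zero-mean i.i.d.\ matrices $Z_i - \nabla C_r(K)$. The uniform operator-norm bound $R_{\widehat{\nabla}}$ is obtained by combining $\|U_i\| \leq \|U_i\|_F = r$ with the cost perturbation Lemma \ref{lemma:C_K_perturbation}: the restriction $r \leq 1/h_{\text{cost}}(K)$ (and $r \leq h_\Delta(K)$ so the lemma applies) forces $C(K+U_i) \leq 2 C(K)$, whence $\|Z_i\| \leq 2mn C(K)/r$. Subtracting $\nabla C_r(K)$, whose norm is in turn bounded by $\|\nabla C(K)\| + \epsilon/2 \leq h_1(K) + \epsilon/2$ via Lemma \ref{lemma:gradient_gain_bounds} and the bias estimate just proved, recovers exactly $R_{\widehat{\nabla}}$. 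For the variance proxy I would use $\|\mathbb{E}[Z_i Z_i^\intercal]\| \leq \mathbb{E}\|Z_i\|^2$ and $\|\mathbb{E}[Z_i]\mathbb{E}[Z_i]^\intercal\| = \|\mathbb{E}[Z_i]\|^2$ together with the parallelogram-style bound $\|\mathbb{E}[Z_i Z_i^\intercal] - \mathbb{E}[Z_i]\mathbb{E}[Z_i]^\intercal\| \leq \|\mathbb{E}[Z_i Z_i^\intercal]\| + \|\mathbb{E}[Z_i]\|^2$, producing the stated $\sigma^2_{\widehat{\nabla}}$. Plugging $R_{\widehat{\nabla}}$, $\sigma^2_{\widehat{\nabla}}$, tolerance $\epsilon/2$, and failure probability $\mu$ into the rephrased Bernstein bound yields precisely the threshold $n_{\text{sample}} \geq h_{\text{sample}}(\epsilon/2, \mu)$ after which $\|\hat{\nabla} C(K) - \nabla C_r(K)\|_F \leq \epsilon/2$ with probability at least $1-\mu$; summing with the deterministic bias bound closes the proof.

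The main obstacle, I expect, is not any single inequality but the careful bookkeeping around the three simultaneous constraints on $r$: the mean-square-stability radius $h_\Delta(K)$ that keeps the perturbation lemmas valid (and in particular keeps $C(K+U_i) < \infty$ so the samples and the swap of gradient with expectation over $\mathbb{B}_r$ are even well-defined), the ``cost-doubling'' radius $1/h_{\text{cost}}(K)$ needed for the deterministic almost-sure bound on $\|Z_i\|$ that Bernstein requires, and the smoothness-based radius $\epsilon/(2 h_{\text{grad}}(K))$ needed for the bias. The order of invocation matters because the Bernstein bound on $\|\nabla C_r(K)\|$ reuses the bias estimate, and the Frobenius-versus-spectral-norm interplay must be tracked throughout so that the factor $\sqrt{\min(m,n)}$ in the rephrased Bernstein inequality lines up with the constants appearing in $h_{\text{sample}}$.
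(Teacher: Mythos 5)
Your proposal is correct and follows essentially the same route as the paper's proof: the same bias-plus-sampling-error decomposition with each piece bounded by $\epsilon/2$, the same use of Lemmas \ref{lemma:C_K_perturbation}, \ref{lemma:nabla_K_C_K_perturbation}, \ref{lemma:gradient_gain_bounds}, and \ref{lemma:zeroth_order_optimization} under the three radius constraints, and the same matrix Bernstein bookkeeping yielding $R_{\widehat{\nabla}}$ and $\sigma^2_{\widehat{\nabla}}$. No gaps.
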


\begin{proof}
First note that $\|K^\prime - K\|_F=\|\Delta\|_F=\|U\|_F=r$.
We break the difference between estimated and true gradient $\hat{\nabla} C(K) - \nabla C(K)$ into two terms as
\begin{align} \label{eq:int3}
    \big(\nabla C_{r}(K)-\nabla C(K)\big)+\big(\hat{\nabla} C(K) -\nabla C_{r}(K)\big) .
\end{align}
Since $r \leq h_\Delta(K)$ we see that Lemmas \ref{lemma:C_K_perturbation} and \ref{lemma:nabla_K_C_K_perturbation} hold.
By enforcing the bound $r \leq \frac{1}{h_{\text{cost}}(K)}$, by Lemma \ref{lemma:C_K_perturbation} and noting that $\|\Delta\| \leq \|\Delta\|_F$ we have
\begin{align} 
    |C(K \! +\! U)-C(K)| \leq C(K)  \rightarrow  C(K \! +\! U) \leq 2C(K) . \phantom{\hspace{5mm}} \label{eq:cost_bound_2}
\end{align}
This ensures stability of the system under the perturbed gains so that $C(K+U)$ is well-defined.
For the first term $\nabla C_{r}(K)-\nabla C(K)$, by enforcing $r \leq \frac{\epsilon}{2h_{\text{grad}}(K)}$, by Lemma \ref{lemma:nabla_K_C_K_perturbation} we have 
\begin{align} 
    \|\nabla C({K+U})-\nabla C({K})\|_F \leq \frac{\epsilon}{2} .
\end{align}   
Since $\nabla C_{r}(K)$ is the expectation of $\nabla C({K+U})$, by the triangle inequality we have
\begin{align} \label{eq:est_grad_inf_first_term}
    \|\nabla C_{r}(K)-\nabla C({K})\|_F \leq \frac{\epsilon}{2} .
\end{align}  
For the second term $\hat{\nabla} C(K) - \nabla C_{r}(K)$, we work towards using the matrix Bernstein inequality and adopt the notation of the associated lemma.
First note that by Lemma \ref{lemma:zeroth_order_optimization} we have $Z := \nabla C_{r}(K) = \mathbb{E}[\hat{\nabla} C(K)]$.
Each individual sample $Z_i := \left(\frac{mn}{r^2}\right) C(K+U_i) U_i$ has the bounded Frobenius norm
\begin{align*}
    \| Z_i \|_F = \left\| \left(\frac{mn}{r^2}\right) C(K+U_i) U_i \right\|_F = \frac{mn C(K+U_i) \|U_i\|_F}{r^2} 
    = \frac{mn C(K+U_i) r}{r^2} = \frac{mn C(K+U_i)}{r} \leq \frac{2mn C(K)}{r} .
\end{align*}
Next, from \eqref{eq:est_grad_inf_first_term} and Lemma \ref{lemma:gradient_gain_bounds} we have
\begin{align}
    \| Z \|_F = \|\nabla C_{r}(K)\|_F 
    & \leq \frac{\epsilon}{2} + \|\nabla C({K})\|_F \\
    & \leq \frac{\epsilon}{2} + h_1(K)
\end{align}
so by the triangle inequality each sample difference has the bounded Frobenius norm
\begin{align*}
    \|Z_i - Z\|_F
    & \leq \| Z_i \|_F + \|Z\|_F \\
    & \leq \frac{2mn C(K)}{r} + \frac{\epsilon}{2} + h_1(K) .
\end{align*}
Using \eqref{eq:cost_bound_2} and $\|U_i\|_F \leq r$, the variance of the differences is likewise bounded as
\begin{align}
    \left\|  \mathbb{E} (Z_i Z_i^\intercal)  - Z Z^\intercal \right\| 
    & \leq \left\|  \mathbb{E} (Z_i Z_i^\intercal) \right\|_F + \| Z Z^\intercal \|_F \\
    & \leq \max_{Z_i} \left( \|  Z_i \|_F \right)^2 + \| Z \|_F^2 \\
    & \leq \left(\frac{2 mn C(K)}{r}\right)^2 + \left(\frac{\epsilon}{2} + h_1(K) \right)^2
\end{align}
An identical argument holds for $\left\|  \mathbb{E} (Z_i^\intercal Z_i)  - Z^\intercal Z \right\| $ so the assumed choice of $\sigma^2_{\widehat{\nabla}}$ is valid.
Thus, using the assumed number of samples $n_{\text{sample}} \geq h_{\text{sample}}$ satisfies the condition of the matrix Bernstein inequality, and thus with high probability of at least $1-\mu$ we have
\begin{align*}
    \|\hat{\nabla} C(K) - \mathbb{E}[\hat{\nabla} C(K)]\|_F
    = \| \hat{\nabla} C(K) -\nabla C_{r}(K)\|_F \leq \frac{\epsilon}{2} .
\end{align*}
Adding the bounds on the two terms in \eqref{eq:int3} and using the triangle inequality completes the proof.
\end{proof}

\begin{Lem}[Estimating $\nabla_K C(K)$ with finitely many finite-horizon rollouts] \label{lemma:est_grad_fin}
Given an arbitrary tolerance $\epsilon$ and probability $\mu$, suppose the exploration radius $r$ is chosen as 
\begin{align}
    r \leq h_{r}\left(\frac{\epsilon}{4}\right) = \min \left\{h_\Delta(K), \frac{1}{h_{\text{cost}}(K)},\frac{\epsilon}{4h_{\text{grad}}(K)} \right\}
\end{align}
and the rollout length $\ell$ is chosen as
\begin{align}
    \ell \geq h_{\ell}\left(\frac{ r \epsilon }{ 4 m n }\right) = \frac{ 4 m n^2 C^2(K) \left(\|Q\|+\|R\|\|K\|^{2}\right)}{r \epsilon {\underline{\sigma} (\Sigma_0)} \underline{\sigma}^{2}(Q)} .
\end{align}
Suppose that the distribution of the initial states is such that $x_0 \sim \mathcal{P}_0$ implies $\|x_0^i\| \leq L_0$ almost surely for any given realization $x_0^i$ of $x_0$. 
Suppose additionally that the multiplicative noises are distributed such that the following bound is satisfied almost surely under the closed-loop dynamics with any gain $K+U_i$ where $\|U_i\| \leq r$ for any given realized sequence $x_t^i$ of $x_t$ with a positive scalar $z \geq 1$
\begin{align*} 
    \sum_{t=0}^{\ell-1} \Big( {x_{t}^{i}}^\intercal Q x_{t}^{i}+{u_{t}^{i}}^\intercal R u_{t}^{i} \Big)  \leq z \underset{\delta, \gamma}{\mathbb{E}} \left[ \sum_{t=0}^{\ell-1} \Big( x_t^\intercal Q x_t + u_t^\intercal R u_t \Big) \right] .
\end{align*}
Suppose the number $n_{\text{sample}}$ of $U_{i} \sim \mathbb{S}_{r}$ is chosen as
\begin{align*}
    n_{\text{sample}} \geq h_{\text{sample,trunc}} \left( \frac{\epsilon}{4}, \mu, \frac{L_0^2}{\underline{\sigma} (\Sigma_0)}, z \right)
    & \Let \frac{32 \min(m,n)}{\epsilon^2} \bigg( \sigma_\nabla^2 + \frac{R_{\widehat{\nabla}} \epsilon}{12 \sqrt{\min(m,n)}}\bigg) \log \left[ \frac{m+n}{\mu} \right] \\
    \text{where} \quad
    R_{\widetilde{\nabla}} & := \frac{2 m n z L_0^2 C(K)}{r \underline{\sigma}(\Sigma_0)} + \frac{\epsilon}{2} + h_1(K)  \\
    \sigma^2_{\widetilde{\nabla}} & := \left(\frac{2 m n z L_0^2 C(K)}{r \underline{\sigma}(\Sigma_0)} \right)^2  + \left(\frac{\epsilon}{2} + h_1(K) \right)^2
\end{align*}
The finite-horizon estimate of the cost is defined as
\begin{align}
    \hat{C}\left(K+U_{i}\right) \Let \sum_{t=0}^{\ell-1} \Big( {x_{t}^{i}}^\intercal Q x_{t}^{i}+{u_{t}^{i}}^\intercal R u_{t}^{i} \Big)
\end{align}
under the closed loop dynamics with gain $K+U_i$.
Then with high probability of at least $1-\mu$ the estimated gradient
\begin{align}
    \widetilde{\nabla} C(K) \Let \frac{1}{n_{\text{sample}}} \sum_{i=1}^{n_{\text{sample}}} \frac{mn}{r^{2}} \hat{C}\left(K+U_{i}\right) U_{i}
\end{align}
satisfies the error bound
\begin{align}
    \|\widetilde{\nabla} C(K) - \nabla C(K) \|_F \leq \epsilon .
\end{align}
\end{Lem}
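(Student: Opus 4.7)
My plan is to decompose the estimation error into three natural pieces via the triangle inequality,
\begin{equation*}
\widetilde{\nabla} C(K) - \nabla C(K) = \underbrace{\big(\widetilde{\nabla} C(K) - \mathbb{E}[\widetilde{\nabla} C(K)]\big)}_{\text{sampling}} + \underbrace{\big(\mathbb{E}[\widetilde{\nabla} C(K)] - \nabla C_r(K)\big)}_{\text{truncation}} + \underbrace{\big(\nabla C_r(K) - \nabla C(K)\big)}_{\text{smoothing}},
\end{equation*}
where expectation is taken jointly over $U_i,x_0^i,\{\delta_{ti}\},\{\gamma_{tj}\}$ and $\nabla C_r$ is the Frobenius-ball smoothed gradient from Lemma \ref{lemma:zeroth_order_optimization}. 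I will budget $\epsilon/4$ to each piece (using the remaining $\epsilon/4$ of slack), matching the factors visible in the stated hypotheses on $r$, $\ell$, and $n_{\text{sample}}$.

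The smoothing term is handled exactly as in Lemma \ref{lemma:est_grad_inf}: the choice $r \leq h_r(\epsilon/4)$ combined with Lemma \ref{lemma:nabla_K_C_K_perturbation} gives $\|\nabla C_r(K)-\nabla C(K)\|_F \leq \epsilon/4$, and the same $r$ enforces $C(K+U) \leq 2C(K)$ uniformly via Lemma \ref{lemma:C_K_perturbation}, a fact I use repeatedly below. For the truncation term, independence of $U_i$ from $(x_0^i,\delta,\gamma)$ gives $\mathbb{E}[\widetilde{\nabla} C(K)] = \mathbb{E}_U[(mn/r^2)\, C^{(\ell)}(K+U)\,U]$, where $C^{(\ell)}$ is the finite-horizon expected cost of Lemma \ref{lemma:finite_horizon}. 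Subtracting $\nabla C_r(K) = \mathbb{E}_U[(mn/r^2)\, C(K+U)\, U]$, using $\|U\|_F = r$, and invoking Lemma \ref{lemma:finite_horizon} at $K+U$ (which is valid because $\ell \geq h_\ell(r\epsilon/(4mn))$ with $C(K+U) \leq 2C(K)$ and $\|Q_{K+U}\|$ bounded uniformly for $\|U\|_F \leq r$), the truncation piece is at most $(mn/r)\cdot \max_U |C^{(\ell)}(K+U)-C(K+U)| \leq \epsilon/4$.

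For the sampling term I apply the rephrased matrix Bernstein inequality to the i.i.d.\ matrices $Z_i \Let (mn/r^2) \hat{C}(K+U_i) U_i$. The crucial new ingredient relative to Lemma \ref{lemma:est_grad_inf} is an almost-sure bound on $\|Z_i\|_F$, which is where both $L_0$ and the constant $z$ enter. Conditional on the realized $U_i$ and $x_0^i$, the $z$-hypothesis gives
\begin{equation*}
\hat{C}(K+U_i) \leq z\,\mathbb{E}_{\delta,\gamma}\!\left[\sum_{t=0}^{\ell-1}\big(x_t^\intercal Q x_t + u_t^\intercal R u_t\big)\right] = z\,(x_0^i)^\intercal P_{K+U_i}^{(\ell)} x_0^i \leq z L_0^2 \|P_{K+U_i}\| \leq \frac{2 z L_0^2 C(K)}{\underline{\sigma}(\Sigma_0)},
\end{equation*}
by $\|x_0^i\|\leq L_0$, $P_{K+U_i}^{(\ell)} \preceq P_{K+U_i}$, Lemma \ref{lemma:cost_bounds}, and $C(K+U_i) \leq 2 C(K)$. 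This yields $\|Z_i\|_F \leq 2mn z L_0^2 C(K)/(r\underline{\sigma}(\Sigma_0))$. Combining with $\|\mathbb{E}[Z_i]\|_F \leq \|\nabla C(K)\|_F + \epsilon/2 \leq h_1(K)+\epsilon/2$ (triangle inequality plus the already-proven smoothing and truncation bounds, together with Lemma \ref{lemma:gradient_gain_bounds}) gives $\|Z_i-\mathbb{E}[Z_i]\|_F \leq R_{\widetilde{\nabla}}$ and analogously the variance bound $\sigma^2_{\widetilde{\nabla}}$. Solving the Bernstein tail inequality for tolerance $\epsilon/4$ at confidence $1-\mu$ recovers exactly the stated $n_{\text{sample}} \geq h_{\text{sample,trunc}}(\epsilon/4,\mu,L_0^2/\underline{\sigma}(\Sigma_0),z)$, completing the chain $\epsilon/4 + \epsilon/4 + \epsilon/4 < \epsilon$.

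The hard part is precisely this almost-sure bound on the empirical cost $\hat{C}(K+U_i)$: in the infinite-horizon variant the analogous bound $C(K+U_i)\leq 2 C(K)$ is a deterministic population quantity, whereas $\hat{C}(K+U_i)$ is a random realization that admits no deterministic upper bound under generic noise laws. This is exactly why the boundedness-in-expectation hypothesis parametrized by $z$ and the initial-state magnitude bound $L_0$ must be imposed; they are the novel restrictions, absent in the deterministic-dynamics setting of \cite{Fazel2018}, that translate the random truncated cost into a deterministic scalar usable by Bernstein.
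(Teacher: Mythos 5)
Your proof is correct and follows essentially the same route as the paper's: the same three error sources (smoothing via Lemma \ref{lemma:nabla_K_C_K_perturbation}, finite-horizon truncation via Lemma \ref{lemma:finite_horizon}, and Monte Carlo sampling via the matrix Bernstein inequality), the same uniform bound $C(K+U)\leq 2C(K)$ from Lemma \ref{lemma:C_K_perturbation}, and the same key almost-sure bound $\hat{C}(K+U_i)\leq 2 z L_0^2 C(K)/\underline{\sigma}(\Sigma_0)$ obtained from the $z$- and $L_0$-hypotheses together with Lemma \ref{lemma:cost_bounds}. The only structural difference is that you center the single Bernstein application at the full expectation $\mathbb{E}[\widetilde{\nabla} C(K)]$ over the joint randomness, whereas the paper centers at the conditional mean $\nabla^{\prime} C(K)$ and reuses the separate Bernstein bound inside Lemma \ref{lemma:est_grad_inf} for the $U$-sampling error; this is a bookkeeping variation (yours needs only one high-probability event) that yields the same constants.
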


\begin{proof}
Similar to before, we break the difference between estimated and true gradient into three terms as
\begin{align*} 
     \widetilde{\nabla} C(K) - \nabla C(K)  
     &= 
     ( \widetilde{\nabla} - \nabla^{\prime} )
     + ( \nabla^{\prime} - \hat{\nabla} ) 
     + (\hat{\nabla} - \nabla ) \\
    \text{where} \quad
    \nabla^{\prime} C(K) &= \frac{1}{n_{\text{sample}}} \sum_{i=1}^{n_{\text{sample}}} \frac{mn}{r^{2}} C^{(\ell)}\left(K+U_{i}\right) U_{i}
\end{align*}
and $\hat{\nabla} C(K)$ is defined as in Lemma \ref{lemma:est_grad_inf}.
The third term is handled by Lemma \ref{lemma:est_grad_inf}. 
Notice that since $\|x_0^i\| \leq L_0$ we have
$\underline{\sigma}(\Sigma_0) \leq \overline{\sigma}(\Sigma_0) \leq L_0^2$ so $\frac{L_0^2}{\underline{\sigma}(\Sigma_0)} \geq 1$. 
Similarly $z \geq 1$, and thus 
\begin{align}
    h_{\text{sample,trunc}} \left( \frac{\epsilon}{4}, \mu, \frac{L_0^2}{\underline{\sigma} (\Sigma_0)}, z \right) 
    \geq 
    h_{\text{sample}} \left( \frac{\epsilon}{4}, \mu \right) .
\end{align}
Thus the choice of $r$ and $n_{\text{sample}}$ satisfy the conditions of Lemma \ref{lemma:est_grad_inf}, so with high probability of at least $1-\mu$ 
\begin{align}
    \|\hat{\nabla} C(K)-\nabla C(K)\|_F \leq \frac{\epsilon}{4} \label{eq:est_grad_fin_first_term}.
\end{align}
For the second term, by using the choices $\ell \geq h_{\ell}\left(\frac{ r \epsilon }{ 4 m n }\right)$ and $C(K+U_i) \leq 2C(K)$, Lemma \ref{lemma:finite_horizon} holds and implies that
\begin{align}
    \left\|C^{(\ell)}\left(K+U_i \right)-C\left(K+U_i \right)\right\|_F \leq \frac{r \epsilon}{4 mn} .
\end{align}
By the triangle inequality, submultiplicativity, and $\| U_i \|_F \leq r$
\begin{align} 
    \left\|\frac{1}{n_{\text{sample}}} \sum_{i=1}^{n_{\text{sample}}} \frac{mn}{r^{2}} \left[ C^{(\ell)}\left(K+U_{i}\right) - C\left(K+U_{i}\right) \right]  U_{i} \right\|_F 
    = \| \nabla^{\prime}_K C(K)-\hat{\nabla} C(K) \|_F \leq \frac{\epsilon}{4} . \label{eq:est_grad_fin_second_term} 
\end{align}
For the first term, $\|x_0^i\| \leq L_0$ implies
\begin{align}
    \frac{L_0^2}{\underline{\sigma}(\Sigma_0)}\Sigma_0  \succeq x_0^i {x_0^i}^\intercal .
\end{align}
Applying this to the cost, summing over time and using the assumed restriction on the multiplicative noise we have
\begin{align}
    \frac{2 z L_0^2 C(K)}{\underline{\sigma}(\Sigma_0)} 
    &\geq \frac{z L_0^2}{\underline{\sigma}(\Sigma_0)} C(K+U_i) \\
    &\geq z \underset{\delta, \gamma}{\mathbb{E}} \left[ \sum_{t=0}^{\infty} \Big( {x_{t}^{i}}^\intercal Q x_{t}^{i}+{u_{t}^{i}}^\intercal R u_{t}^{i} \Big) \right] \\
    &\geq \sum_{t=0}^{\ell-1} \Big( {x_{t}^{i}}^\intercal Q x_{t}^{i}+{u_{t}^{i}}^\intercal R u_{t}^{i} \Big) .
\end{align}
Using this and an argument identical to Lemma \ref{lemma:est_grad_inf}, each sample $Z_i := \left(\frac{mn}{r^2}\right) \hat{C}(K+U_i) U_i$ has bounded Frobenius norm
\begin{align}
    & \| Z_i \|_F = \left\| \left(\frac{mn}{r^2}\right) \hat{C}(K+U_i) U_i \right\|_F \leq
    \frac{2 m n z L_0^2 C(K)}{r \underline{\sigma}(\Sigma_0)} .
\end{align}
By \eqref{eq:est_grad_fin_first_term} and \eqref{eq:est_grad_fin_second_term} we have for $Z := \mathbb{E} [\widetilde{\nabla} C(K)] = \nabla^{\prime}_K C(K)$
\begin{align}
    \| Z \|_F  = \| \nabla^{\prime}_K C(K) \|_F 
    &\leq \frac{\epsilon}{4} + \| \hat{\nabla} C(K) \|_F \\
    &\leq \frac{\epsilon}{4} + \frac{\epsilon}{4} + \| \nabla_K C(K) \|_F \\
    &\leq \frac{\epsilon}{2} + h_1(K)
\end{align}
Using arguments identical to Lemma \ref{lemma:est_grad_inf} we obtain the bounds on the sample difference $R_{\widetilde{\nabla}} = \| Z_i - Z\|_F$ and variance $\sigma_{\widetilde{\nabla}}^2$ given in the assumption.
Thus the polynomial $h_{\text{sample,trunc}}$ is large enough so the matrix Bernstein inequality implies
\begin{align}
    \|\widetilde{\nabla} C(K) - \nabla^{\prime}_K C(K)\|_F \leq \frac{\epsilon}{4}
\end{align}
with high probability $1-\mu$. Adding the three terms together and using the triangle inequality completes the proof.
\end{proof}

We now give the parameters and proof of high-probability global convergence in Theorem \ref{thm:model-free}.
\begin{Thm}[Model-free policy gradient convergence] \label{thm:model-free_restated}
    Consider the assumptions and notations of Theorem \ref{thm:model-free} where the number of samples $n_{\text{sample}}$, rollout length $\ell$, and exploration radius $r$ are chosen according to the fixed quantities
    \begin{align*}
        r \geq h_{r,\text{GD}} & \Let  h_{r} \left(\frac{\epsilon^\prime}{4} \right) , \\
        \ell \geq h_{\ell,\text{GD}} & \Let h_{\ell}\left(\frac{ r \epsilon^\prime }{ 4 m n }\right) , \\
        n_{\text{sample}} \geq h_{\text{sample,GD}} & \Let h_{\text{sample,trunc}} \left( \frac{\epsilon^\prime}{4}, \frac{\mu}{N}, \frac{L_0^2}{\underline{\sigma} (\Sigma_0)}, z \right) , \\
        \text{where } \qquad \epsilon^\prime     & \Let \min \left\{ \frac{\underline{\sigma}(\Sigma_0)^{2} \underline{\sigma}(R) }{ \left\|\Sigma_{K^{*}} \right\| C(K_0) \overline{h_{\text{cost}}} }\cdot \epsilon, \frac{ \overline{h_\Delta} }{\eta} \right\} , \\
        \overline{h_{\text{cost}}} & \Let \max_K \ h_{\text{cost}}(K) \text{ subject to } C(K) \leq 2 C(K_0) \\
        \overline{h_\Delta} & \Let \min_K \ h_\Delta(K) \text{ subject to } C(K) \leq 2 C(K_0)
    \end{align*}
    Then the claim of Theorem \ref{thm:model-free} holds.
\end{Thm}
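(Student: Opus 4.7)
The overall plan is to couple the deterministic one-step progress bound from Lemma \ref{lemma:grad_exact_one_step} with the gradient estimation guarantee from Lemma \ref{lemma:est_grad_fin}, and then propagate the resulting approximate-descent inequality inductively over the $N$ iterations, with a union bound controlling the total failure probability. Concretely, I would proceed in four steps.

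First, I would set up an inductive hypothesis of the form ``at iterate $s$ we have $C(K_s) \leq 2 C(K_0)$,'' which is what is needed for the uniform quantities $\overline{h_{\text{cost}}}$, $\overline{h_\Delta}$, $\overline{h_1}$, $\overline{h_2}$, $\overline{\|R_K\|}$ to control the problem-dependent polynomials $h_{\text{cost}}(K_s)$, $h_\Delta(K_s)$, etc., and therefore for the choices $r \leq h_{r,\text{GD}}$, $\ell \geq h_{\ell,\text{GD}}$, $n_{\text{sample}} \geq h_{\text{sample,GD}}$ to satisfy the hypotheses of Lemma \ref{lemma:est_grad_fin} at \emph{every} iterate on the good event. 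Applying Lemma \ref{lemma:est_grad_fin} at step $s$ with tolerance $\epsilon^\prime$ and failure probability $\mu/N$, and taking a union bound over $s = 0, 1, \ldots, N-1$, yields a good event $\mathcal{E}$ of probability at least $1 - \mu$ on which
\begin{align}
  \| \widetilde{\nabla} C(K_s) - \nabla C(K_s) \|_F \leq \epsilon^\prime \quad \text{for every } s .
\end{align}
All subsequent reasoning takes place on $\mathcal{E}$.

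Second, I would refine the one-step progress argument of Lemma \ref{lemma:grad_exact_one_step} to a perturbed gradient step $K_{s+1} = K_s - \eta \widetilde{\nabla} C(K_s)$. Writing the update as $K_{s+1} = K_s - \eta \nabla C(K_s) + \eta \big( \nabla C(K_s) - \widetilde{\nabla} C(K_s) \big)$, I would plug $\Delta = K_{s+1} - K_s$ into the almost-smoothness expression (Lemma \ref{lemma:almost_smooth}), expand, and use the error bound $\|\nabla C(K_s) - \widetilde{\nabla} C(K_s) \|_F \leq \epsilon^\prime$ together with the bound $\eta \epsilon^\prime \leq \overline{h_\Delta}$ (which is precisely why $\epsilon^\prime$ is capped by $\overline{h_\Delta}/\eta$) to preserve the applicability of Lemma \ref{lemma:Sigma_K_perturbation} and hence the norm bound on $\Sigma_{K_{s+1}}$ that appears in Lemma \ref{lemma:grad_exact_one_step}. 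The cross terms generated by the perturbation will be absorbed into a term of order $\epsilon^\prime \cdot \| \nabla C(K_s)\|$, which by gradient domination (Lemma \ref{lemma:gradient_dominated}) can be re-expressed as a fraction of $C(K_s) - C(K^*)$. The concrete choice of $\epsilon^\prime$ in the theorem, namely proportional to $\underline{\sigma}(\Sigma_0)^2 \underline{\sigma}(R)/(\|\Sigma_{K^*}\| C(K_0) \overline{h_{\text{cost}}})$ and at most $\epsilon$, is calibrated exactly so that this cross term consumes no more than half of the linear-rate coefficient that appears in Theorem \ref{thm:grad_exact_convergence}, yielding the advertised halved rate
\begin{align}
  \frac{C(K_{s+1}) - C(K^*)}{C(K_s) - C(K^*)}
  \leq 1 - \eta \frac{\underline{\sigma}(R)\underline{\sigma}(\Sigma_0)^2}{\|\Sigma_{K^*}\|}
\end{align}
\emph{provided} the sub-optimality gap is still large, i.e.\ $C(K_s) - C(K^*) > \epsilon$.

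Third, I would handle the dichotomy in the theorem statement. If $C(K_s) - C(K^*) \leq \epsilon$ we are done and stop; otherwise $\epsilon^\prime$ is small enough relative to the current gap that the previous step shows strict linear progress. Combined with $C(K_s) \leq 2 C(K_0)$, this also gives $C(K_{s+1}) \leq C(K_s) \leq 2 C(K_0)$, closing the induction. This uses Lemma \ref{lemma:C_K_perturbation} once more to certify that the cost does not overshoot the sublevel set along the perturbed step (via the $\|\Delta\| \leq \overline{h_\Delta}$ restriction enforced by the choice of $\epsilon^\prime$).

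The main obstacle I anticipate is the second step: carefully tracking the error terms generated by the gradient perturbation through the almost-smoothness identity and showing that each of them can be dominated, via gradient domination and the cost bounds of Lemma \ref{lemma:cost_bounds}, by a multiple of $C(K_s) - C(K^*)$ whose coefficient can be absorbed into half of the exact-gradient rate. In particular, one has to argue uniformly (not pointwise at $K_s$) using $\overline{h_{\text{cost}}}$, $\overline{h_\Delta}$, and the implicit bound $\|\Sigma_{K_{s+1}}\| \leq \tfrac{4}{3} C(K_s)/\underline{\sigma}(Q)$ recovered as in Lemma \ref{lemma:grad_exact_one_step}, and verify that the choice of $\epsilon^\prime$ is exactly what is needed to both (i) keep the perturbed iterate within the sublevel set $\{C(K) \leq 2 C(K_0)\}$ and (ii) give the halved linear rate on the suboptimality gap.
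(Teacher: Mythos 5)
Your proposal is correct and follows essentially the same route as the paper, which itself only sketches this argument by deferring to Fazel et al.: a union bound with per-step failure probability $\mu/N$, inductive maintenance of the sublevel set $\{C(K)\leq 2C(K_0)\}$ so that the uniform polynomials $\overline{h_{\text{cost}}}$, $\overline{h_\Delta}$ control the per-iterate quantities, the cap $\epsilon^\prime \leq \overline{h_\Delta}/\eta$ to keep the perturbation lemmas applicable, and absorption of the gradient-estimation error into half of the exact-gradient rate, yielding the stated dichotomy. The only cosmetic difference is that the paper (via Fazel et al.) absorbs the estimation error by comparing the cost at the perturbed iterate to the cost at the exact-gradient iterate through Lemma \ref{lemma:C_K_perturbation} (which is why $\epsilon^\prime$ scales as $1/\overline{h_{\text{cost}}}$), whereas you expand the almost-smoothness identity directly with the perturbed step; both are workable.
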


\begin{proof}
The proof follows \cite{Fazel2018} using the polynomials defined in our theorem. The last part of the proof is the same as in Theorem \ref{thm:grad_exact_convergence}. 
As noted by \cite{Fazel2018}, the monotonic decrease in the function value during gradient descent and the choice of exploration radius $r$ are sufficient to ensure that all cost values encountered throughout the entire algorithm are bounded by $2 C(K_0)$, ensuring that all polynomial quantities used are bounded as well. 
We also require $\epsilon^\prime \leq \frac{ \overline{h_\Delta} }{\eta}$ in order for $ \| \Delta \| = \eta \| \widetilde{\nabla} C(K) - \nabla_K C(K)) \|$ to satisfy the condition of Lemma \ref{lemma:C_K_perturbation}.
\end{proof}

\begin{Rem}
As in Remark \ref{rem:param_bounds}, the quantities $\overline{h_{\text{cost}}}$ and $\overline{h_\Delta}$ may be upper (lower) bounded by quantities that depend on problem data and $C(K_0)$, so a conservative minimum exploration radius $r$, number of rollouts $n_{\text{sample}}$, and rollout length $\ell$ can be computed exactly in terms of problem data.
Looking back across the terms that feed into the step size, number of rollouts, rollout length, and exploration radius, we see $C(K)$, $\|\Sigma_K\|$, $\|P_K\|$, and $\|B\|^2 + \sum_{j=1}^q \beta_j \|B_j\|^2$ are necessarily greater with state- and/or input-dependent multiplicative noise, and thus the algorithmic parameters are worsened by the noise.
\end{Rem}

\bibliographystyle{plain}
\bibliography{main.bib}

\end{document}